\date{}               
\let\DRAFT=1 
\def\ifundefined#1{\expandafter\ifx\csname#1\endcsname\relax}
\def\XBREAK{\ifundefined{DRAFT}~\else\break\fi}
\def\ZBREAK{\ifundefined{DRAFT}\break\else\fi}
\theoremstyle{plain}
\numberwithin{equation}{section}
\newtheorem{LEMMA}{Lemma}[section]
\newtheorem{THEOREM}{Theorem}[section]
\newtheorem{PROBLEM}{Problem}[section]
\newtheorem{COROLLARY}{Corollary}[THEOREM]
\theoremstyle{definition}
\newtheorem{DEFINITION}{Definition}[section]
\newtheorem{HYPOTHESIS}{Hypothesis}[section]
\theoremstyle{remark}
\newtheorem{OBSERVATION}{Observation}[section]
\def\LTWO{L_{2}}
\newcommand{\NORMLTWO}[1]{\ensuremath{ \left\| #1 \right\|_{\LTWO} } }
\newcommand{\NORMLTWOSQ}[1]{\ensuremath{ \left\| #1 \right\|^{2}_{\LTWO} } }  
\newcommand{\PRODLTWO}[1]{\ensuremath{ \left< #1 \right>_{\LTWO} } }
\newcommand{\NORMFROB}[1]{\ensuremath{ \left\| #1 \right\|_{F} } }
\newcommand{\NORMOP}[1]{\ensuremath{ \left\| #1 \right\|_{op} } }
\newcommand{\NORMRPAR}[1]{\ensuremath{ \left\| #1 \right\|_{\REALS^{\DIMPAR} } } }
\newcommand{\NORMRPARSQ}[1]{\ensuremath{ \left\| #1 \right\|^{2}_{\REALS^{\DIMPAR} } } }
\newcommand{\NORMH}[1]{\ensuremath{ \left\| #1 \right\|_{\HILBERT} } }
\newcommand{\PRODH}[1]{\ensuremath{ \left< #1 \right>_{\HILBERT} } }
\def\VECA{ \mathbf{a} }
\def\EXP{\mathbb{E}} 
\def\TRACE{  { \tt Tr } }  
\def\TITAVEC{  \pmb{ \mathbf{\nu}} }
\def\DIMM{d_{M}}   
\def\DIMMOP{d_{M}}  
\def\SOM{\Omega}    
\def\SAF{\mathscr{F}}   
\def\SINDX{\Theta}
\def\COLP{\mathfrak{B}}
\def\INDX{\theta}
\def\INDXTRUE{\INDX_{T}}
\def\SOMP{\mathscr{P}}
\def\SOMPT{\SOMP_{\INDX}}
\def\SOMPTRUE{\SOMP_{\INDXTRUE}}
\def\SOMPTONE{\SOMP_{\INDX_{1}}}
\def\SOMPTWO{\SOMP_{\INDX_{2}}}
\def\SAMP{\pmb{\mathbf{\mathscr{X}}}}
\def\DIMSAMP{d_{S}}
\def\REALS{\mathbb{R}}
\def\NAT{ \mathbb{N} }
\def\RN{\REALS^{\DIMSAMP}}
\def\BN{\mathscr{B}_{\DIMSAMP}}
\def\RPBB{\mathbb{P}}
\def\RPT{\RPBB_{\INDX}}
\def\RPTRUE{\RPBB_{\INDXTRUE}}
\def\PSPACET{ (\RN, \BN, \RPT) }
\def\PSPACETRUE{ (\RN, \BN, \RPTRUE) }
\def\SPACELONE{\LEB_{1} \PSPACET}
\def\SPACELTWOTRUE{\LEB_{2} \PSPACETRUE}
\def\SPACELTWOT{\LEB_{2} \PSPACET}
\def\DIFF{ {\rm d} }
\def\LV{\mathscr{L}}
\def\LVONE{{\LV}_{1}}
\def\LVTWO{{\LV}_{2}}
\def\VSPACEONE{ \LVONE \PSPACET}
\def\VSPACETWOTRUE{ \LVTWO \PSPACETRUE}
\def\PAR{P}
\def\DIMPAR{d_{\PAR}}
\def\RPAR{\REALS^{\DIMPAR}}
\def\BORELS{{\mathscr{B}}}
\def\BPAR{\BORELS_{\DIMPAR}}
\def\FI{\pmb{\mathbf{\varphi}}}
\def\INDXONE{\INDX_{1}}
\def\INDXTWO{\INDX_{2}}
\def\RPTONE{\RPBB_{\INDX_{1}}}
\def\RPTTWO{\RPBB_{\INDX_{2}}}
\def\SOMPTONE{\mathscr{P}_{\INDX_{1}}}
\def\SOMPTTWO{\mathscr{P}_{\INDX_{2}}}
\def\PI{\pi}
\def\PIT{\PI ({\INDX}) }
\def\PITI{\PI ({\INDX_{i}})}
\def\PISUBI{  \PI_{i} }
\def\LPISUBI{ \OPZ ( \PISUBI ) }
\def\PIPJ{\PI^{\prime}_{j}}
\def\LPIPJ{ \OPZ ( \PIPJ ) }
\def\XVAR{{\mathbf{x}}}
\def\FULLPIT{\PI_{\INDX}(\XVAR,\INDXTRUE)}
\def\FULLPITSAMP{\PI_{\INDX}(\SAMP,\INDXTRUE)}
\def\FUNCG{{\bf g}}
\def\FUNCH{{\bf h}}
\def\GT{{\FUNCG}(\INDX)}
\def\GTONE{{\FUNCG}(\INDX_{1})}
\def\GTTWO{{\FUNCG}(\INDX_{2})}
\def\HT{{\FUNCH}(\INDX)}
\def\GTZERO{{\FUNCG}(\INDX_{0})}
\def\HTZERO{{\FUNCH}(\INDX_{0})}
\def\GTRUE{{\FUNCG}(\INDXTRUE)}
\def\HTRUE{{\FUNCH}(\INDXTRUE)}
\def\MLAMBDA{\lambda}
\def\pT{p_{\INDX}}
\def\pTRUE{p_{\INDXTRUE}}
\def\LEB{L}
\def\MEAS1{\mu}
\def\DIMP{s}
\def\DIMQ{r}
\def\SPACELQ{\LEB_{\DIMQ} \PSPACETRUE}
\def\SPACELP{\LEB_{\DIMP} \PSPACETRUE}
\def\THREEBETA{\mathfrak{B}_{0}}
\def\BETABK{\mathfrak{B}}
\def\HILBERT{\mathscr{H}}
\def\OP{L}
\def\OPZ{\OP_{\THREEBETA}}
\def\OPS{\OP_{S}}
\def\OPC{\OP_{C}}
\def\KH{K_{H}}
\def\RPLUS{{\REALS}^{+}}
\def\HTI{\FUNCH ({\INDX_{i}})}
\def\ASUBI{a_{i}}
\def\ASUBK{a_{k}}
\def\APJ{a^{\prime}_{j}}
\def\TSUBI{\INDX_{i}}
\def\NORMGTI{\NORMRPAR { \sum_{i = 1}^{\DIMM} \ASUBI \; \HTI } }
\def\NORMGTISQ{\NORMRPARSQ { \sum_{i = 1}^{\DIMM} \ASUBI \; \HTI } }
\def\NORMPITI{\NORMLTWO { \sum_{i = 1}^{\DIMM} \ASUBI \; \PITI  } }
\def\NORMPITISQ{\NORMLTWOSQ { \sum_{i = 1}^{\DIMM} \ASUBI \; \PITI  } }
\def\PIHI{\widehat{\PI}_{i}}
\def\NORMPIH{ {\parallel \sum_{i = 1}^{\DIML} \ASUBI \; \PIHI \parallel_{\HILBERT}} } 
\def\SLPIH{ \sum_{i = 1}^{\DIML} \ASUBI \; \OPZ (\PIHI) }
\def\NORMLPIH{ \parallel  \SLPIH \parallel_{\RPAR}}
\def\VHI{ \widehat{v}_{i} }
\def\WHI{ \widehat{w}_{i} }
\def\SLVH{ \sum_{i = 1}^{\DIML} \ASUBI \; \OPC (\VHI) }
\def\NORMLVH{ \parallel  \SLVH \parallel_{\RPAR}}
\def\NORMVH{ {\parallel \sum_{i = 1}^{\DIML} \ASUBI \; \VHI \parallel_{\HILBERT}} } 
\def\UI{ u_{i} }
\def\UJ{ u_{j} }
\def\UH{ \widehat{u} }
\def\UHI{ \widehat{u}_{i} }
\def\UHJ{ \widehat{u}_{j} }
\def\UHK{ \widehat{u}_{k} }
\def\SUH{ \sum_{i = 1}^{\DIML} \ASUBI \; \UHI }
\def\SLUH{ \sum_{i = 1}^{\DIML} \ASUBI \; \OPC (\UHI) }
\def\NAT{\mathbb{N}}
\def\KERC{\mathscr{N}_L} 
\def\KERPER{\KERC^{\perp}}
\def\IMAGE{ \mathbb{I}  }
\def\DIML{ d_{L} }
\def\IL{ I_{\OP} }
\def\PROJ{ {\rm Proj}}
\def\ALFAIU{ \alpha_{i}(u) }
\def\ALFAKU{ \alpha_{k}(u) }
\def\GAMAU{ w(u) } 
\def\ALFAIUN{\alpha_{i}(u_{n}) }
\def\GAMAIUN{ w(u_{n}) }  
\def\BETAZI{ a_{i}}   
\def\FIH{ \pmb{ \widehat{ \mathbf{\varphi }} }}
\def\FIHC{ \FIH_{c} }
\def\SPACELTWO{\LEB_{2}\PSPACETRUE}
\def\MSUP{ {\tt {msup} } }
\def\COVTRUE{{\tt Cov}_{\INDXTRUE} }
\def\DIMA{d_{A}}
\def\DIMT{d_{\tau}}
\def\TAO{ \pmb{ \mathbf{\tau}}}
\def\TAOH{ \widehat{ \pmb{ \mathbf{\tau}} } }
\def\BETABOLD{ \pmb{ \mathbf{\beta}}}
\def\PSI{ \pmb { \mathbf{\psi }} }
\def\PSIH{ \pmb{ \widehat{ \mathbf{\psi }} }}
\def\PSIHC{ \PSIH_{c} }
\def\DET{ {\tt Det} }
\def\BFG{ {\bf g}}
\def\UG{ \mathscr{U}_{\BFG} }
\def\ISN{ \; {\rm i.s.n.}}
\def\WP1{ \; {\tt w.p. } \, 1 }
\def\KRON{ \delta }
\def\VECB{ \BETABOLD }
\def\QUADQ{ {\bf q} }
\def\PAIRD{ {\bf d} }
\def\CONDA{\mathscr{C}_{A}}
\def\CONDB{\mathscr{C}_{B}}
\def\COLLW{\mathscr{W}}
\def\WA{\COLLW_{A}}
\def\WB{\COLLW_{B}}
\def\EXPTRUE{ {\EXP_{ \INDXTRUE } \! \! } }   
\def\EXPT{ {\EXP_{ \INDX } } }   
\def\MATRIXL { \mathbb{L} }
\def\AVEC{ {\bf a} }
\def\DIAG{ {\tt Diag} }
\def\UVECH{ {\widehat{{\bf u}}} }
\def\IDL{ I_{\DIML} }
\def\IDP{ I_{\DIMPAR} }
\def\IDMP{ I_{\DIMM^{\prime}} }
\def\YIM{ {\widehat{s}}_{i}(m) }
\def\SVECH{ {\widehat{{\bf s}}} }
\def\MATRIXS{ \mathbb{S} }
\def\QUADQH{ \widehat{\QUADQ} }
\def\GAMMAVEC{ \pmb{ \mathbf{\gamma}}}
\def\RHOVEC{ \pmb{ \mathbf{\rho}}}
\def\DIMG{ d_{\gamma} }
\def\DIMR{ d_{\rho} }
\def\MATRIXM { \mathbb{M}  }
\def\MATRIXLAM{ \Lambda }
\def\MU{\mu}
\def\UVEC{ \mathbf { u } }
\def\XVEC{ \mathbf { x } }
\def\FLOP{\alpha}
\def\PLOP{\gamma }
\def\ALFAVEC{  \pmb{ \mathbf{\alpha}} }
\def\MATPSI{ \Psi }
\def\LMATX{  X }
\def\SMATF{  F }
\def\LMATY{  Y }
\def\LMATH{  H }
\def\LMATG{  G }
\def\LMATA{  A }
\def\LMATB{  B }
\def\LMATT{  T }
\def\LMATR{  R }
\def\LMATD{  D }
\def\LMATL{  \Lambda }
\def\DIMLN{ N }
\def\DIMLM{ M }
\def\DIMLP{ P }
\def\DIMMAT { M}
\def\MATBOUND{ B_{\COLLW} }
\begin{document}

\title{Barankin Vector Locally Best Unbiased Estimates\thanks{This work was partially supported by the Universidad de Buenos Aires, UBA, grant UBACYT No. 20020130100357BA, and the Consejo Nacional de Investigaciones Cient{\'\i}ficas y T\'ecnicas, CONICET, Argentina.}}   
\author{Bruno~Cer\hspace{1pt}nuschi-Fr{\'\i}as\thanks{B. Cer\hspace{1pt}nuschi-Fr{\'\i}as is with the Universidad de Buenos Aires, Facultad de Ingenier{\'\i}a,   
and the Instituto Argentino de Matem\'aticas, IAM, CONICET, Casilla 8, Sucursal 12(B), 1412 Buenos Aires, Argentina (e-mail: bcf@acm.org).}}
\maketitle         
\markboth{B. Cernuschi-Fr{\'\i}as} {B. Cernuschi-Fr{\'\i}as}

\begin{abstract}
The Barankin bound is generalized to the vector case in the mean square error sense. Necessary and sufficient conditions are obtained to achieve the lower bound. To obtain the result, a simple finite dimensional real vector valued generalization of the Riesz representation theorem
for Hilbert spaces is given. The bound has the form of a linear matrix inequality where the covariances of any unbiased estimator, if these exist, are lower bounded by matrices depending only on the parametrized probability distributions.
\end{abstract}

Keywords:     
Parameter estimation, unbiased estimation, optimal estimator, Barankin bound, performance bounds, linear matrix inequalities, minimal covariance matrix, Cramer-Rao bound.

\section{Introduction} \label{sec:INTRO}
The problem considered, following Barankin, \cite{BARANKIN}, and results in Banach, \cite{BANACH}, is the optimal unbiased estimation of a deterministic vector of parameters $ \TITAVEC $ of a family of probability measures $ \SOMP_{\TITAVEC} $,
or more generally a known real vector function of these parameters $ \FUNCG ( \TITAVEC ) $,
using a realization of a vector random variable $ \SAMP $ drawn from $ \SOMP_{\TITAVEC_{T}} $.
The first issue is to find a function $ \PSI $ such that $ \int \PSI ( \SAMP ) \; \DIFF \SOMP_{\TITAVEC} = \FUNCG ( \TITAVEC ) $, for all
$ \TITAVEC $ in some admissible set.
This problem is a vector integral equation and may or may not have a solution, 
\cite{BITSADZE, SAITOH-2016}.
Furthermore, even if it has solution, it may not have a solution with finite covariance matrix for $ \TITAVEC_{T} $.
Barankin, under very simple hypothesis, \cite{BARANKIN}, gives an if and only if condition for the existence of a 
minimal $ \DIMP $-th variance unbiased estimator for the scalar case,
which is tighter than the classical Cramer-Rao or Bhattacharyya bounds if they exist.
In recent years the Barankin bound has attracted attention, since there are problems for which the Cramer-Rao or Bhattacharyya bounds
give no satisfactory solution, see e.g. 
\cite{VANTREES}, 
and references there.
Following \cite{BARANKIN}, the problem studied here is under what conditions there exists a finite covariance vector unbiased estimator of the true vector parameter $ \TITAVEC_{T} $, and in that case
if a minimal covariance vector unbiased estimator exists.

In Section \ref{sec:FORM} an overview is presented of the relevant results of measure theory and the Lebesgue integral related to the Barankin formulation. In Section \ref{sec:MB} the vector Barankin bound generalization is presented as a linear matrix inequality (LMI). In Section \ref{sec:FAS} the Barankin functional analysis formalization is generalized to handle the vector case.
In Section \ref{sec:GRRT} a finite dimensional real vector valued generalization of the Riesz representation theorem for Hilbert spaces is presented. In Section \ref{sec:OPTEST} necessary and sufficient conditions are given for the existence of an optimal vector estimator attaining the bound given by the LMI obtained in Section \ref{sec:MB}. In Section \ref{sec:OLMI} other alternative LMI formulations for the existence of an optimal vector estimator are given.
\section{ Formalization of the vector estimation problem}    \label{sec:FORM}
\subsection{Measure theoretic setup} \label{subsec:MTS}
Let $ ( \SOM, \SAF ) $ be a measurable space, where $ \SOM $ is a well defined abstract set, and $ \SAF $ is a sigma-algebra of subsets of $ \SOM $, \cite{HALMOS}. 
Let $ \SINDX $ be an abstract arbitrary set of sub-indexes with no conditions on its structure as in \cite{BARANKIN}, p. 477.
Let $ \COLP $ be a collection of probability measures $ \SOMPT $ for the measurable space $ ( \SOM, \SAF ) $, 
indexed by the sub-indexes $ \INDX \in \SINDX $, i.e.  $ \COLP = \bigl\{ \SOMPT: \INDX \in \SINDX \bigr\} $, as in \cite{BARANKIN} p. 477.
Hence for each $ \INDX \in \SINDX $, the triple $( \SOM, \SAF, \SOMPT ) $ is a probability space. 
Let $ \SAMP $ be a vector random variable, i.e. a measurable function from the measurable space $ ( \SOM, \SAF ) $ 
to the measurable space $ ( \RN$, $\BN ) $, where $ \RN $ is the vector $ \DIMSAMP $-dimensional real space, and $ \BN $ is the Borel sigma-algebra for $ \RN $, that is the minimal sigma-algebra generated, e. g., by the open sets of $\RN$. 
Then $ \SAMP $ is a real vector random variable iff $ \forall B \in \BN $ we have  $ \SAMP^{-1}(B) \in \SAF $, if and only if each component 
of the vector is a real random variable, \cite{LAHA} p. 19.
Define for each $ \INDX\ \in \SINDX $ the measure $ \RPT $, for the measurable space $ ( \RN$, $\BN ) $, induced by the random variable $ \SAMP $, \cite{CHUNG} p. 34, i.e. for each $ B \in \BN $ define $ \RPT (B) = \SOMPT (\SAMP^{-1} (B)) $. 
Hence for each $ \INDX \in \SINDX $ the random variable $ \SAMP $ induces the probability
space $ ( \RN$, $\BN$, $\RPT ) $. 

Let $ \PSI $ be a real measurable vector function from $ ( \RN$, $\BN ) $ to $ ( \RPAR$, $\BPAR ) $, that is $ \PSI: \RN \to \RPAR $, 
and for each $ B \in \BPAR $ we have $ \PSI^{-1}(B) \in \BN $. For the measurable vector function  $ \PSI $ from $ ( \RN, \BN  ) $ to $ ( \RPAR, \BPAR ) $, define the i-th component of the vector $ \PSI $
as $ \left[ \PSI \right]_{i} $, which is a measurable function from $ ( \RN, \BN  ) $ to $ ( \REALS, \BORELS ) $ for each $ 1 \le i \le \DIMPAR $
iff $ \PSI $ is measurable.
Define $ \VSPACEONE $ as the collection of all the measurable vector functions $ \PSI $ from $ ( \RN, \BN  ) $ to $ ( \RPAR, \BPAR ) $, such that
$ \int \bigl| \left[ \PSI \right]_{i} \bigr| \DIFF \RPT < + \infty $, for $ 1 \le i \le \DIMPAR$, equivalently, 
$ \left[ \PSI \right]_{i} \in \SPACELONE $, for all $ 1 \le i \le \DIMPAR$,
so that $ \VSPACEONE = \left( \SPACELONE \right)^{\DIMPAR} $.

Hence $ \PSI(\SAMP) $ is a random variable from $ ( \SOM, \SAF ) $ to $ ( \RPAR , \BPAR ) $, since for $ B \in \BPAR $ we have
$ \bigl[ \PSI ( \SAMP ) \bigr]^{-1} ( B ) = \SAMP^{-1} \bigl( \PSI^{-1} ( B ) \bigr) $, but $ B \in \BPAR $ so that $ \PSI^{-1} ( B ) \in \BN $ and then $ \SAMP^{-1} \bigl( \PSI^{-1} ( B ) \bigr)  \in \SAF $.

Define the integral of a vector of functions
as a vector  whose elements are the integrals of each function. Then, \cite{CHUNG} p. 45:
\begin{displaymath}
\int \PSI(\SAMP) \; d\SOMPT = \int \PSI \; d\RPT
\end{displaymath}
Note that the integral on the left is with respect to the probability space ($\SOM$,$\SAF$,$\SOMPT$), while the integral on the right is with respect to the probability space ($\RN$, $\BN$, $\RPT$). 
We will refer indistinctly to $ \PSI ( \SAMP ) $ and $ \PSI $ as an {\em estimator}, with the understanding that they refer to different probability spaces linked by the previous equality of integrals.

For $ \PSI \in \VSPACEONE $ define the expectation of $ \PSI $ as
$ \EXPT \left[ \PSI \right] =$\break  $\int \PSI(\SAMP) \; d\SOMPT = \int \PSI \; d\RPT $. 

We assume that the random variable $\SAMP$ is drawn from some specific probability measure (p. m.) $\SOMPTRUE$,
with $ \INDXTRUE \in \SINDX $, i. e. we will use the realization of this random variable to obtain the estimator for $ \GTRUE $.
The random vector $ \PSI (\SAMP) $ is an unbiased estimator for $\GT$, $ \forall \INDX \in \SINDX $ , $ \FUNCG: \SINDX \to \RPAR$, if  the integral
$\int \PSI (\SAMP) \; \DIFF  \SOMPT $ is well defined, $ \forall \INDX \in \SINDX $,  and we have
$
\int \PSI (\SAMP) \; \DIFF  \SOMPT = \int \PSI \; \DIFF \RPT =  \GT 
$,
$ \forall \INDX \in \SINDX $. 

Then the first issue posed in the introduction may be formally stated as:

\begin{PROBLEM}[Basic Problem] \label{prob:INTEQ}
Given a function $ \FUNCG: \INDX \to \RPAR$, defined for each $ \INDX \in \SINDX $,
and a family of p.m.'s indexed by $ \INDX \in \SINDX $, 
find an unbiased estimator, i.e. find a function $\PSI \in \VSPACEONE$, $ \forall \INDX \in \SINDX $, 
such that $ \int \PSI ( \SAMP )\; \DIFF \SOMPT = \int \PSI \; \DIFF \RPT = \GT $, for all $\INDX \in \SINDX$.
\end{PROBLEM}
Define the integral of a matrix $ \MATPSI $ of dimensions $ N \times M $, $ N, M \in \NAT $, 
whose elements belong to $ \SPACELONE $, 
as a matrix whose elements are the integrals of the elements of $ \MATPSI $,
so that
$ \EXPT [ \MATPSI ( \SAMP ) ] = \int \MATPSI ( \SAMP ) \; \DIFF \SOMPT = \int \MATPSI \; \DIFF \RPT $.
For a measurable square integrable function $ f: ( \RN, \BN, \RPTRUE  ) \to( \REALS, \BORELS) $, i.e. $  f \in \SPACELTWO $,
\ifundefined{DRAFT}
\else
\\
\noindent
\fi
Define $ \VSPACETWOTRUE $ as the collection of all the measurable functions $ \UVEC $ from $ ( \RN, \BN  ) $ to $ ( \RPAR, \BPAR ) $, such that
$ \int \left[ \UVEC \right]_{i}^{2} \DIFF \RPTRUE < + \infty $, for $ 1 \le i \le \DIMPAR$, equivalently, $ \left[ \UVEC \right]_{i} \in \SPACELTWOTRUE  $, for all $ 1 \le i \le \DIMPAR$. 
If the non-centered second order moments of the components of the estimator $ \PSI $  exist for 
$ \INDXTRUE$, i.e. $ \left[ \PSI \right]_{i} \in \SPACELTWOTRUE  $, for $ 1 \le i \le \DIMPAR$,
so that $ \PSI \in \VSPACETWOTRUE \equiv \left( \SPACELTWOTRUE  \right)^{\DIMPAR} $,
then, the first order moments of the components of the estimator exist for $ \INDXTRUE$. Also, the  correlations 
$ \int \left[ \PSI \right]_{i}  \left[ \PSI \right]_{j} \DIFF \RPTRUE $,
are well defined and are finite for all $ i \neq j$, $ 1 \le i,j \le \DIMPAR $, and using the Cauchy-Schwarz inequality, we obtain 
$ \Big| \int \left[ \PSI \right]_{i}  \left[ \PSI \right]_{j} \DIFF \RPTRUE \Bigr|
\le
\NORMLTWO{ \left[ \PSI \right]_{i} } \, \NORMLTWO{ \left[ \PSI \right]_{j} } 
$. 
Additionally assume  $ \PSI (\SAMP) $ is unbiased $ \forall \INDX \in \SINDX $,
then the covariance matrix of $ \PSI (\SAMP) $ exists for $ \INDXTRUE$, and we have
$ 
\COVTRUE(\PSI) = \int \left[ \left( \PSI (\SAMP) - \GTRUE \right) \; \left( \PSI (\SAMP) - \GTRUE \right)^{T} \right] \DIFF \SOMPTRUE
=
$\break
$
\EXPTRUE \left[ \left( \PSI - \GTRUE \right) \; \left( \PSI - \GTRUE \right)^{T} \right] 
=
\EXPTRUE \left[ \PSI \; \PSI^{T} \right] - \GTRUE \; \GTRUE^{T}
$.

In the same direction of \cite{BARANKIN}, with $ \DIMP = \DIMQ = 2 $, instead of the general Problem \ref{prob:INTEQ},
we pose the problem in terms of estimators with finite covariance matrix at $ \INDXTRUE $:
\begin{PROBLEM}[Finite Covariance Problem]     \label{prob:LTWO}
Given a function $ \FUNCG: \INDX \to \RPAR$, defined for each $ \INDX \in \SINDX $,
and a family of p.m.'s indexed by $ \INDX \in \SINDX $,
find a function $\PSI \in \VSPACETWOTRUE $, with $\PSI \in \VSPACEONE $, $ \forall \INDX \in \SINDX $, such that $ \int \PSI \; \DIFF \RPT = \GT $, for all $\INDX \in \SINDX$.
If there are several solutions find, if possible, a solution with minimal covariance matrix at $ \INDXTRUE $.
\end{PROBLEM}
\subsection{Centered definitions}
Define $ \FI = \PSI - \GTRUE $, and $ \HT = \GT - \GTRUE $ so that $ \HTRUE = 0 $. If $ \PSI $ is unbiased, then, 
since
$
\int \PSI \; \DIFF \RPT= \GT
$
and
$
\int \GTRUE \;  d \RPT = \GTRUE
$, for all $ \INDX \in \SINDX $, 
then,
$
\int \left ( \PSI  - \GTRUE \right) \; d \RPT = \GT - \GTRUE
$, for all $ \INDX \in \SINDX $, so that
$
\int \FI \; \DIFF \RPT = \HT \quad \forall \INDX \in \SINDX
$, 
and
$ \COVTRUE(\PSI) = \EXPTRUE \left[ \FI \: \FI^{T} \right] $.
Also, if $ \PSI $ is unbiased, then, since $ \HTRUE = 0, $ then $ \EXPTRUE \left[ \: \FI \right] = \int \FI \; \DIFF \RPTRUE = 0 $.
\subsection{Barankin formulation: basic hypothesis}
Following Barankin we will introduce some simple
additional hypothesis resumed in Barankin's Postulate in \cite{BARANKIN} p. 481.
\begin{HYPOTHESIS}    \label{hypo:ONE}
The set $ \SINDX $ is an arbitrary index set with no conditions on its structure, \cite{BARANKIN} p. 477, and 
$ \COLP $ is a collection of probability measures $ \SOMPT $ for the measurable space $ ( \SOM, \SAF ) $,
i.e.  $ \COLP = \bigl\{ \SOMPT: \INDX \in \SINDX \bigr\} $ as in  \cite{BARANKIN}, p. 477.
The random variable $\SAMP: ( \SOM, \SAF ) \to ( \RN, \BN ) $ is drawn from the probability measure (p. m.) $\SOMPTRUE$,
with $ \INDXTRUE \in \SINDX $.
Assume that for each $\INDX \in \SINDX$ the p.m. $\SOMPT$ is absolutely continuous with respect to $\SOMPTRUE$, i.e $\SOMPT << \SOMPTRUE$,
with $ \INDXTRUE \in \SINDX $. 
\end{HYPOTHESIS}
\begin{LEMMA} \label{lemma:abscont}
If Hypothesis \ref{hypo:ONE} is true then for each $\INDX \in \SINDX$ the p.m. $\RPT$ is absolutely continuous with respect to $\RPTRUE$, i.e. $\RPT << \RPTRUE$.
\end{LEMMA}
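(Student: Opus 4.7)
The plan is to unwind the definition of absolute continuity on $(\RN,\BN)$ back to absolute continuity on $(\SOM,\SAF)$ through the induced-measure relation $\RPT(B)=\SOMPT(\SAMP^{-1}(B))$, and then invoke Hypothesis \ref{hypo:ONE}.

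More concretely, I would start by fixing an arbitrary $\INDX \in \SINDX$ and an arbitrary Borel set $B \in \BN$ with $\RPTRUE(B)=0$. By the very definition of the induced measure $\RPTRUE$ given in Subsection \ref{subsec:MTS}, this means $\SOMPTRUE(\SAMP^{-1}(B))=0$. Since $\SAMP$ is measurable, $\SAMP^{-1}(B) \in \SAF$, so it is a legitimate event in the probability space $(\SOM,\SAF,\SOMPTRUE)$ to which the absolute continuity assumption can be applied.

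Next, I would invoke Hypothesis \ref{hypo:ONE}, which gives $\SOMPT \ll \SOMPTRUE$. Applied to the set $\SAMP^{-1}(B)$, this yields $\SOMPT(\SAMP^{-1}(B)) = 0$. Using once again the definition of the induced measure (now for $\RPT$), this is precisely $\RPT(B)=0$. Since $B$ was an arbitrary $\RPTRUE$-null Borel set, we conclude $\RPT \ll \RPTRUE$, and since $\INDX$ was arbitrary the conclusion holds for all $\INDX \in \SINDX$.

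There is essentially no obstacle here: the lemma is purely a bookkeeping statement about how absolute continuity transports through a measurable map. The only thing to be careful about is to apply the definition of $\RPT$ in the correct direction on both sides (to read off $\RPTRUE(B)=0$ as a statement on $\SAF$, and then to reinterpret $\SOMPT(\SAMP^{-1}(B))=0$ as a statement on $\BN$); once this is set up, the implication is immediate from Hypothesis \ref{hypo:ONE}.
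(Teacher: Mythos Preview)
Your proof is correct and follows exactly the same approach as the paper: take a $\RPTRUE$-null set $B$, pull it back via $\SAMP^{-1}$ to an $\SOMPTRUE$-null set, apply $\SOMPT \ll \SOMPTRUE$ from Hypothesis~\ref{hypo:ONE}, and push forward again to conclude $\RPT(B)=0$. Your write-up is in fact slightly more careful than the paper's (which writes $B\in\BPAR$ where $\BN$ is intended), but the argument is identical.
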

\begin{proof}
Assume $B\in \BPAR$ is such that $\RPTRUE (B) = 0$, then since $\RPTRUE (B) = \SOMPTRUE(\SAMP^{-1}(B))$, we obtain $\SOMPTRUE(\SAMP^{-1}(B)) =0$. But $ \SOMPT << \SOMPTRUE $, hence
$\SOMPT(\SAMP^{-1}(B)) = 0$. Since $\RPT (B) = \SOMPT(\SAMP^{-1}(B))$, then $\RPT (B) = 0$.
\end{proof}
\begin{OBSERVATION} \label{obs:abscont}
In the case in which every index $\INDX \in \SINDX$ is a possible candidate for $\INDXTRUE$, then, Hypothesis \ref{hypo:ONE} should require that for each $\INDXONE\in \SINDX$ the p.m. $\SOMPTONE$ should be absolutely continuous with respect to each other p.m. $\SOMPTTWO$
with $\INDXTWO \in \SINDX$, and then $\RPTONE << \RPTTWO$ for all $\INDXONE, \INDXTWO \in \SINDX$.
\end{OBSERVATION}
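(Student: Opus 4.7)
The plan is to reduce the observation to a direct application of Lemma \ref{lemma:abscont}, exploiting the fact that the roles of the indices are symmetric once we allow any $\INDX \in \SINDX$ to play the part of $\INDXTRUE$. First I would fix an arbitrary ordered pair $\INDXONE, \INDXTWO \in \SINDX$ and treat $\INDXTWO$ momentarily as if it were the ``true'' index $\INDXTRUE$. The strengthened form of Hypothesis \ref{hypo:ONE} that is being proposed in the observation asserts precisely that $\SOMPTONE << \SOMPTTWO$ holds for every such pair, so the hypothesis of Lemma \ref{lemma:abscont} is fulfilled with $\INDXTRUE$ replaced by $\INDXTWO$.

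Next I would invoke Lemma \ref{lemma:abscont} verbatim: if $B \in \BN$ satisfies $\RPTTWO(B) = 0$, then because $\RPTTWO(B) = \SOMPTTWO(\SAMP^{-1}(B))$ we obtain $\SOMPTTWO(\SAMP^{-1}(B)) = 0$; the absolute continuity $\SOMPTONE << \SOMPTTWO$ then forces $\SOMPTONE(\SAMP^{-1}(B)) = 0$, and since $\RPTONE(B) = \SOMPTONE(\SAMP^{-1}(B))$, we conclude $\RPTONE(B) = 0$. Hence $\RPTONE << \RPTTWO$. Since $\INDXONE$ and $\INDXTWO$ were arbitrary, the induced absolute continuity holds uniformly over all pairs in $\SINDX$.

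There is no real obstacle here beyond recognizing that the proof of Lemma \ref{lemma:abscont} only uses the absolute continuity relation between the two specific measures involved, not the distinguished status of $\INDXTRUE$; consequently the lemma applies to every pair once the hypothesis is symmetrized. The observation is therefore essentially a notational remark recording how Hypothesis \ref{hypo:ONE} should be read when no single $\INDXTRUE$ is singled out in advance, together with the obvious corollary obtained by applying Lemma \ref{lemma:abscont} pairwise.
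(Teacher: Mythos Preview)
Your proposal is correct and matches the paper's intent exactly: the paper gives no separate proof for this observation, treating it as an immediate remark whose conclusion follows by applying Lemma~\ref{lemma:abscont} with $\INDXTWO$ in the role of $\INDXTRUE$, which is precisely what you do.
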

As a consequence of the previous hypothesis and lemma, the Radon-Nykodim derivatives $d \SOMPT / d \SOMPTRUE$ and $d \RPT / d \RPTRUE$ exist for all $\INDX \in \SINDX$, \cite{HEWITT} p. 315.
\begin{DEFINITION} \label{def:PI}
Define $\PIT = d \RPT / d \RPTRUE$, with $\PIT \equiv \FULLPIT$, $\XVAR \in \RN$, so
that 
$
d \SOMPT / d \SOMPTRUE \; = \; \FULLPITSAMP
$.  
Define $ \THREEBETA = \left\{ \PIT:  \INDX \in \SINDX \right\}$, see \cite{BARANKIN}, p. 481.
\end{DEFINITION}
We have $ \PI ( \INDX ) \ge 0 $ w.p. 1, for all $ \INDX \in \SINDX $, \cite{HEWITT} p. 315,  $ \PI ( \INDXTRUE ) = 1 $ w.p. 1, and  
$ \int  \FULLPITSAMP \DIFF \SOMPTRUE = \int \bigl( \DIFF \SOMPT / \DIFF \SOMPTRUE   \bigr) \DIFF \SOMPTRUE = 
\int \DIFF \SOMPT = 1 $, for all $ \INDX \in \SINDX $. 
\begin{HYPOTHESIS}      \label{hypo:BASIC}
\begin{enumerate}
\item \label{it:IDENT}
Assume that for each $\INDX$ there is one and only one $\PIT \in \THREEBETA $, 
i.e. the correspondence $ \PI: \SINDX \to  \THREEBETA $ is one-to-one.
\item \label{it:NONC}
There are at least two  values $ \INDX_{1},  \INDX_{2} \in \SINDX $, such that
$ \GTONE \neq \GTTWO $.
\end{enumerate}
\end{HYPOTHESIS}
\begin{OBSERVATION}       \label{obs:BASIC}
Item \ref{it:IDENT} avoids the identifiability problem, \cite{DUDA}, pp. 58 and 191.
Item \ref{it:NONC} implies that we do not consider estimators which are constant with probability 1: if it was
$ \PSI = \ALFAVEC_{0} $ w.p. 1 for some $ \ALFAVEC_{0} \in \REALS^{\DIMPAR} $, then  
$ \int \PSI (\SAMP) \; \DIFF  \SOMPTONE =  \int  \ALFAVEC_{0} \; \DIFF  \SOMPTONE = \ALFAVEC_{0} $, similarly
$ \int \PSI (\SAMP) \; \DIFF  \SOMPTWO = \ALFAVEC_{0} $, but since we assume that $ \PSI $ is unbiased, it should be $ \int \PSI (\SAMP) \; \DIFF  \SOMPTONE = \GTONE $ and $ \int \PSI (\SAMP) \; \DIFF  \SOMPTTWO = \GTTWO $, and then it should be $ \GTONE = \ALFAVEC_{0} = \GTTWO $, which is a contradiction. Additionally, Hypothesis \ref{hypo:BASIC} implies that there exists at least a $ \INDX_{0} \in \SINDX $ such that 
$ \GTZERO \neq 0 $. Nonetheless, see e.g. \cite{BARANKIN} p. 482 and \cite{DMC-1} p. 2440,  for some comments regarding constant estimators.
\end{OBSERVATION}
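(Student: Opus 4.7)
The plan is to verify in turn the three substantive claims contained in Observation \ref{obs:BASIC}: that item \ref{it:IDENT} of Hypothesis \ref{hypo:BASIC} rules out the identifiability problem, that item \ref{it:NONC} rules out estimators that are constant with probability one, and that together these items force the existence of some $\INDX_{0} \in \SINDX$ with $\GTZERO \neq 0$.

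For the identifiability assertion, I would argue by contrapositive. Suppose the map $\PI: \SINDX \to \THREEBETA$ were not injective, so that there exist $\INDX_{1} \neq \INDX_{2}$ in $\SINDX$ with $\PI(\INDX_{1}) = \PI(\INDX_{2})$ $\RPTRUE$-almost everywhere. By Lemma \ref{lemma:abscont} and Definition \ref{def:PI}, $\RPT(B) = \int_{B} \PIT \, \DIFF \RPTRUE$ for every $B \in \BN$, so the almost-everywhere coincidence of these Radon-Nikodym derivatives yields $\RPTONE = \RPTTWO$ as probability measures on $(\RN, \BN)$. Consequently, for any $\PSI \in \VSPACEONE$ valid for both indices, $\int \PSI \, \DIFF \RPTONE = \int \PSI \, \DIFF \RPTTWO$, so an unbiased estimator forces $\GTONE = \GTTWO$, and no estimator whatsoever can discriminate $\INDX_{1}$ from $\INDX_{2}$ using realizations of $\SAMP$. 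Item \ref{it:IDENT} precisely excludes this failure mode.

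For the constant-estimator claim, I would formalize the argument already embedded in the observation. Suppose, for contradiction, that an unbiased $\PSI$ satisfies $\PSI = \ALFAVEC_{0}$ w.p. 1 under $\SOMPTRUE$ for some fixed $\ALFAVEC_{0} \in \RPAR$. By Hypothesis \ref{hypo:ONE}, $\SOMPT \ll \SOMPTRUE$ for every $\INDX \in \SINDX$, so the $\SOMPTRUE$-null event $\{\PSI \neq \ALFAVEC_{0}\}$ is also $\SOMPT$-null, whence $\int \PSI(\SAMP) \, \DIFF \SOMPT = \ALFAVEC_{0}$ for every $\INDX \in \SINDX$. Unbiasedness then forces $\GT = \ALFAVEC_{0}$ identically on $\SINDX$, contradicting the existence of $\INDX_{1}, \INDX_{2}$ with $\GTONE \neq \GTTWO$ supplied by item \ref{it:NONC}. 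Finally, the existence of $\INDX_{0}$ with $\GTZERO \neq 0$ is immediate from item \ref{it:NONC}: the distinct vectors $\GTONE, \GTTWO \in \RPAR$ cannot both equal the zero vector, so choosing $\INDX_{0}$ to be whichever of $\INDX_{1}, \INDX_{2}$ gives a nonzero value suffices.

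The only mildly delicate step is the passage from almost-everywhere equality of Radon-Nikodym derivatives to genuine equality of the induced measures in the identifiability argument; this requires no tool beyond the defining integral representation of $\RPT$ already noted after Definition \ref{def:PI}. All the remaining steps are routine bookkeeping around Hypothesis \ref{hypo:ONE} and the unbiasedness requirement, so I do not anticipate any substantive obstacle.
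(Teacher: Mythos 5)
Your proposal is correct and follows essentially the same route as the paper's inline argument: the constant-estimator contradiction via unbiasedness at two indices with $ \GTONE \neq \GTTWO $, and the existence of $ \INDX_{0} $ with $ \GTZERO \neq 0 $ read off directly from item \ref{it:NONC}. Your two additions are refinements rather than departures: you make explicit (via Hypothesis \ref{hypo:ONE} and Lemma \ref{lemma:abscont}) that constancy w.p. 1 under $ \SOMPTRUE $ transfers to every $ \SOMPT $ by absolute continuity, a step the paper leaves implicit, and you supply an actual argument for the identifiability remark that the paper only supports with a citation to \cite{DUDA}.
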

\subsection{Barankin postulate} 
The following hypothesis is Barankin's Postulate in \cite{BARANKIN}, p. 481, for $ \DIMP = \DIMQ = 2 $.
\begin{HYPOTHESIS}[Barankin, \cite{BARANKIN}, Postulate p. 481]  \label{hypo:TWO}
Assume that 
\begin{equation} 
\nonumber
 {\PIT \in \SPACELTWOTRUE } \qquad \forall \INDX \in \SINDX 
\end{equation}
equivalently $ \THREEBETA \subseteq \SPACELTWOTRUE $.
\end{HYPOTHESIS}
\begin{OBSERVATION}    \label{obs:THREEBETA}
Since $ \int \PIT \; \DIFF \RPTRUE = \int \bigl( \DIFF \RPT / \DIFF \RPTRUE \bigr) \; \DIFF  \RPTRUE = 
\int \DIFF \RPT = 1 $, for all $ \INDX \in \SINDX $, then $ \NORMLTWO { \PIT } \neq 0  $, for all $ \INDX \in \SINDX $,
equivalently $ \NORMLTWO { u } \neq 0  $, for all $ u \in \THREEBETA $. If not,
$ \NORMLTWO { \PIT } = 0  $ implies $ \PIT = 0 $ w.p. 1, and then $ \int \PIT \; \DIFF \; \RPTRUE = 0 $,  contradiction.
Additionally note, taking in account Hypothesis \ref{hypo:BASIC}, that $ \THREEBETA $ has at least two elements.
\end{OBSERVATION}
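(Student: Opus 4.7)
The observation asserts two facts: first, that $\NORMLTWO{\PIT} \neq 0$ for every $\INDX \in \SINDX$ (equivalently, every $u \in \THREEBETA$ is nonzero in $\SPACELTWOTRUE$), and second, that $\THREEBETA$ has at least two elements. My plan is to derive both from the identity $\int \PIT \DIFF \RPTRUE = 1$ — which was already established just above using the definition of the Radon--Nikodym derivative and the fact that $\RPT$ is a probability measure — together with Hypotheses \ref{hypo:ONE} and \ref{hypo:BASIC}.

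For the first claim I would argue by contradiction. Suppose $\NORMLTWO{\PIT} = 0$ for some $\INDX \in \SINDX$. By the defining property of the $\LTWO$ norm on $\SPACELTWOTRUE$, this forces $\PIT = 0$ almost everywhere with respect to $\RPTRUE$, whence $\int \PIT \DIFF \RPTRUE = 0$ as the integral of a null function. This directly contradicts $\int \PIT \DIFF \RPTRUE = 1$, so $\NORMLTWO{\PIT} > 0$ for every $\INDX$; transporting the statement along the correspondence $\PI \colon \SINDX \to \THREEBETA$ yields the dual formulation $\NORMLTWO{u} \neq 0$ for every $u \in \THREEBETA$.

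For the second claim, item \ref{it:NONC} of Hypothesis \ref{hypo:BASIC} provides $\INDX_{1}, \INDX_{2} \in \SINDX$ with $\GTONE \neq \GTTWO$; since $\FUNCG$ is a single-valued function on $\SINDX$, this forces $\INDX_{1} \neq \INDX_{2}$. Item \ref{it:IDENT} of the same hypothesis asserts that $\PI \colon \SINDX \to \THREEBETA$ is one-to-one, so $\PI(\INDX_{1}) \neq \PI(\INDX_{2})$, exhibiting two distinct elements of $\THREEBETA$. I anticipate no real obstacle here: both parts are one-step consequences of the hypotheses just stated. The only point worth flagging is that ``almost everywhere zero'' must be interpreted with respect to $\RPTRUE$, which is precisely the measure with respect to which $\SPACELTWOTRUE$ and the Radon--Nikodym derivative $\PIT$ are defined, so no mismatch of reference measures arises in the contradiction step.
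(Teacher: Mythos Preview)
Your argument is correct and matches the paper's own reasoning: the contradiction via $\NORMLTWO{\PIT}=0 \Rightarrow \PIT=0$ w.p.~1 $\Rightarrow \int \PIT\,\DIFF\RPTRUE=0\neq 1$ is exactly what the observation itself records, and your justification for $|\THREEBETA|\ge 2$ via items~\ref{it:NONC} and~\ref{it:IDENT} of Hypothesis~\ref{hypo:BASIC} spells out precisely what the paper leaves implicit.
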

Suppose $ \PSI   \in  \VSPACETWOTRUE $, since $ \PIT \in \SPACELTWOTRUE $ for all $ \INDX \in \SINDX $,
then the integrals $ \int \PSI \; \PIT \; d \RPTRUE $ are well defined 
for all $ \INDX \in \SINDX $, and we have all the equivalent forms:
\begin{align}
\nonumber
& \int \PSI \; \PIT \; \DIFF  \RPTRUE = \int \PSI \; \frac{ \DIFF \RPT}{\DIFF \RPTRUE} \; \DIFF \RPTRUE = \int \PSI \; \DIFF \RPT  
\\
\nonumber
&= \int \PSI(\SAMP) \; \DIFF \SOMPT = \int \PSI(\SAMP) \frac{ \DIFF \SOMPT}{ \DIFF \SOMPTRUE} \; \DIFF \SOMPTRUE
=
\EXPTRUE \left[ \: \PSI \; \PIT \right]
\end{align}
If $ \PSI \in \VSPACETWOTRUE $ is unbiased, then for $ \FI = \PSI - \GTRUE $ we have
\begin{equation}     \label{eq:UNBIASED}
\EXPTRUE \left[ \, \FI  \; \PIT \right] = \int \FI \; \PIT \; \DIFF  \RPTRUE = \HT \qquad \forall \; \INDX \in \SINDX 
\end{equation}
The introduction of the functions $ \PI $ reduces the consideration of the 
multiple probability spaces
$ \SPACELTWOT$, $ \forall \INDX \in \SINDX $, to a single probability space $ \SPACELTWOTRUE $.
\subsection{Probability density function form}
Call $ \MLAMBDA $ the Lebesgue measure for the measurable space ($\RN$, $\BN$), i.e. the measure that assigns to parallelepipeds in $\RN$ the value given by the product of the lengths of the edges of the parallelepiped in each direction. 
Alternatively call $ \DIFF \MLAMBDA \; = \; \DIFF \XVAR$, with $ \XVAR \in \RN$. 
If in turn
we have $ \RPTRUE << \MLAMBDA $, i.e. the p.m. $ \RPTRUE $ is absolutely continuous with respect to the Lebesgue measure, then, $ \RPT << \RPTRUE << \MLAMBDA $, 
so that $ \RPT << \MLAMBDA $, and  then the  Radon-Nykodim derivatives $d \RPT / d \MLAMBDA$ exist, for all $ \INDX \in \SINDX $.
These derivatives are the probability density functions (pdf) $\pT \equiv \pT(\XVAR) \equiv d \RPT / d \MLAMBDA$ with $\XVAR \in \RN$.
Since, \cite{HEWITT} p. 328,
\begin{displaymath}
 \pT = \frac{ d \RPT }{ d \MLAMBDA} = \frac {d \RPT }{ d \RPTRUE} \; \frac {d \RPTRUE }{ d \MLAMBDA} =  \PIT \; \pTRUE  \qquad \MLAMBDA{\rm-ae}
\end{displaymath}
then, if $ \PSI $ is unbiased
\begin{displaymath}
\GT = \int \PSI \; d \RPT = \int \PSI \; \pT \; d \MLAMBDA = \int \PSI \; \PIT \; \pTRUE \; d \MLAMBDA
\end{displaymath}
\subsection{The Main Problem}
With all the previous considerations we may formalize the generalization to the vector case of the Barankin formulation as:

\begin{PROBLEM}[Main Problem]     \label{prob:MAIN}
Given a function $ \FUNCG: \INDX \to \RPAR$, defined for each $ \INDX \in \SINDX $,
and a family of p.m.'s indexed by $ \INDX \in \SINDX $, that satisfy
the Hypothesis \ref{hypo:ONE}, \ref{hypo:BASIC}, and \ref{hypo:TWO},
find a function $ \PSI \in \VSPACETWOTRUE $, such that $ \int \PSI \; \PIT \; \DIFF \RPTRUE = \GT $, $ \forall \INDX \in \SINDX $.
If there are several solutions find, if possible, a solution with minimal covariance matrix at $ \INDXTRUE $.
\end{PROBLEM}
The solution to this problem is given below in Theorem \ref{theo:MAIN}.
\section {Matrix bound} \label{sec:MB}
For a vector $ \VECA $ in a finite vector space denote $ \left[ \VECA \right]_{i} $ the i-th component of the vector.
For a matrix $A$, define $ \left[ A \right]_{i} $ as the i-th column of the matrix and $ \left[ A \right]_{i,j} $ as i-th, j-th element of
the matrix. We have $ \left[ A \right]_{i,j} = \left[  \left[ A \right]_{j} \right]_{i} $. Denote $ A^{T} $ the transpose of
the matrix $A$, $ \DET  (A) $ the determinant of $ A $, and $ \TRACE \left[ A \right] $
the trace of $A$.
A square symmetric real matrix $ A \in \REALS^{ N \times N } $ is a symmetric non-negative definite (s.n.n.d.) matrix iff, $ \XVEC^{T} A \XVEC \ge 0 $, for all $ \XVEC \in \REALS^{N} $.
A real s.n.n.d. matrix $ A \in \REALS^{ N \times N } $ is a symmetric positive definite (s.p.d.) matrix if $ \DET ( A) \neq 0 $, iff $ \XVEC^{T} A \XVEC > 0 $, for all $ \XVEC \ne 0 $. 
Two s.n.n.d. matrices $ A \in \REALS^{ N \times N } $ and $ B \in \REALS^{ N \times N } $  are comparable in the L\"owner partial order,
\cite{ZHANG99} p. 166, if either $ A - B $ is s.n.n.d. and then $ A \ge B $, or if $ B - A $ is s.n.n.d. and then $ B \ge A $, else, they are not comparable. 
For $ A $, $ B $ and $ C $ s.n.n.d of dimensions $ N \times N $, then if $ A \ge B $ and $ B \ge C $ then $ A \ge C $,
and if $ A \ge B $ and $ B \ge A $, then $ A = B $, see e.g. \cite{DMC-1} Lemma 3  p. 2444.
If $ A \in \REALS^{ N \times N } $ is s.p.d., and $ S \in \REALS^{ N \times M } $ is arbitrary, such that 
$ S^{T} A S = 0 $, then $ S = 0 $, see e.g. \cite{DMC-1} Lemma 2 p. 2444.
For $ A \in \REALS^{ N \times N } $ denote the Frobenius norm as $ \NORMFROB { A } = \bigl( \TRACE ( A A^{T} ) \bigr)^{1/2} $.
 
The following lemma is a variant of the information inequality \cite{ZHANG-2005} p. 172, \cite{GORMAN} Lemma 1 p. 1288, \cite{RAO} pp. 326--328.
\begin{LEMMA}  \label{lemma:SCHUR}
Let $ ( X, {\bf X}, \MU ) $ be an arbitrary measure space. Let $ L_{2}( X, {\bf X}, \MU ) $ be the collection of
all the measurable square integrable real valued 
functions from $X$ to $\REALS$.
Let $ \DIMG, \DIMR, \DIMA \in \NAT $, $ \GAMMAVEC \in \left({\LTWO( X, {\bf X}, \MU )}\right)^{\DIMG}$, 
$ \RHOVEC \in \left({\LTWO( X, {\bf X}, \MU )}\right)^{\DIMR}$,
and $ A \in \REALS^{ \DIMA \times \DIMR } $.
Call $ \SMATF = \int \GAMMAVEC  \; \RHOVEC^{T} \; \DIFF \MU $, $ \SMATF \in \REALS^{ \DIMG \times \DIMR } $,
and $ B = \int \RHOVEC  \; \RHOVEC^{T} \; \DIFF \MU $, $ B \in \REALS^{ \DIMR \times \DIMR } $.
If $ \DET \left( A \; B \; A^{T} \right) \neq 0 $, then
$
\int \GAMMAVEC \; \GAMMAVEC^{T} \; \DIFF \MU
\ge 
 \SMATF \; A^{T} \; \left( A \; B \; A^{T} \right)^{-1}  \; A \;  \SMATF^{T}
$, with equality if and only if
there exists a matrix $ \MATRIXLAM_{0} \in \REALS^{ \DIMG \times \DIMA } $ such that
$ \GAMMAVEC =  \MATRIXLAM_{0} \; A \; \RHOVEC $ $\MU$-almost-everywhere ($\MU$-ae),
and in that case, it is $  \MATRIXLAM_{0} = \SMATF \; A^{T} \left( \; A \; B \; A^{T} \right)^{-1}$.
\end{LEMMA}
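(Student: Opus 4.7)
The plan is to use the standard ``completing the square'' device with an auxiliary matrix parameter, then read off the optimal choice and the equality case.

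First, for an arbitrary matrix $\MATRIXLAM \in \REALS^{\DIMG \times \DIMA}$, I would form the random vector $\GAMMAVEC - \MATRIXLAM \, A \, \RHOVEC$ and consider the Gram-type matrix
\[
M(\MATRIXLAM) \; = \; \int \bigl( \GAMMAVEC - \MATRIXLAM A \RHOVEC \bigr)\bigl( \GAMMAVEC - \MATRIXLAM A \RHOVEC \bigr)^{T} \DIFF \MU.
\]
This is symmetric non-negative definite for every $\MATRIXLAM$, because for any $\XVEC \in \REALS^{\DIMG}$ one has $\XVEC^{T} M(\MATRIXLAM) \XVEC = \int \bigl( \XVEC^{T}(\GAMMAVEC - \MATRIXLAM A \RHOVEC) \bigr)^{2} \DIFF \MU \ge 0$. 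Expanding and using the definitions of $\SMATF$ and $B$ yields
\[
M(\MATRIXLAM) \; = \; \int \GAMMAVEC \GAMMAVEC^{T} \DIFF \MU \; - \; \SMATF \, A^{T} \MATRIXLAM^{T} \; - \; \MATRIXLAM A \SMATF^{T} \; + \; \MATRIXLAM (A B A^{T}) \MATRIXLAM^{T}.
\]

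Next I would complete the square in $\MATRIXLAM$. Since $ABA^{T}$ is symmetric and invertible by hypothesis, set $\MATRIXLAM_{0} = \SMATF A^{T}(ABA^{T})^{-1}$ and verify by direct expansion that
\[
M(\MATRIXLAM) \; = \; \bigl(\MATRIXLAM - \MATRIXLAM_{0}\bigr)(ABA^{T})\bigl(\MATRIXLAM - \MATRIXLAM_{0}\bigr)^{T} \; + \; \int \GAMMAVEC \GAMMAVEC^{T} \DIFF \MU \; - \; \SMATF A^{T} (ABA^{T})^{-1} A \SMATF^{T}.
\]
Specializing to $\MATRIXLAM = \MATRIXLAM_{0}$ and using $M(\MATRIXLAM_{0}) \ge 0$ gives the asserted inequality.

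For the equality case I would argue both directions. If equality holds, then $M(\MATRIXLAM_{0}) = 0$, so every diagonal entry $\int \bigl[ \GAMMAVEC - \MATRIXLAM_{0} A \RHOVEC \bigr]_{i}^{2} \DIFF \MU$ vanishes, whence $\GAMMAVEC = \MATRIXLAM_{0} A \RHOVEC$ $\MU$-ae. Conversely, if there is some $\MATRIXLAM_{0}' \in \REALS^{\DIMG \times \DIMA}$ with $\GAMMAVEC = \MATRIXLAM_{0}' A \RHOVEC$ $\MU$-ae, then $\SMATF = \int \GAMMAVEC \RHOVEC^{T} \DIFF \MU = \MATRIXLAM_{0}' A B$, so $\SMATF A^{T} = \MATRIXLAM_{0}' (ABA^{T})$ and hence $\MATRIXLAM_{0}' = \SMATF A^{T}(ABA^{T})^{-1} = \MATRIXLAM_{0}$. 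A short computation then confirms $\int \GAMMAVEC \GAMMAVEC^{T} \DIFF \MU = \MATRIXLAM_{0} (ABA^{T}) \MATRIXLAM_{0}^{T} = \SMATF A^{T}(ABA^{T})^{-1} A \SMATF^{T}$, closing the equivalence and simultaneously proving uniqueness of $\MATRIXLAM_{0}$.

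The main obstacle is not conceptual but bookkeeping: keeping the transpose placements consistent when the blocks are rectangular (recall $\SMATF$ is $\DIMG \times \DIMR$, $A$ is $\DIMA \times \DIMR$, and $\MATRIXLAM_{0}$ is $\DIMG \times \DIMA$), and checking that the completed-square identity truly returns the cross terms $-\SMATF A^{T} \MATRIXLAM^{T} - \MATRIXLAM A \SMATF^{T}$. Beyond that, the argument is a direct matrix-level Cauchy--Schwarz inequality.
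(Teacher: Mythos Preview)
Your proposal is correct and follows essentially the same approach as the paper's proof: both introduce the s.n.n.d.\ matrix $M(\MATRIXLAM)=\int(\GAMMAVEC-\MATRIXLAM A\RHOVEC)(\GAMMAVEC-\MATRIXLAM A\RHOVEC)^{T}\DIFF\MU$, expand it, and specialize to $\MATRIXLAM_{0}=\SMATF A^{T}(ABA^{T})^{-1}$, with the equality case handled identically via $\TRACE M(\MATRIXLAM_{0})=0$ and the computation $\SMATF=\MATRIXLAM_{0}'AB$. Your explicit completed-square identity is a slight elaboration the paper omits (it simply substitutes $\MATRIXLAM_{0}$), but the route is the same.
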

\begin{proof}
For each $ \MATRIXLAM \in \REALS^{ \DIMG \times \DIMA }$, let $ \MATRIXM ( \MATRIXLAM ) = 
\int \left( \GAMMAVEC -  \MATRIXLAM \, A  \, \RHOVEC \right) \left( \GAMMAVEC -  \MATRIXLAM \, A \, \RHOVEC \right)^{T} \, \DIFF \MU $, 
$ \MATRIXM ( \MATRIXLAM ) \in \REALS^{ \DIMG \times \DIMG }$.
Then $ \MATRIXM ( \MATRIXLAM ) $ is s.n.n.d. for all $ \MATRIXLAM \in \REALS^{ \DIMG \times \DIMA }$.
We have $  \MATRIXM ( \MATRIXLAM ) =
\int \GAMMAVEC \; \GAMMAVEC^{T} \; \DIFF \MU
-
\SMATF  \; A^{T} \; \MATRIXLAM^{T}
-
\MATRIXLAM \; A \;  \SMATF^{T} \; +
\MATRIXLAM \; A \; B \; A^{T} \; \MATRIXLAM^{T}
$. 
By assumption $ \DET \left( A \; B \; A^{T} \right) \neq 0 $, so that the matrix $ A \; B \; A^{T} $  is invertible.
Define $ \MATRIXLAM_{0} = \SMATF \; A^{T} \; \left( A \; B \; A^{T} \right)^{-1} $,  
then, 
$
\MATRIXM ( \MATRIXLAM_{0} ) = 
\int \GAMMAVEC \; \GAMMAVEC^{T} \; \DIFF \MU
-
 \SMATF  \; A^{T} \; \left( A \; B \; A^{T} \right)^{-1}  \; A \;  \SMATF^{T}
\ge
0
$
so that
$
\int \GAMMAVEC \; \GAMMAVEC^{T} \; \DIFF \MU
\ge 
 \SMATF  \; A^{T} \; \left( A \; B \; A^{T} \right)^{-1}  \; A \;  \SMATF^{T}
$,
and there is equality iff $ \MATRIXM ( \MATRIXLAM_{0} )  = 0 $. 
From the definition of
$ \MATRIXM ( \MATRIXLAM )$, if there exists $ \MATRIXLAM^{\star} \in \REALS^{ \DIMG \times \DIMA }$ such that
$ \GAMMAVEC =  \MATRIXLAM^{\star} \; A \; \RHOVEC $ $\MU$-ae, then
$ \MATRIXM ( \MATRIXLAM^{\star} ) = 0 $.
In that case it will be
$ \int \GAMMAVEC \; \RHOVEC^{T} \DIFF \MU =  \MATRIXLAM^{\star} \; A \; \int \RHOVEC \; \RHOVEC^{T} \DIFF \MU
$, so that $ \SMATF = \MATRIXLAM^{\star} \; A \; B$, and then $ \SMATF \; A^{T} = \MATRIXLAM^{\star} \; A \; B \; A^{T}$.
Since by hypothesis  $ A \; B \; A^{T} $ is invertible, then $  \MATRIXLAM^{\star} = \SMATF \; A^{T} \left( \; A \; B \; A^{T} \right)^{-1} = \MATRIXLAM_{0} $
so that $ M ( \MATRIXLAM_{0} ) = M ( \MATRIXLAM^{\star} ) = 0 $, and then we obtain the equality.
Conversely, if 
$
\int \GAMMAVEC \; \GAMMAVEC^{T} \; \DIFF \MU
= 
\SMATF \; A^{T} \; \left( A \; B \; A^{T} \right)^{-1}  \; A \; \SMATF^{T}
$,
take $ \MATRIXLAM_{0} \in \REALS^{ \DIMG \times \DIMA } $, as
$ 
\MATRIXLAM_{0} = \SMATF \; A^{T} \; \left( A \; B \; A^{T} \right)^{-1}
$, so that by the definition of $ \MATRIXM ( \MATRIXLAM )$ it results
$ \MATRIXM ( \MATRIXLAM_{0} ) = 
\int \left( \GAMMAVEC -  \MATRIXLAM_{0} \; A \; \RHOVEC \right) \left( \GAMMAVEC -  \MATRIXLAM_{0} \; A \; \RHOVEC \right)^{T} \; \DIFF \MU
=
\int \GAMMAVEC \; \GAMMAVEC^{T} \; \DIFF \MU
-
\SMATF \; A^{T} \; \left( A \; B \; A^{T} \right)^{-1}  \; A \; \SMATF^{T}
=
0
$. Hence $ \TRACE \left( \MATRIXM ( \MATRIXLAM_{0} ) \right) = 
\TRACE \left( 
\displaystyle\int \left( \GAMMAVEC -  \MATRIXLAM_{0} \; A \; \RHOVEC \right) \left( \GAMMAVEC -  \MATRIXLAM_{0} \; A \; \RHOVEC \right)^{T} \; \DIFF \MU
\right) =$\break
$
\sum_{i = 1}^{\DIMG}
\displaystyle\int \left[ \GAMMAVEC -  \MATRIXLAM_{0} \; A \; \RHOVEC \right]_{i}^{2} \; \DIFF \MU
= 0
$,
and  then
$
\GAMMAVEC =  
\MATRIXLAM_{0} \; A \; \RHOVEC
$ $\MU$-ae.
\end{proof}
The following definition specifies all the elements required in the proposed linear matrix inequality (LMI) generalized Barankin bound.
\begin{DEFINITION} \label{def:CONDA}
Given arbitrary $ \DIMM \in \NAT$ and $ \DIMA \in \NAT$, an arbitrary real matrix $A$
of dimensions $ \DIMA \times \DIMM$, $A \in \REALS^{ \DIMA \times \DIMM }$,
and arbitrary indexes $ \INDX_{i} \in \SINDX $, for $ 1 \le i \le \DIMM$, 
define $ \TAO^{T} = ( \INDX_{1}, \INDX_{2}, \ldots, \INDX_{\DIMM})  $, $ \TAO \in {\SINDX}^{\DIMM} $,
and define the quad-tuple $ \QUADQ $, as $  \QUADQ = \left( \DIMM, \DIMA, A, \TAO \right) $.
Define $ \HT = \GT - \GTRUE $.
\\
\noindent
Define $ \VECB^{T}(\TAO) = ( \PI (\INDX_{1}), \PI(\INDX_{2}), \ldots, \PI(\INDX_{\DIMM}))  $,
i.e. $ \VECB(\TAO) \in {\THREEBETA}^{\DIMM} $,
define the $ \DIMPAR \times \DIMM$ real matrix $ G(\TAO)$ as 
$ G(\TAO) = \Bigl( \FUNCG(\INDX_{1}) - \GTRUE, \; \FUNCG(\INDX_{2}) - \GTRUE, \ldots, \; \FUNCG(\INDX_{\DIMM}) - \GTRUE  \Bigr) =
\Bigl( \FUNCH(\INDX_{1}), \; \FUNCH(\INDX_{2}), \ldots, \; \FUNCH(\INDX_{\DIMM}) \Bigr)$, 
and define the $ \DIMM \times \DIMM$ real matrix $ B(\TAO) $ as $ B(\TAO) = \EXP \left[ \VECB(\TAO) \; \VECB^{T}(\TAO) \right]$.
\\
\noindent
Define $ \CONDA $ as the collection of all the quad-tuples 
$ \QUADQ $ with $ \DET \left( A \; B(\TAO) \; A^{T}) \right) \neq 0  $., i.e.
\begin{align}
\nonumber
 \CONDA = \left\{ \QUADQ : \forall  \; \DIMM \in \NAT, \forall  \; \DIMA \in \NAT, \forall \; A \in \REALS^{ \DIMA \times \DIMM }, 
\right.
\\
\nonumber
\left.
\forall \; \TAO \in {\SINDX}^{\DIMM} , \; 
{\rm with} \; \DET \left( A \; B(\TAO) \; A^{T}) \right)  \neq 0   \right\}
\end{align}.
\end{DEFINITION}

\begin{DEFINITION} \label{def:COLLBOUNDA}
Call $ \UG$ the family of all the finite covariance at $ \INDXTRUE $ unbiased estimators of $\GT$, for all $\INDX \in \SINDX$,
for Problem \ref{prob:MAIN}.
Define $ \WA $ as the collection  of matrices of the form:
\begin{displaymath}
W ( \QUADQ ) = G(\TAO) \; A^{T} \; \left(A \; B(\TAO) \; A^{T}  \right)^{-1} \; A \; G^{T}(\TAO) \qquad \forall \QUADQ \in \CONDA 
\end{displaymath}
i.e. $\forall \DIMM \in \NAT$,  $ \forall \DIMA \in \NAT$, 
$ \forall A \in \REALS^{\DIMA \times \DIMM}$, $ \forall \TAO \in {\SINDX}^{\DIMM} $, 
with $ \DET \left( A \; B(\TAO) \; A^{T}) \right) \neq 0 $, with $G(\TAO)$ and  $B(\TAO)$ as in Definition \ref{def:CONDA}.
Hence $ \WA = \left\{ W ( \QUADQ ) : \QUADQ \in \CONDA \right\} $.
The matrices $ W ( \QUADQ ) $ will be called the Barankin covariance lower bound matrices for Problem \ref{prob:MAIN}.
\\
\noindent
Let $ S ( \THREEBETA ) $ be the linear span of $ \THREEBETA $, i.e.
$ S(\THREEBETA) = \{ u \in \SPACELTWOTRUE : u = \sum_{i=1}^{\DIMM} \ASUBI \; \PISUBI \; \WP1, 
\; \forall{\DIMM}\in \NAT, \forall \; \ASUBI \in \REALS \; {\rm for} \; 1 \le i \le \DIMM, \; \forall \; \PISUBI \in \THREEBETA \; {\rm for} \; 1 \le i \le{\DIMM}\} $.
\end{DEFINITION}
The following theorem gives the first half of the Barankin vector bound.
\begin{THEOREM} \label{theo:ONE}
If for Problem \ref{prob:MAIN} there exists a finite covariance at $ \INDXTRUE $ unbiased estimator $\PSI(\SAMP) \in \UG $  for $\GT$, $\forall \INDX \in \SINDX$, then, 
see Definition \ref{def:CONDA},
\begin{equation} \label{eq:THEONE}
\COVTRUE ( \PSI )  \ge G (\TAO)  A^{T} \left( A  B (\TAO)  A^{T} \right)^{-1}  A  G^{T}(\TAO) \; \; \forall \QUADQ \in \CONDA
\end{equation}
i.e. (\ref{eq:THEONE}) is true for the set of conditions $\CONDA$: $\forall \; \DIMM \in \NAT$,  $ \forall \DIMA \; \in \NAT$, 
$ \forall \; A \in \REALS^{\DIMA \times \DIMM}$, $ \forall \; \TAO \in {\SINDX}^{\DIMM} $, with $ \DET \left( A \; B(\TAO) \; A^{T} \right) \neq 0 $.
There is equality in (\ref{eq:THEONE}) for some $ \PSI^{*} \in \UG $ and some $ \QUADQ^{*} = \left( \DIMM^{*}, \DIMA^{*}, A^{*}, \TAO^{*} \right) $,
$ \QUADQ^{*} \in \CONDA $, if and only if
there 
exists a matrix $ \MATRIXLAM^{*} \in \REALS^{ \DIMPAR \times \DIMA } $ such that
$ \FI^{*} = \PSI^{*} - \GTRUE =  \MATRIXLAM^{*} \; A^{*} \; \VECB ( \TAO^{*} ) $ w.p. 1, if and only 
if each component $ [\FI^{*}]_{i} $ is a linear combination of elements 
in $ \THREEBETA $ w.p. 1 for $ 1 \le i \le  \DIMPAR $, i.e. $ \FI^{*} = \PSI^{*} - \GTRUE \in \bigr( S ( \THREEBETA ) \bigl)^{\DIMPAR} $,
see Definition \ref{def:COLLBOUNDA}.
\end{THEOREM}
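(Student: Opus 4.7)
The plan is to derive Theorem \ref{theo:ONE} as a direct specialization of Lemma \ref{lemma:SCHUR} on the probability space $(\RN, \BN, \RPTRUE)$. Fix any $\QUADQ = (\DIMM, \DIMA, A, \TAO) \in \CONDA$; I will set $\MU = \RPTRUE$, $\GAMMAVEC = \FI = \PSI - \GTRUE$, and $\RHOVEC = \VECB(\TAO)$ in the lemma. Since $\PSI \in \UG \subseteq \VSPACETWOTRUE$, each component of $\FI$ belongs to $\SPACELTWOTRUE$, and by Hypothesis \ref{hypo:TWO} each component of $\VECB(\TAO)$ does as well, so the hypotheses of the lemma are met.

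First I would identify the matrices $\SMATF$ and $B$ of Lemma \ref{lemma:SCHUR} with the objects of Definition \ref{def:CONDA}. The $i$-th column of $\SMATF = \int \FI \, \VECB^T(\TAO) \, \DIFF \RPTRUE$ is $\int \FI \, \PI(\TSUBI) \, \DIFF \RPTRUE$, which by the centered unbiasedness identity (\ref{eq:UNBIASED}) equals $\HT(\TSUBI) = \FUNCH(\TSUBI)$; hence $\SMATF = G(\TAO)$. The Gram matrix of $\RHOVEC$ with respect to $\RPTRUE$ is $B(\TAO)$ by definition, and $\QUADQ \in \CONDA$ is precisely the assumption $\DET(A \, B(\TAO) \, A^T) \neq 0$. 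Lemma \ref{lemma:SCHUR} then yields $\COVTRUE(\PSI) = \int \FI \FI^T \DIFF \RPTRUE \ge G(\TAO) A^T (A B(\TAO) A^T)^{-1} A G^T(\TAO)$, which is exactly (\ref{eq:THEONE}); arbitrariness of $\QUADQ$ in $\CONDA$ concludes the inequality.

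For the equality clause, the first stated equivalence---``equality attained at some $(\PSI^*, \QUADQ^*)$ iff $\FI^* = \MATRIXLAM^* A^* \VECB(\TAO^*)$ w.p. 1''---is the direct translation of the equality statement in Lemma \ref{lemma:SCHUR} (the passage between ``$\RPTRUE$-a.e.'' and ``w.p. 1 under $\SOMPTRUE$'' uses the induced-measure identity $\RPTRUE(B) = \SOMPTRUE(\SAMP^{-1}(B))$). The implication from this linear representation to ``each $[\FI^*]_i$ is a linear combination of elements of $\THREEBETA$'' is immediate: set $C = \MATRIXLAM^* A^* \in \REALS^{\DIMPAR \times \DIMM^*}$ and read off the rows.

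The main obstacle is the reverse implication: given $\FI^*$ whose components are each a linear combination of elements of $\THREEBETA$ w.p. 1, produce $\QUADQ^* \in \CONDA$ and $\MATRIXLAM^*$ realizing the matrix form, for which one needs $\DET(A^* B(\TAO^*) A^{*T}) \neq 0$. I would collect all the indices appearing in the representations of the $[\FI^*]_i$ into a tuple, then prune to a maximal sub-tuple $\TAO^*$ whose associated $\{\PI(\TSUBI^*_j)\}$ are linearly independent in $\SPACELTWOTRUE$; each discarded entry is eliminated by substituting its $L_{2}$ dependence relation into the coefficient matrix. After this reduction $B(\TAO^*)$ is the Gram matrix of linearly independent $L_{2}$ functions, hence symmetric positive definite, so one may take $A^* = I_{\DIMM^*}$ and let $\MATRIXLAM^*$ be the reduced coefficient matrix. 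This produces the required $\QUADQ^* \in \CONDA$ and closes both equivalences.
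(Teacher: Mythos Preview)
Your proposal is correct and follows essentially the same route as the paper: both apply Lemma \ref{lemma:SCHUR} with $\GAMMAVEC=\FI$, $\RHOVEC=\VECB(\TAO)$, identify $\SMATF=G(\TAO)$ via (\ref{eq:UNBIASED}), and for the converse of the second equivalence stack the component representations, prune to an $L_{2}$-independent sub-tuple $\TAO^{*}$ so that $B(\TAO^{*})$ is nonsingular, and take $A^{*}=I$. The paper additionally verifies $\COVTRUE(\PSI^{*})=W(\QUADQ_{\PLOP})$ by direct computation, but this is already implied by the equality clause of Lemma \ref{lemma:SCHUR} that you invoke.
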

\begin{proof}
The proof will follow from Lemma \ref{lemma:SCHUR}. Let $ \PSI \in \UG $ be an arbitrary finite covariance at $ \INDXTRUE $ unbiased estimator for
Problem \ref{prob:MAIN}. 
Take an arbitrary $ \DIMM \in \NAT $, 
and an arbitrary $ \TAO \in \SINDX^{\DIMM}$, see Definition \ref{def:CONDA}.
Since  $ \PSI $ is unbiased, see (\ref{eq:UNBIASED}),
\begin{align}
\nonumber
\EXPTRUE & \left[ \: \FI \; \VECB^{T}(\TAO)   \right] 
=  
\Bigl(  \EXPTRUE \left[ \: \FI \; \PI( \SINDX_{1} ) \right], \ldots, \EXPTRUE \left[ \; \FI \; \PI( \SINDX_{\DIMM} ) \right] \Bigr) 
\\
\nonumber
& = 
\Bigl(   \int \FI \; \PI( \SINDX_{1} ) \; \DIFF \RPTRUE, \ldots, \int \FI \;  \PI( \SINDX_{\DIMM} ) \; \DIFF \RPTRUE   \Bigr) 
\\   
\nonumber
& =
\left(   \int \FI \; \DIFF \RPTONE, \int \FI \; \DIFF \RPTTWO, \ldots,
\int \FI \; \DIFF \RPBB_{\INDX_{\DIMM}}   \right) 
\\   
\nonumber
& =
\Bigl( \FUNCH(\INDX_{1}), \FUNCH(\INDX_{2}), \ldots, \FUNCH(\INDX_{\DIMM})  \Bigr)
= G(\TAO)
\end{align}
then,
$ G(\TAO) = \EXPTRUE \left[  \FI \; \VECB^{T}(\TAO)   \right] = \EXPTRUE \left[ ( \PSI - \GTRUE ) \; \VECB^{T}(\TAO)   \right]   $, 
see Definition \ref{def:CONDA},
and this is true for any unbiased estimator $ \PSI \in \UG $.
Additionally, we have,\break
$ \int A \; \VECB(\TAO) \; \VECB^{T}(\TAO) \; A^{T} \; \DIFF \RPTRUE  = A \; B(\TAO) \; A^{T} $, 
see Definition \ref{def:CONDA}. 
Take an arbitrary $ \DIMA \in \NAT $ and a matrix $ A \in \REALS^{ \DIMA \times \DIMM } $ such that $ \DET ( A \; B(\TAO) A^{T} ) \neq 0 $
otherwise arbitrary.
Then the result follows from Lemma \ref{lemma:SCHUR} with $ \GAMMAVEC = \FI $, $ \RHOVEC = \VECB ( \TAO ) $, $ \SMATF = G(\TAO) $, and $ B = B(\TAO) $.
The first if and only if equality condition follows directly from Lemma \ref{lemma:SCHUR}.
As for the second equality condition, if there is equality in (\ref{eq:THEONE})
for some $ \PSI^{*} \in \UG $ and some $ \QUADQ^{*} = ( \DIMM^{*}, \DIMA^{*}, A^{*}, \TAO^{*} ) \in \CONDA $,
then from Lemma \ref{lemma:SCHUR}, there  exists
$ \MATRIXLAM^{*} \in \REALS^{ \DIMPAR \times \DIMA^{*} } $ such that
$ \FI^{*} = \PSI^{*} - \GTRUE =  \MATRIXLAM^{*} \; A^{*} \; \VECB ( \TAO^{*} ) $ w.p. 1. Since $ \VECB ( \TAO^{*} ) \in \THREEBETA^{ \DIMM^{*} } $,
then each component $ \left[ \FI^{*} \right]_{i} $ is a linear combination w.p. 1 of elements in $ \THREEBETA $, for $ 1 \le i \le  \DIMPAR $,
i.e. $ \FI^{*} = \PSI^{*} - \GTRUE \in \bigr( S ( \THREEBETA ) \bigl)^{\DIMPAR} $.
Conversely, suppose that $ \PSI^{*} \in \UG $, with $ \FI^{*} = \PSI^{*} - \GTRUE $, and that
each component $ \left[ \: \FI^{*} \right]_{i} $ is a linear combination w.p. 1 of elements in $ \THREEBETA$, i.e. $ \FI^{*} = \PSI^{*} - \GTRUE \in \bigr( S ( \THREEBETA ) \bigl)^{\DIMPAR} $.
Since   each $ \left[ \: \FI^{*} \right]_{i} \in S ( \THREEBETA ) $ w.p. 1, then, there exist $ M_{i} \in \NAT $, $ \AVEC_{i} \in \REALS^{  M_{i} } $, and
$ \TAO_{i} \in \SINDX^{M_{i}} $, such that $ \left[ \: \FI^{*} \right]_{i} = \AVEC_{i}^{T} \VECB ( \TAO_{i} ) $ w.p. 1 for $ 1 \le i \le \DIMPAR $.
Define $ M_{\FLOP} = \sum_{i=1}^{\DIMPAR} M_{i} $, and $ \TAO^{T}_{\FLOP} = \left( \TAO_{1}^{T}, \TAO_{2}^{T}, \ldots, \TAO_{\DIMPAR}^{T} \right) $,
$ \TAO_{\FLOP} \in {\SINDX}^{M_{\FLOP}} $. 
Call $ \VECB_{\FLOP} = \VECB ( \TAO_{\FLOP} ) $, $ \VECB_{\FLOP} \in {\THREEBETA}^{M_{\FLOP}} $.
Define the real matrix $ A_{\FLOP} \in \REALS^{ \DIMPAR \times M_{\FLOP}} $, as the block-diagonal matrix
$  A_{\FLOP} = \DIAG \left( \AVEC_{1}^{T}, \AVEC_{2}^{T}, \ldots, \AVEC_{\DIMPAR}^{T} \right) $, 
where each block $ \AVEC_{i}^{T} $ is of dimension $ 1 \times \ M_{i}$, for $ 1 \le i \le M_{i} $,
so that $ \FI^{*}  =  A_{\FLOP} \;  \VECB_{\FLOP} $.
Starting  with the second component of $  \VECB_{\FLOP} $, see Observation \ref{obs:THREEBETA}, delete the i-th component if it is a linear combination w.p. 1 of the previous components. There will remain $ M_{\PLOP} \in \NAT $ elements,  with $ 1 \le M_{\PLOP} \le M_{\FLOP} $, 
see Observation \ref{obs:THREEBETA}. 
Call $ \TAO_{\PLOP} \in \SINDX^{ M_{\PLOP} } $ the non-deleted indexes of the previous elimination procedure. 
Call $ \VECB_{\PLOP} = \VECB ( \TAO_{\PLOP} ) $, $ \VECB_{\PLOP} \in \THREEBETA^{ M_{\PLOP} } $, so that the components of $\VECB_{\PLOP} $ are linearly independent w.p. 1. Then, there exists a real matrix $ A_{\PLOP} \in \REALS^{ M_{\FLOP} \times M_{\PLOP} } $ such that
$   \VECB_{\FLOP} = A_{\PLOP} \; \VECB_{\PLOP} $ w.p. 1, and then $ \FI^{*}  =  A_{\FLOP} \;  A_{\PLOP} \; \VECB_{\PLOP} $ w.p. 1.
Define the quad-tuple $ \QUADQ_{\PLOP} = \left( M_{\PLOP}, M_{\PLOP}, I_{\PLOP}, \TAO_{\PLOP} \right) $, where $ I_{\PLOP} $ is the identity matrix of dimensions $ M_{\PLOP} \times M_{\PLOP} $. 
Call $ B_{\PLOP} = B( \TAO_{\PLOP} ) = \EXPTRUE \left[ \VECB_{\PLOP} \; \VECB_{\PLOP}^{T}  \right] $, 
$ B_{\PLOP} \in \REALS^{  M_{\PLOP} \times M_{\PLOP} } $, so that $ \DET ( B_{\PLOP} ) \neq 0 $. If not, there would exist
$ \ALFAVEC \in \REALS^{M_{\PLOP}}$, with $ \ALFAVEC \neq 0 $, such that $ \ALFAVEC^{T} B_{\PLOP} \ALFAVEC = 0 $, but
$ \ALFAVEC^{T} B_{\PLOP} \ALFAVEC = \ALFAVEC^{T} \EXPTRUE \left[ \VECB_{\PLOP} \; \VECB_{\PLOP}^{T}  \right] \ALFAVEC = 
\EXPTRUE \left[ \ALFAVEC^{T} \VECB_{\PLOP} \; \VECB_{\PLOP}^{T}  \ALFAVEC \right] =
\EXPTRUE \Bigl[ \left(\ALFAVEC^{T} \VECB_{\PLOP} \right)^{2} \Bigr] =
\NORMLTWO { \ALFAVEC^{T} \VECB_{\PLOP} }
$, and then it would be $ \NORMLTWO { \ALFAVEC^{T} \VECB_{\PLOP} } = 0 $, which is a contradiction since the components of $ \VECB_{\PLOP} $
are linearly independent w.p. 1. Hence,
$ \DET ( I_{\PLOP} B_{\PLOP} I_{\PLOP}^{T} ) = \DET ( B_{\PLOP} ) \neq 0 $, so that $ \QUADQ_{\PLOP}  \in \CONDA $.
Since $ G ( \TAO_{\PLOP} ) = \EXPTRUE \left[ \FI^{*} \; \VECB_{\PLOP}^{T} \right] = 
\EXPTRUE \Bigl[ A_{\FLOP} \;  A_{\PLOP} \; \VECB_{\PLOP} \; \VECB_{\PLOP}^{T} \Bigr] =
A_{\FLOP} \;  A_{\PLOP} \; \EXPTRUE \Bigl( \VECB_{\PLOP} \; \VECB_{\PLOP}^{T} \Bigr) =
A_{\FLOP} \;  A_{\PLOP} \; B_{\PLOP} 
$, 
then 
$
W ( \QUADQ_{\PLOP} )
= 
G( \TAO_{\PLOP} ) \; B_{\PLOP}^{-1} \; G^{T}( \TAO_{\PLOP} ) = 
A_{\FLOP} \;  A_{\PLOP} \; B_{\PLOP} \; B_{\PLOP}^{-1} \; B_{\PLOP} \; A^{T}_{\PLOP} \; A^{T}_{\FLOP} =
A_{\FLOP} \;  A_{\PLOP} \; B_{\PLOP} \; A^{T}_{\PLOP} \; A^{T}_{\FLOP} 
$.\XBREAK 
But $ \COVTRUE ( \PSI^{*} ) = \EXPTRUE \left[ \FI^{*} \; (\FI^{*})^{T}  \right]
=
\EXPTRUE \Bigl[ A_{\FLOP} \;  A_{\PLOP} \; \VECB_{\PLOP} \left( A_{\FLOP} \;  A_{\PLOP} \; \VECB_{\PLOP} \right)^{T} \Bigr]
=\XBREAK
A_{\FLOP} \,  A_{\PLOP} \; \EXPTRUE \Bigl[ \VECB_{\PLOP} \, \VECB^{T}_{\PLOP} \Bigr]  A^{T}_{\PLOP} \, A^{T}_{\FLOP}
=
A_{\FLOP} \;  A_{\PLOP} \; B_{\PLOP} \; A^{T}_{\PLOP} \; A^{T}_{\FLOP}
$, and then $ \COVTRUE ( \PSI^{*} ) = W ( \QUADQ_{\PLOP} ) $.
\end{proof}
The converse of this theorem, 
is given in Theorem \ref{theo:MAIN}, see Section \ref{subsec:MAIN}.
\begin{OBSERVATION}   \label{obs:MATG}
The previous proof shows that if $ \PSI \in \UG $ is a finite covariance 
\ifundefined{DRAFT}
at $ \INDXTRUE $
\else
\fi
unbiased estimator for Problem \ref{prob:MAIN}, then
$
G(\TAO) = \Bigl( \FUNCH(\INDX_{1}), \FUNCH(\INDX_{2}), \cdots, \FUNCH(\INDX_{\DIMM})  \Bigr) = \EXPTRUE \left[  \FI \; \VECB^{T}(\TAO)   \right] 
$, so that the value of\break
$ \EXPTRUE \left[  \FI \; \VECB^{T}(\TAO)   \right]  $ is independent of the estimator $ \PSI \in \UG $ as a consequence of the unbiasedness of $ \PSI $, see (\ref{eq:UNBIASED}).
\end{OBSERVATION}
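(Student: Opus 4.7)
The plan is to verify the displayed identity by a column-wise expansion of the matrix product $\FI \, \VECB^{T}(\TAO)$, and then to invoke the centered unbiasedness relation (\ref{eq:UNBIASED}) column by column. Since $\FI$ is a $\DIMPAR \times 1$ vector and $\VECB^{T}(\TAO) = \bigl( \PI(\INDX_{1}), \PI(\INDX_{2}), \ldots, \PI(\INDX_{\DIMM}) \bigr)$ is $1 \times \DIMM$, the product $\FI \, \VECB^{T}(\TAO)$ is a $\DIMPAR \times \DIMM$ matrix whose $i$-th column equals $\FI \, \PI(\INDX_{i})$. Since $\PSI \in \UG$ has components in $\SPACELTWOTRUE$ and each $\PI(\INDX_{i}) \in \SPACELTWOTRUE$ by Hypothesis \ref{hypo:TWO}, the Cauchy-Schwarz inequality guarantees that every entry of $\FI \, \VECB^{T}(\TAO)$ is integrable with respect to $\RPTRUE$, so I may exchange the expectation with the column extraction and write $\bigl[\EXPTRUE[\FI \, \VECB^{T}(\TAO)]\bigr]_{i} = \EXPTRUE[\FI \, \PI(\INDX_{i})]$ for each $1 \le i \le \DIMM$.

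Next, because $\PSI \in \UG$ is an unbiased estimator for Problem \ref{prob:MAIN}, the centered unbiasedness identity (\ref{eq:UNBIASED}) applies at every index $\INDX \in \SINDX$ and in particular at each $\INDX_{i}$ appearing in $\TAO$. Thus $\EXPTRUE[\FI \, \PI(\INDX_{i})] = \HT(\INDX_{i}) = \FUNCG(\INDX_{i}) - \GTRUE$ for $1 \le i \le \DIMM$. Assembling these $\DIMM$ columns, I obtain
\begin{displaymath}
\EXPTRUE\bigl[\FI \, \VECB^{T}(\TAO)\bigr] = \Bigl( \FUNCH(\INDX_{1}), \FUNCH(\INDX_{2}), \ldots, \FUNCH(\INDX_{\DIMM}) \Bigr),
\end{displaymath}
which coincides exactly with the definition of $G(\TAO)$ given in Definition \ref{def:CONDA}. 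This establishes the claimed equality.

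Estimator-independence is then an immediate consequence: the right-hand side $G(\TAO)$ is built solely out of $\FUNCG$ evaluated at the indices stored in $\TAO$ together with the reference vector $\GTRUE$, and contains no reference whatsoever to the particular $\PSI$. Hence for any two estimators $\PSI_{1}, \PSI_{2} \in \UG$, with centered versions $\FI_{1} = \PSI_{1} - \GTRUE$ and $\FI_{2} = \PSI_{2} - \GTRUE$, one has $\EXPTRUE[\FI_{1} \, \VECB^{T}(\TAO)] = G(\TAO) = \EXPTRUE[\FI_{2} \, \VECB^{T}(\TAO)]$, so the expectation depends only on the problem data $(\FUNCG, \COLP, \TAO)$.

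There is no real obstacle here: the content of the observation is essentially a restatement of (\ref{eq:UNBIASED}) packaged in matrix form, and the only small technical point is the justification of integrability of the entries of $\FI \, \VECB^{T}(\TAO)$ so that the expectation may be computed componentwise, which is handled at once by Cauchy-Schwarz using $\PSI \in \VSPACETWOTRUE$ and $\THREEBETA \subseteq \SPACELTWOTRUE$.
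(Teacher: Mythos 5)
Your proposal is correct and takes essentially the same route as the paper: the identity is established inside the proof of Theorem \ref{theo:ONE} by exactly your column-wise expansion of $ \EXPTRUE \left[ \FI \; \VECB^{T}(\TAO) \right] $, applying (\ref{eq:UNBIASED}) at each index $ \INDX_{i} $ of $ \TAO $, with integrability of $ \FI \, \PI(\INDX_{i}) $ already secured by the Cauchy--Schwarz discussion preceding (\ref{eq:UNBIASED}). The only cosmetic point is that your $ \FUNCH(\INDX)(\INDX_{i}) $-style notation should simply read $ \FUNCH(\INDX_{i}) = \FUNCG(\INDX_{i}) - \FUNCG(\INDXTRUE) $; the content is identical.
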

\begin{OBSERVATION} \label{obs:PARORD}
Theorem \ref{theo:ONE} shows that any other finite covariance at $ \INDXTRUE $ unbiased estimator will satisfy (\ref{eq:THEONE}).
Then,
the covariance matrix of any unbiased estimator in $ \UG $ is comparable, in the L\"owner partial order, with any of the matrices in $ \WA $.
Hence:
\begin{displaymath}
\COVTRUE ( \PSI ) \ge  W \qquad \forall \; \PSI \in \UG \;  {\tt and } \;  \forall \; W \in \WA
\end{displaymath}
with equality if and only if $ \FI = \PSI - \GTRUE \in \bigr( S ( \THREEBETA ) \bigl)^{\DIMPAR} $. The covariance matrices of estimators in $ \UG $ need not be comparable
between them, as well as, Barankin bound matrices in $ \WA $ need not be comparable between them. 
\end{OBSERVATION}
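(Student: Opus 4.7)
The plan is to derive each assertion of the Observation directly from Theorem \ref{theo:ONE} and from general facts about the L\"owner partial order. First, I would unwind the quantifiers. Fix an arbitrary $ \PSI \in \UG $ and an arbitrary $ W \in \WA $. By Definition \ref{def:COLLBOUNDA}, there exists $ \QUADQ = (\DIMM, \DIMA, A, \TAO) \in \CONDA $ with $ W = W(\QUADQ) = G(\TAO) \, A^{T} \, \bigl( A \, B(\TAO) \, A^{T} \bigr)^{-1} A \, G^{T}(\TAO) $. Theorem \ref{theo:ONE}, applied to this $ \PSI $ and this $ \QUADQ $, yields $ \COVTRUE(\PSI) \ge W(\QUADQ) = W $. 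Since $ \PSI $ and $ W $ were arbitrary, the inequality $ \COVTRUE(\PSI) \ge W $ holds for every $ \PSI \in \UG $ and every $ W \in \WA $, which already proves the comparability claim (comparability being just the existence of an inequality in the L\"owner sense).

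Next I would address the equality condition. By the equality clause of Theorem \ref{theo:ONE}, equality $ \COVTRUE(\PSI) = W(\QUADQ) $ holds for some $ \PSI \in \UG $ and some $ \QUADQ \in \CONDA $ if and only if $ \FI = \PSI - \GTRUE \in \bigl( S(\THREEBETA) \bigr)^{\DIMPAR} $. The forward direction is immediate by specializing to that $ \QUADQ $. For the converse, the second half of the proof of Theorem \ref{theo:ONE} exhibits an explicit $ \QUADQ_{\PLOP} \in \CONDA $ such that $ \COVTRUE(\PSI) = W(\QUADQ_{\PLOP}) $ whenever $ \FI \in \bigl( S(\THREEBETA) \bigr)^{\DIMPAR} $, so equality is indeed attained at some element of $ \WA $.

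Finally, for the two non-comparability assertions, my approach is simply to recall that the L\"owner order on symmetric non-negative definite matrices is a genuine partial order, not a total one: for example $ \DIAG(1,0) $ and $ \DIAG(0,1) $ are both s.n.n.d.\ but neither dominates the other. To realize this concretely inside $ \WA $, I would pick two quad-tuples $ \QUADQ_{1}, \QUADQ_{2} \in \CONDA $ whose associated rank-one factors $ G(\TAO_{k}) A_{k}^{T} $ point along essentially orthogonal directions in $ \RPAR $; for instance, take two singleton indexes $ \INDX_{1}, \INDX_{2} $ and scalar $ A $ with $ \FUNCH(\INDX_{1}) $ proportional to the first coordinate vector and $ \FUNCH(\INDX_{2}) $ proportional to the second, so that $ W(\QUADQ_{1}) $ and $ W(\QUADQ_{2}) $ are (up to scaling) $ \DIAG(1,0) $ and $ \DIAG(0,1) $. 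For covariance matrices of unbiased estimators in $ \UG $, the same construction can be lifted via the equality case: any two estimators with $ \FI_{k} = \PSI_{k} - \GTRUE $ in $ \bigl( S(\THREEBETA) \bigr)^{\DIMPAR} $ realizing these two bounds exactly will have non-comparable covariance matrices.

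The main obstacle is not the inequality or the equality characterization — those fall out of Theorem \ref{theo:ONE} almost verbatim — but making the non-comparability examples independent of specific modeling assumptions. The cleanest route is to stay abstract: note that $ \CONDA $ contains tuples indexed by arbitrary finite families in $ \SINDX $ and arbitrary compatible matrices $ A $, so by Hypothesis \ref{hypo:BASIC} item \ref{it:NONC} (and the possibility of enlarging $ \INDX $-sets) one has enough freedom to produce two bound matrices with disjoint ranges, after which non-comparability follows from the elementary observation about $ \DIAG(1,0) $ versus $ \DIAG(0,1) $.
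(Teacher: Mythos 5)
Your handling of the inequality and of the equality characterization is correct and coincides with the paper's route: the Observation carries no separate proof precisely because it is Theorem \ref{theo:ONE} with the quantifiers unwound, the forward equality direction obtained by specializing to the witnessing $\QUADQ$, and the converse supplied by the explicit $\QUADQ_{\PLOP}$ construction in the second half of that theorem's proof.

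The genuine gap is in your final paragraph, on the non-comparability claims. Your proposed lifting to estimators --- take two estimators $\PSI_{1}, \PSI_{2}$ with $\FI_{k} = \PSI_{k} - \GTRUE \in \bigl( S(\THREEBETA) \bigr)^{\DIMPAR}$ attaining the two non-comparable bounds $W(\QUADQ_{1})$ and $W(\QUADQ_{2})$ exactly --- is self-defeating: by the very inequality you just proved, $\COVTRUE(\PSI_{1}) \ge W(\QUADQ_{2}) = \COVTRUE(\PSI_{2})$ and symmetrically $\COVTRUE(\PSI_{2}) \ge \COVTRUE(\PSI_{1})$, so by the antisymmetry of the L\"owner order recalled at the start of Section \ref{sec:MB} the two covariance matrices are \emph{equal}, not non-comparable. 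Indeed any estimator attaining equality at some element of $\WA$ is automatically of minimal covariance in $\UG$ (this is exactly case 1 of Theorem \ref{theo:HRBB-OPT}), so equality-attaining estimators can never exhibit non-comparable covariances; one would instead need suboptimal estimators, e.g.\ perturbations of an optimal one whose covariance increments point along different directions. A second, smaller overclaim: your aim of producing the examples ``independent of specific modeling assumptions'' cannot succeed, since for $\DIMPAR = 1$ all matrices in $\WA$ are scalars and hence totally ordered, and even for $\DIMPAR \ge 2$ Hypothesis \ref{hypo:BASIC} only guarantees $\GTONE \neq \GTTWO$, so all vectors $\HT$ may lie on a single line, in which case every $W(\QUADQ)$ is a non-negative multiple of one rank-one projector and $\WA$ is again totally ordered. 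The paper's sentence asserts only that comparability \emph{need not} hold --- a caveat that is established by exhibiting one suitable model, not by a construction valid under the paper's hypotheses alone, which is why the paper offers no proof of it.
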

\section{Functional analysis setup} \label{sec:FAS}
\subsection{Definition of the operator $ \OPZ : \THREEBETA \to \RPAR $ } \label{sub:BETA-OP}
From Hypothesis 
\ref{hypo:TWO} we have $\THREEBETA \subseteq \SPACELTWOTRUE$.
The subset $\THREEBETA$ is not a linear subspace, since any $ \PI \in \THREEBETA $, is a Radon-Nykodim derivative of a p.m. 
with respect to the p.m. $ \RPTRUE $, then $ \PI \ge 0 $ w.p. 1, \cite{HEWITT}, p. 315, 
with $ \NORMLTWO { \PI } \neq 0 $, see Observation \ref{obs:THREEBETA},
so that  $ - \PI $ cannot belong to $ \THREEBETA $.

Let $ u_{0} $ be an arbitrary element in $\THREEBETA$.  To this particular element $ u_{0} \in \THREEBETA$ corresponds a unique $\INDX_{0} \in \SINDX$, such that $ u_{0} \equiv \PI(\INDX_{0})$, see Hypothesis \ref{hypo:BASIC}, 
so that $ \INDX_{0} = \PI^{-1}( u_{0})$,
and,  to this index $\INDX_{0}$ corresponds a unique well defined value $ \HTZERO = \GTZERO - \GTRUE \in \RPAR$. 
Hence, to $ u_{0} \in \THREEBETA $ corresponds a unique element $ \FUNCH \bigl( \PI^{-1}( u_{0} ) \bigr) \in \RPAR $ 
which we define as $ \OPZ \bigl( u_{0}  \bigr) $,
so that $ \OPZ ( u_{0} ) =  \FUNCH \bigl( \PI^{-1}( u_{0} ) \bigr) $. 
Hence,
\begin{equation} \label{eq:OPZ}
\OPZ ( \PI ( \INDX ) ) =  \HT  \qquad \forall \INDX \in \SINDX 
\end{equation}
equivalently $ \OPZ ( u ) = \FUNCH \bigl( \PI^{-1} ( u ) \bigr)  $,
for all $ u \in \THREEBETA $.
Note that $ \OPZ ( \PI(\INDXTRUE) ) =  \HTRUE = 0 $. 
Then, we may establish a direct relation from  $\THREEBETA$ to $\RPAR$,
as an operator $\OPZ$ from $\THREEBETA$ to $\RPAR$, i.e.
$\OPZ : \THREEBETA \to \RPAR$. This operator is not (without additional conditions) necessarily linear nor bounded. 
The operator $ \OPZ $ is completely defined by the collection of Radon-Nykodim derivatives in $\THREEBETA$, i.e.
the elements $ \PIT \in \THREEBETA $, for all $ \INDX \in \SINDX $, and the vectors $ \GT \in \RPAR $, for all $ \INDX \in \SINDX $,
and does not depend on the existence or not of any unbiased estimator, and if it exists,
on whether it has finite covariance at $ \INDXTRUE $ or not.
\subsection{Barankin formulation}
The key observation made by Barankin, \cite{BARANKIN}, for $ \DIMPAR = 1 $, where he considers $ \THREEBETA \subseteq \SPACELQ $, $ \DIMQ \ge 1 $, is that if we are able to find an integral representation of the operator $ \OPZ $, then the problem is solved.

In Barankin, \cite{BARANKIN}, the answer is given by the Riesz Representation Theorem which finds an element in the conjugate space $ \phi_{0} \in \SPACELP $, with $ 1/s + 1/r = 1 $,
such that $ \OPZ ( u ) = \int \phi_{0}  \; u \; \DIFF \RPTRUE $, $ \forall u \in \THREEBETA $, with minimum $\DIMP$-norm, i.e. minimum
$\DIMP$-th variance. In our case, we generalize to vector estimates, i.e. $ \DIMPAR > 1 $, but we will only consider the case $ \DIMP = \DIMQ = 2$ which is the traditional variance and covariance matrices case, which is the most important in applications. To solve the problem the idea is to generalize the Riesz representation theorem to the vector case. The Riesz representation theorem requires that the represented functional be defined from a linear
space to the reals. Since $ \THREEBETA $ is not a linear subspace, Barankin, see \cite{BANACH}, pp. 479-480, extends the operator $ \OPZ $ to a linear operator over the whole space, using indirectly the Hahn-Banach theorem, invoking a condition first used by Riesz and generalized by Helly as exposed in \cite{BANACH} footnote in p. 56, see also \cite{NARICI}. In the next sub-section we generalize the Helly-Riesz-Banach condition to handle the vector case.
In Section \ref{sec:GRRT} we generalize the Riesz representation theorem to the vector case 
without requiring the Hahn-Banach theorem, 
and in Section \ref{sec:OPTEST} we apply these results to solve Problem \ref{prob:MAIN}.
\subsection{Vector generalized Barankin hypothesis: Helly, Riesz, Banach,\ZBREAK Barankin (HRBB)}  \label{subsec:HRBB}
The following is the generalization of the hypothesis in \cite{BARANKIN},
pp. 480 and 483--484, see also \cite{BANACH}, Theorems 4 and 5 pp. 55--57.
This condition will be called here the HRBB condition for Helly, Riesz, Banach, Barankin.
For $ u \in \SPACELTWOTRUE $, define the semi-norm $ \NORMLTWO { u } = \bigl( \int u^{2} \; \DIFF \RPTRUE \bigr)^{1/2} $, and call
$ \NORMRPAR { \XVEC } $ the standard Euclidean norm for $ \XVEC \in \RPAR $.
\begin{DEFINITION}{(HRBB condition)} \label{def:HRBB}
The functions $\HT = \GT - \GTRUE$, $\HT \in \RPAR $, and $\PIT \in \THREEBETA $, $\forall \INDX \in \SINDX$, satisfying
the Hypothesis \ref{hypo:ONE}, \ref{hypo:BASIC}, and \ref{hypo:TWO}, for Problem \ref{prob:MAIN},
satisfy the HRBB condition iff:
$ \exists \KH \in \RPLUS $, i.e. $ \KH \ge 0$,
such that:
\begin{equation}  \label{eq:HRBB}
{\NORMGTI} \le \KH \; {\NORMPITI}
\end{equation}
for all $ \DIMM \in \NAT$, for all $ \ASUBI \in \REALS $, $i = 1,2 , \cdots, \DIMM $, 
for all $\TSUBI \in \SINDX$, $i = 1,2 , \cdots, \DIMM $.
\end{DEFINITION}
\section{Generalized Riesz representation theorem}   \label{sec:GRRT}
Here a generalization is given of the Riesz Representation Theorem for Hilbert spaces real functionals,
see e.g. \cite{BEALS} p. 112, to operators from 
an arbitrary Hilbert space $\HILBERT$, separable or not, to the real  finite dimensional vector space $\RPAR$, with $ \DIMPAR \ge 1 $.
The proof given here does not require the Hahn-Banach
extension theorem, and then, the non-denumerable Axiom of Choice is not required, or some less stringent variant, \cite{NARICI}. 
The bound proposed in Helly's theorem, \cite{BANACH} pp. 55--56, is generalized, and will be called the operator OP-HRBB (Helly, Riesz, Banach, Barankin) condition.
\subsection{The Theorem.}
Let $\HILBERT$ denote an arbitrary Hilbert space with semi-inner product $ \PRODH { u, v } $, $ \forall u, v \in \HILBERT $, 
and semi-norm $ \NORMH { u } = \PRODH {  u, u  }^{1/2}$. If $ \NORMH { u } = 0$, then we say that
$ u = 0 $ \emph{ in semi-norm}, (i.s.n.).
Equivalently $ u = 0 \; \ISN$ iff $ \NORMH { u } = 0$. Define $ u = v \; \ISN$, iff $ \NORMH { u - v }  = 0$.
\begin{THEOREM}  \label{theo:OP-HRBB}
Let $\HILBERT$ be an arbitrary Hilbert space. Let $\THREEBETA$ be a non-empty arbitrary subset of $\HILBERT$, $\THREEBETA \neq \emptyset $,
$\THREEBETA \subseteq \HILBERT $. Let $\OPZ$ be an operator from $\THREEBETA$ to $\RPAR$, $ \OPZ: \THREEBETA \to \RPAR$, such that 
there exists at least one $u_{0} \in \THREEBETA$ for which $ \OPZ( u_{0} ) \neq 0 $.
Assume that the operator $\OPZ$ satisfies the following condition, that will be called the operator HRBB condition (OP-HRBB): 
$ \exists \KH \in \RPLUS $, i.e. $ \KH \ge 0$,
such that:
\begin{equation} \label{eq:OP-HRBB}
\parallel \sum_{i = 1}^{\DIMMOP} \ASUBI \; \OPZ ( u_{i} ) \parallel_{\RPAR} 
\; \le \; \KH \; 
\parallel \sum_{i = 1}^{\DIMMOP} \ASUBI \; u_{i} \parallel_{\HILBERT} 
\end{equation}
for all $\DIMMOP \in \NAT$, for all $ \ASUBI \in \REALS $, $i = 1,2 , \cdots, \DIMMOP $, 
for all $ u_{i} \in \THREEBETA $, $i = 1,2 , \cdots, \DIMMOP $. 

Call $ C( \THREEBETA ) \subseteq \HILBERT $ the minimal closed linear space containing $\THREEBETA$, 
i.e. the closed linear span of $ \THREEBETA $, \cite{CONWAY} p. 11.
Then:
\begin{enumerate}
\item 
The operator $\OPZ$ may be extended to a bounded linear operator $\OPC$ from $ C(\THREEBETA) $ to $ \RPAR $,
 $\OPC: C(\THREEBETA) \to \RPAR$, with
$ \OPC (u) = \OPZ(u) $, for all $ u \in \THREEBETA$.
\item
The operator $\OPC$ has the following representation:
There exists $ \DIML \in \NAT$, $ 1 \le \DIML \le \DIMPAR$, 
and there exist orthonormal $ \UHI $'s, $ \UHI \in C( \THREEBETA ) $, for $ 1 \le i \le \DIML$,
such that:
\begin{equation}   \label{eq:REP}
\OPC ( u ) = \sum_{i=1}^{\DIML} \; \PRODH { u, \UHI } \OPC ( \UHI ) \qquad \forall u \in C(\THREEBETA)
\end{equation}
\end{enumerate}
\end{THEOREM}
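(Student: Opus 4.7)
The plan is to extend $\OPZ$ in two stages (first to the linear span of $\THREEBETA$, then to its closure), and then invoke the scalar Riesz representation theorem coordinate-wise, followed by Gram--Schmidt to obtain the orthonormal basis.

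\textbf{Stage 1: Extension to the linear span.} Let $S(\THREEBETA) \subseteq \HILBERT$ denote the linear span of $\THREEBETA$. For any $u \in S(\THREEBETA)$ written as $u = \sum_{i=1}^{\DIMMOP} \ASUBI \, u_{i}$ with $u_{i} \in \THREEBETA$, define $\OPS(u) = \sum_{i=1}^{\DIMMOP} \ASUBI \, \OPZ(u_{i})$. Well-definedness is exactly the content of the OP-HRBB condition: if $\sum \ASUBI u_{i} = \sum \APJ u_{j}^{\prime}$ in semi-norm, then applying (\ref{eq:OP-HRBB}) to the difference gives $\parallel \sum \ASUBI \OPZ(u_{i}) - \sum \APJ \OPZ(u_{j}^{\prime}) \parallel_{\RPAR} \le \KH \cdot 0 = 0$. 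By the same inequality, $\OPS$ is linear and Lipschitz with constant $\KH$, and clearly $\OPS(u) = \OPZ(u)$ for $u \in \THREEBETA$.

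\textbf{Stage 2: Extension to the closure.} Any $u \in C(\THREEBETA)$ is the limit in the semi-norm of $\HILBERT$ of a Cauchy sequence $(u_{n}) \subset S(\THREEBETA)$. The Lipschitz bound on $\OPS$ forces $(\OPS(u_{n}))$ to be Cauchy in $\RPAR$, hence it converges; set $\OPC(u)$ equal to that limit. The value does not depend on the approximating sequence (two such sequences can be interleaved and the bound reapplied), and linearity and boundedness pass to the limit. So $\OPC : C(\THREEBETA) \to \RPAR$ is a bounded linear operator extending $\OPZ$, satisfying $\NORMRPAR{\OPC(u)} \le \KH \NORMH{u}$ for all $u \in C(\THREEBETA)$.

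\textbf{Stage 3: Coordinate-wise Riesz and orthonormalization.} The closed linear space $C(\THREEBETA)$ is a Hilbert space in its own right (modulo vectors of zero semi-norm). For each $1 \le i \le \DIMPAR$ the functional $u \mapsto [\OPC(u)]_{i}$ is a bounded real linear functional on $C(\THREEBETA)$, so by the classical Riesz representation theorem there exists $\VHI \in C(\THREEBETA)$ with $[\OPC(u)]_{i} = \PRODH{u, \VHI}$ for all $u \in C(\THREEBETA)$, unique up to semi-norm zero. Let $V \subseteq C(\THREEBETA)$ be the span of $\VHI$, $1 \le i \le \DIMPAR$; its dimension $\DIML$ satisfies $1 \le \DIML \le \DIMPAR$ (the lower bound uses the hypothesis that $\OPZ(u_{0}) \neq 0$ for some $u_{0}$, which forces at least one $\VHI \ne 0$ in semi-norm). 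Apply Gram--Schmidt to the $\VHI$'s to obtain orthonormal $\UHI \in V \subseteq C(\THREEBETA)$, $1 \le i \le \DIML$.

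\textbf{Stage 4: Verifying the representation.} For $u \in C(\THREEBETA)$, orthogonally decompose $u = u_{V} + u_{W}$ with $u_{V} \in V$ and $u_{W} \perp V$. Since every $\VHI$ lies in $V$, $\PRODH{u, \VHI} = \PRODH{u_{V}, \VHI}$, which gives $\OPC(u) = \OPC(u_{V})$ coordinate-wise; equivalently, $\OPC$ annihilates $V^{\perp}$. Expanding $u_{V}$ in the orthonormal basis, $u_{V} = \sum_{i=1}^{\DIML} \PRODH{u, \UHI} \UHI$ in semi-norm, and using linearity and continuity of $\OPC$ yields $\OPC(u) = \sum_{i=1}^{\DIML} \PRODH{u, \UHI} \OPC(\UHI)$, which is (\ref{eq:REP}).

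The main obstacle is the well-definedness in Stage 1 and the careful handling of the semi-inner product throughout, since the Riesz representation in Stage 3 determines $\VHI$ only up to the zero-semi-norm subspace; all subsequent manipulations (orthogonal projection, Gram--Schmidt) must be understood modulo this equivalence. The OP-HRBB condition (\ref{eq:OP-HRBB}) is precisely what makes the first extension consistent and the resulting operator bounded, so once Stage 1 is discharged the remaining steps are standard Hilbert-space arguments.
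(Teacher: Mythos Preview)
Your proof is correct, but Stage~3 takes a genuinely different path from the paper. The paper never invokes the scalar Riesz representation theorem; instead it works directly with the kernel $\KERC = \{ u \in C(\THREEBETA) : \OPC(u) = 0 \}$ and its orthogonal complement $\KERPER$ inside $C(\THREEBETA)$. It selects $\DIML$ elements $\PIHI \in \THREEBETA$ whose images $\OPZ(\PIHI)$ are linearly independent in $\RPAR$, projects them onto $\KERPER$, shows those projections remain linearly independent, and applies Gram--Schmidt to obtain the $\UHI$'s. The finite-dimensionality of $\KERPER$ and the decomposition $u = \sum \ALFAIU \UHI + \GAMAU$ with $\GAMAU \in \KERC$ are then established by hand via a Cauchy-sequence argument, and the representation follows.

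Your approach is shorter and more modular: applying scalar Riesz to each coordinate functional $u \mapsto [\OPC(u)]_{i}$ immediately hands you representers $\VHI$, and the span $V$ you form is exactly the paper's $\KERPER$ (since $u \in \KERC$ iff $\PRODH{u,\VHI}=0$ for all $i$). The paper's route is more self-contained---it treats the theorem as a genuine generalization of Riesz rather than a consequence of it---and makes the structure $C(\THREEBETA) = \KERPER \oplus \KERC$ explicit, which is used downstream (e.g.\ in the proof of Theorem~\ref{theo:HRBB-OPT}). Both arguments avoid Hahn--Banach and hence the Axiom of Choice, which is a point the paper emphasizes.
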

\begin{OBSERVATION}
The standard Riesz representation theorem, corresponds to $ \THREEBETA = S ( \THREEBETA ) = C ( \THREEBETA ) = \HILBERT $, and $ \DIMPAR = 1$.
In that case the operator $ \OPZ $ is taken as a bounded linear operator, so that the OP-HRBB condition is satisfied, and then the conclusion is given
by (\ref{eq:REP}) with $ \DIML = 1 $.
\end{OBSERVATION}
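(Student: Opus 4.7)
The goal is to verify two things under the specialization $\THREEBETA = S(\THREEBETA) = C(\THREEBETA) = \HILBERT$, $\DIMPAR = 1$, with $\OPZ$ taken as a bounded linear functional: first, that the OP-HRBB condition (\ref{eq:OP-HRBB}) is automatic, so Theorem \ref{theo:OP-HRBB} applies; and second, that the representation (\ref{eq:REP}) with $\DIML = 1$ collapses to the classical statement $\OPZ(u) = \PRODH{u, v_0}$ for a unique $v_0 \in \HILBERT$. There is no substantive obstacle; the plan is simply to unwind the definitions and check both reductions.

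For the first reduction, note that since $\DIMPAR = 1$ we have $\RPAR = \REALS$ and $\NORMRPAR{\cdot} = |\cdot|$. A bounded linear $\OPZ: \HILBERT \to \REALS$ satisfies $|\OPZ(u)| \le \KH \NORMH{u}$ for some $\KH \in \RPLUS$ (its operator norm). By linearity, for any $\DIMMOP \in \NAT$, any scalars $\ASUBI \in \REALS$, and any $u_{i} \in \THREEBETA = \HILBERT$ one has $\sum_{i=1}^{\DIMMOP} \ASUBI\, \OPZ(u_{i}) = \OPZ\bigl(\sum_{i=1}^{\DIMMOP} \ASUBI\, u_{i}\bigr)$, whence
\begin{equation}
\nonumber
\Bigl| \sum_{i=1}^{\DIMMOP} \ASUBI\, \OPZ(u_{i}) \Bigr| = \Bigl| \OPZ\bigl( \textstyle\sum_{i=1}^{\DIMMOP} \ASUBI\, u_{i} \bigr) \Bigr| \le \KH\, \Bigl\| \sum_{i=1}^{\DIMMOP} \ASUBI\, u_{i} \Bigr\|_{\HILBERT},
\end{equation}
which is exactly (\ref{eq:OP-HRBB}). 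Assuming $\OPZ$ is not identically zero (otherwise the trivial representation $v_0 = 0$ works), the hypothesis ``$\exists u_0 \in \THREEBETA$ with $\OPZ(u_0) \neq 0$'' of Theorem \ref{theo:OP-HRBB} is satisfied, so the theorem applies.

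For the second reduction, since $\THREEBETA = \HILBERT$ is already a closed linear subspace, $C(\THREEBETA) = \HILBERT$ and the extension $\OPC$ furnished by Theorem \ref{theo:OP-HRBB}(1) agrees with $\OPZ$ on all of $\HILBERT$. The theorem gives $1 \le \DIML \le \DIMPAR = 1$, forcing $\DIML = 1$, so (\ref{eq:REP}) reduces to
\begin{equation}
\nonumber
\OPZ(u) = \PRODH{u, \UHI}\, \OPZ(\UHI) \qquad \forall\, u \in \HILBERT,
\end{equation}
for a single unit vector $\UHI \in \HILBERT$ (writing $\UHI$ for $\widehat{u}_{1}$). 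Setting $v_0 := \OPZ(\UHI)\, \UHI \in \HILBERT$, bilinearity of the inner product yields $\OPZ(u) = \PRODH{u, v_0}$ for every $u \in \HILBERT$, which is the classical Riesz representation. Uniqueness of $v_0$ in semi-norm is a standard consequence of non-degeneracy of $\PRODH{\cdot,\cdot}$ on the quotient by the null space, and is not required to establish the observation.

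In short, the observation is a consistency check rather than a theorem with a genuine proof obstacle: OP-HRBB is a multi-point rephrasing of boundedness that, for linear $\OPZ$, collapses back to plain boundedness; and the dimension bound $\DIML \le \DIMPAR$ becomes $\DIML = 1$ in the scalar case, turning (\ref{eq:REP}) into the familiar Riesz formula.
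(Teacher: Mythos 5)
Your proposal is correct and matches the paper's intent: the paper states this Observation without proof, and your verification — linearity collapsing the multi-point OP-HRBB sums into plain boundedness $ | \OPZ ( \sum_{i} \ASUBI u_{i} ) | \le \KH \NORMH{ \sum_{i} \ASUBI u_{i} } $, together with $ 1 \le \DIML \le \DIMPAR = 1 $ forcing $ \DIML = 1 $ so that (\ref{eq:REP}) reduces to $ \OPZ ( u ) = \PRODH{ u , v_{0} } $ with $ v_{0} = \OPZ ( \widehat{u}_{1} ) \, \widehat{u}_{1} $ — is exactly the routine unwinding the authors leave implicit. Your side remarks (dispatching the trivial case $ \OPZ \equiv 0 $, which the theorem's hypothesis excludes, and noting uniqueness holds only in semi-norm) are also accurate and do not affect the claim.
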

\subsection{Proof of the generalized Riesz representation theorem}
\subsubsection{Extension of the operator $ \OPZ $ to the span of $\THREEBETA $, 
$\OPS: S(\THREEBETA) \to \RPAR $}        \label{subsection:LS-EXTENSION}
This extension follows the exposition of Banach in \cite{BANACH} pp. 55--56.  Assume the OP-HRBB condition is true.
Call $S(\THREEBETA) $ the linear span i.s.n. of $\THREEBETA$, i.e. 
$ S(\THREEBETA) = \{ u \in \HILBERT : u = \sum_{i=1}^{\DIMMOP} \ASUBI \; \PISUBI \; \ISN, \; \forall{\DIMMOP}\in \NAT, \forall \; \ASUBI \in \REALS, \; {\rm for} \; 1 \le i \le \DIMMOP, \; \forall \; \PISUBI \in \THREEBETA, \; {\rm for} \; 1 \le i \le{\DIMMOP}\} $.
The span $S(\THREEBETA)$ is called $[{\BETABK_{0}}]$ in \cite{BARANKIN} p. 495. Clearly, $S(\THREEBETA)$ is a linear space.
With the help of the OP-HRBB condition extend the operator $\OPZ:\THREEBETA \to \RPAR$ to an operator
$\OPS:S(\THREEBETA) \to \RPAR$ by the following procedure: for each $ u \in S(\THREEBETA)$ there exist, dependent on each $u$, $ \DIMMOP \in \NAT$, $\ASUBI$'s $\in \REALS$, $ 1 \le i \le \DIMMOP $,
$\PISUBI$'s $\in \THREEBETA $, $ 1 \le i \le \DIMMOP $, such that $ u = \sum_{i=1}^{\DIMMOP} \ASUBI \; \PISUBI \; \ISN $
Define $\OPS (u)$ for $u \in S(\THREEBETA)$ as $ \OPS (u) = \sum_{i=1}^{\DIMMOP} \ASUBI \; \LPISUBI $. 
This procedure gives a well defined value for $\OPS(u)$, since for any other decomposition of 
$u = \sum_{j = 1}^{\DIMMOP^{\prime}} \APJ \; \PIPJ  \; \ISN $, 
resulting in $ \OPS^{\prime} (u)  = \sum_{j = 1}^{\DIMMOP^{\prime}} \APJ  \LPIPJ $,
because of the OP-HRBB condition we will have:
\begin{displaymath}
\parallel \OPS (u) -  \OPS^{\prime} (u) \parallel_{\RPAR}
 \le 
\KH \parallel \sum_{i=1}^{\DIMMOP} \ASUBI  \PISUBI - \sum_{j = 1}^{\DIMMOP^{\prime}} \APJ  \PIPJ \parallel_{\HILBERT} = 0
\end{displaymath}
so that $ \sum_{i=1}^{\DIMMOP} \ASUBI \; \LPISUBI  = \sum_{j = 1}^{\DIMMOP^{\prime}} \APJ \; \LPIPJ $. The important result here is that now $S(\THREEBETA)$, unlike $\THREEBETA$, is a linear space, and that $\OPS:S(\THREEBETA) \to \RPAR$ is a bounded linear operator with bound $\KH$, i.e. 
$ \parallel \OPS (u) \parallel_{\RPAR} \; \le \; \KH \parallel u \parallel_{\HILBERT}$, $\forall u \in S(\THREEBETA)$,
and $ \OPS (u) = \OPZ(u)$, $\forall u \in \THREEBETA$. 
\begin{OBSERVATION}
Barankin, \cite{BARANKIN} pp. 480 and 483--484, following \cite{BANACH} Theorems 2 and 4, p. 55, invokes the Hahn-Banach theorem,
see e.g. \cite{CONWAY} p. 78 or \cite{BANACH} Theorem 1 p. 27, to extend the operator $\OPS$ to the whole space. The Hahn-Banach theorem requires the Axiom of Choice or some slightly less stringent condition, see e.g. \cite{NARICI}.
In \cite{BANACH} arbitrary Banach spaces are considered. The fact that here we work with Hilbert spaces, permits us to avoid the use of the Hahn-Banach theorem, and then, the non-denumerable Axiom of Choice is not required. 
\end{OBSERVATION}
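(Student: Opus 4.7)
My plan is to build on the extension $\OPS : S(\THREEBETA) \to \RPAR$ already constructed in Section \ref{subsection:LS-EXTENSION}, extending first to the closed linear span $C(\THREEBETA)$ by continuity, then applying the classical scalar Riesz theorem componentwise, and finally compressing the resulting representers to an orthonormal basis of a finite-dimensional subspace.

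\textbf{Step 1: extension by continuity.} Because $\OPS$ is bounded by $\KH$ on $S(\THREEBETA)$, it is uniformly continuous there. For any $u \in C(\THREEBETA)$ choose a sequence $u_{n} \in S(\THREEBETA)$ with $\NORMH{u_{n} - u} \to 0$; the sequence $\OPS(u_{n})$ is Cauchy in $\RPAR$ by (\ref{eq:OP-HRBB}), hence converges. Define $\OPC(u)$ to be this limit. Independence from the approximating sequence, linearity, and the preservation of the bound $\parallel \OPC(u) \parallel_{\RPAR} \le \KH\,\NORMH{u}$ follow by standard arguments, and $\OPC|_{\THREEBETA} = \OPZ$ by construction.

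\textbf{Step 2: componentwise Riesz.} Write $[\OPC(u)]_{i}$, $1 \le i \le \DIMPAR$, for the coordinates of $\OPC(u)$; each is a bounded real-linear functional on $C(\THREEBETA)$. Applying the classical (scalar) Riesz representation theorem on the Hilbert space $C(\THREEBETA)$ yields vectors $v_{1},\ldots,v_{\DIMPAR} \in C(\THREEBETA)$ with $[\OPC(u)]_{i} = \PRODH{u, v_{i}}$ for every $u \in C(\THREEBETA)$.

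\textbf{Step 3: orthonormalization and projection.} Let $V = \mathrm{span}\{v_{1},\ldots,v_{\DIMPAR}\}$, a finite-dimensional, hence closed, subspace of $C(\THREEBETA)$; set $\DIML = \dim V$, so $\DIML \le \DIMPAR$. The assumption that $\OPZ(u_{0}) \neq 0$ for some $u_{0}$ forces at least one $v_{i}$ to be nonzero i.s.n., giving $\DIML \ge 1$. Apply Gram--Schmidt to obtain an orthonormal basis $\UHI$, $1 \le i \le \DIML$, of $V$, and let $Pu = \sum_{j=1}^{\DIML} \PRODH{u,\UHJ}\UHJ$ denote the orthogonal projection onto $V$. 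Since each $v_{i} \in V$, we have $\PRODH{u - Pu, v_{i}} = 0$, so $[\OPC(u - Pu)]_{i} = 0$ for every $i$, i.e.\ $\OPC(u - Pu) = 0$. Linearity of $\OPC$ then gives
\begin{equation*}
\OPC(u) \;=\; \OPC(Pu) \;=\; \sum_{j=1}^{\DIML} \PRODH{u,\UHJ}\, \OPC(\UHJ),
\end{equation*}
which is exactly (\ref{eq:REP}).

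\textbf{Main obstacle.} The genuine difficulty is that $\HILBERT$ carries only a semi-inner product, so $C(\THREEBETA)$ is not literally a Hilbert space and the classical Riesz theorem invoked in Step 2 does not apply verbatim. I would handle this by passing to the quotient $C(\THREEBETA)/\mathcal{N}$, where $\mathcal{N} = \{u : \NORMH{u} = 0\}$, which is a true Hilbert space. The bound from Step 1 implies $\OPC$ vanishes on $\mathcal{N}$, so $\OPC$ descends to a well-defined bounded linear map on the quotient; Riesz there produces representers which lift back to representatives $v_{i} \in C(\THREEBETA)$, and all subsequent identities in Step 3 are to be read modulo $\ISN$. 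Once this quotienting is in place, every step reduces to routine Hilbert-space manipulations and no appeal to Hahn--Banach (hence to the non-denumerable Axiom of Choice) is required.
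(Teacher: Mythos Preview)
Your proposal is correct and furnishes a valid justification of the observation (i.e.\ a proof of the representation theorem \ref{theo:OP-HRBB} that avoids Hahn--Banach), but it proceeds along a genuinely different line from the paper.

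The paper builds everything from scratch: after extending to $C(\THREEBETA)$ by continuity (your Step~1 matches \S\ref{subsub:EXTC} exactly), it defines the kernel $\KERC$ and its orthogonal complement $\KERPER$, then works from the \emph{image} side --- selecting $\DIML$ elements $\widehat{\pi}_{i} \in \THREEBETA$ whose images $\OPZ(\widehat{\pi}_{i})$ form a maximal linearly independent set, projecting these onto $\KERPER$, and Gram--Schmidting the projections to obtain the $\UHI$. A limiting argument then shows every $u \in C(\THREEBETA)$ decomposes as in (\ref{eq:udecomp}), whence (\ref{eq:REP}). You instead invoke the scalar Riesz theorem componentwise to produce representers $v_{1},\ldots,v_{\DIMPAR}$ and set $V = \mathrm{span}\{v_{i}\}$; the representation then falls out in two lines from $\OPC(u-Pu)=0$. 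Your $V$ coincides with the paper's $\KERPER$ (since $w \in \KERC$ iff $\PRODH{w,v_{i}}=0$ for all $i$), so the two constructions yield the same $\UHI$ and the same $\DIML$. Your route is shorter and more transparent, at the cost of taking scalar Riesz as a black box; the paper's route is fully self-contained and makes the finite-dimensionality of $\KERPER$ explicit through the image analysis, which it later exploits in \S\ref{sec:OPTEST}. Your handling of the semi-norm via the quotient $C(\THREEBETA)/\mathcal{N}$ is the correct fix and is implicitly what the paper's ``i.s.n.'' conventions amount to.
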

\subsubsection{Extension of the operator $ \OPS $ to the closure of 
the span of $\THREEBETA $, $\OPC: C(\THREEBETA) \to \RPAR $} \label{subsub:EXTC}
Define the closure of the span of $\THREEBETA $  as $ C(\THREEBETA) = {\tt Closure}(S(\THREEBETA )) $, i.e.
$ C(\THREEBETA) =  \{ u \in \HILBERT: \exists \; (u_{n})_{n \in \NAT} \; {\rm with} \; u_{n} \in S(\THREEBETA) \; \forall n \in \NAT, \;
{\rm such \; that} \; \parallel u_{n} - u \parallel_{\HILBERT} \: \to 0 \} $.
It is readily checked that $ C(\THREEBETA) $ is a closed linear subspace of $\HILBERT$.
The set $ C(\THREEBETA) $ is called $ \{{\BETABK_{0}}\} $ in \cite{BARANKIN}, p. 494.
Extend the operator $\OPS:S(\THREEBETA) \to \RPAR$  to an operator
$\OPC:C(\THREEBETA) \to \RPAR$ by continuity:
Let $ u \in C(\THREEBETA)$, then there exists a sequence $ (u_{n})_{n \in \NAT}$ of elements $ u_{n} \in S(\THREEBETA)$
such that $ {\parallel u_{n} - u \parallel_{\HILBERT} \to 0} $. Hence this sequence is a Cauchy fundamental sequence, i.e. for each $\epsilon > 0$ there exists $N(\epsilon)$ such that $ \forall \; n,m \ge N(\epsilon)$
we have $ \parallel u_{n} - u_{m} \parallel_{\HILBERT} \; < \; \epsilon $. But since $\OPS$ is a bounded linear operator, then
$ \parallel \OPS(u_{n}) - \OPS(u_{m}) \parallel_{\RPAR} \; \le \; {\KH \; \parallel u_{n} - u_{m} \parallel_{\HILBERT}}
\; < \; \KH \; \epsilon $. 
Then, $ (\OPS(u_{n}))_{n \in \NAT}$ is a Cauchy fundamental sequence in the complete finite dimensional vector space $\RPAR$, 
\cite{BEALS} p. 23, hence there exists a limit in $\RPAR$. Call that limit $\OPC(u)$, so that
$ \parallel \OPS(u_{n}) - \OPC(u) \parallel_{\RPAR} \to 0 $, and then $ \OPS(u_{n}) - \OPC(u) \to 0 $
component by component (c.b.c.), i.e. $ [\OPS(u_{n})]_{i} - [\OPC(u)]_{i} \to 0 $, for $ 1 \le i \le \DIMPAR$.
The value $\OPC(u)$ is well defined: assume that for some other sequence 
$ (u^{\prime}_{j})_{j \in \NAT}$ of elements $u^{\prime}_{j} \in S(\THREEBETA)$ with $ \parallel  u^{\prime}_{j} - u \parallel_{\HILBERT}
\; \to \; 0$, we obtain using the previous procedure a limit $\OPC^{\prime}(u)$ for the sequence 
$ \left( \OPS(u^{\prime}_{j}) \right)_{j \in \NAT}$, i.e. $ \parallel \OPS(u^{\prime}_{j}) - \OPC^{\prime}(u) \parallel_{\RPAR} \to 0 $. 
We have:
$ \parallel  \OPS(u^{\prime}_{j}) - \OPS(u_{n}) \parallel_{\RPAR} \; =\break
\parallel \OPS(u^{\prime}_{j} - u_{n}) \parallel_{\RPAR}$
$
{ \; \le \; \KH \parallel u^{\prime}_{j} - u_{n} \parallel_{\HILBERT}  }
$
$
\; = \; \KH \parallel \; u^{\prime}_{j} \; - \; u - \; (u_{n} - u) \parallel_{\HILBERT} 
\; \le \; \KH \; ( \parallel u^{\prime}_{j} - u \parallel_{\HILBERT} + \parallel u_{n} - u \parallel_{\HILBERT})
 $. Then, $ \parallel \OPC(u) - \OPC^{\prime}(u) \parallel_{\RPAR}
\; = \;  \parallel \OPC(u) - \OPS(u_{n}) - ( \OPC^{\prime}(u) - \OPS(u^{\prime}_{j}) )
$
$
- { ( \OPS(u^{\prime}_{j}) - \OPS(u_{n}) ) \parallel_{\RPAR} } 
$
$
\; \le\break
\parallel \OPC(u) - \OPS(u_{n}) \parallel_{\RPAR}
\; + \; { \parallel \OPC^{\prime}(u) - \OPS(u^{\prime}_{j})  \parallel_{\RPAR} }
\; + \; \KH \; \Bigl( \parallel u^{\prime}_{j} - u \parallel_{\HILBERT} +\break
\parallel u_{n} - u \parallel_{\HILBERT} \Bigr)
$, so that taking the limits $ n \to \infty$, and $ j \to \infty$,
we obtain $ \OPC(u) = \OPC^{\prime}(u)$,
so that the value $ \OPC(u) \in \RPAR$ is independent of the chosen sequence.
Hence $\OPC(u)$ is a well defined operator from the closed linear subspace 
$ C(\THREEBETA) \subseteq \HILBERT$ to $\RPAR$. 
It is immediate to show that this operator is linear and that
$ \OPC(u) = \OPS (u)$, $\forall u \in S(\THREEBETA)$, and then $ \OPC (u) = \OPS (u) = \OPZ(u) $, $\forall u \in \THREEBETA$.
Finally, let's show that the operator $\OPC$ is bounded with bound $\KH$. 
Let $ u \in C(\THREEBETA)$, and $ (u_{n})_{n \in \NAT}$ a sequence of elements $ u_{n} \in S(\THREEBETA)$ such that 
$\parallel u - u_{n} \parallel_{\HILBERT} \to 0 $, and then $ \NORMRPAR { \OPC(u) - \OPS(u_{n}) } \to 0 $.
Since $ { \big|  \NORMH { u } - \; \NORMH { u_{n} } \big| }  \; \le \; 
\NORMH { u - u_{n} } $, then $\NORMH { u_{n} } \; \to \; \NORMH { u } $.
Hence, $ \NORMRPAR { \OPC (u) } = \NORMRPAR { \OPC(u) - \OPS(u_{n}) + \OPS(u_{n}) } \; \le \;
\NORMRPAR { \OPC(u) - \OPS(u_{n}) } \; + \; \NORMRPAR { \OPS(u_{n}) }
\; \le \; \NORMRPAR { \OPC(u) - \OPS(u_{n}) } \; + \; \KH \;  \NORMH { u_{n} } $. 
Taking the limit $ n \to \infty $, we obtain  $ \parallel \OPC (u) \parallel_{\RPAR} \le \KH \parallel u \parallel_{\HILBERT} $.
Hence $ \OPC (u) $ is a bounded linear operator from $ C(\THREEBETA)$ to $\RPAR$, such that
$ \OPC ( u ) = \OPS ( u ) $, $ \forall u \in S ( \THREEBETA ) $, and $ \OPC ( u ) = \OPS ( u ) = \OPZ ( u ) $, $ \forall u \in \THREEBETA $
\subsubsection{Null space $\KERC$  and 
topological complement $\KERPER$ of the operator $\OPC: C(\THREEBETA) \to \RPAR $   } \label{subsub:KERNEL}
Define the kernel or null space of the operator $\OPC$ as $ \KERC = \{ u \in C(\THREEBETA): \OPC(u) = 0 \}$.
It is readily seen that $\KERC$ is a closed linear subspace of $C(\THREEBETA)$, $ \KERC \subseteq C(\THREEBETA) \subseteq  \HILBERT $.
The orthogonal complement of $\KERC$ with respect to $C(\THREEBETA)$ is 
$\KERPER = \{ u \in C(\THREEBETA): \; \PRODH { u,w } = \; 0 \; \; \forall w \in \KERC \} $. Note that the orthogonal complement of $\KERPER$ with
respect to $C(\THREEBETA)$ is $\KERC$. It is readily shown that $\KERPER$ is a closed linear subspace of $C(\THREEBETA)$,
$ \KERPER \subseteq C(\THREEBETA) \subseteq \HILBERT $.
Next, let's show that $ C(\THREEBETA) = \KERPER \oplus \KERC $, i.e. for each $u \in C(\THREEBETA)$ 
there exist unique elements i.s.n. $ v \in \KERPER$ and $ w \in \KERC$, such that $ u = v + w \; \ISN$
We have:

{\bf Fact 1}, (Minimum Distance to a Convex Set, \cite{CONWAY} p. 8)
Let $ u \in C(\THREEBETA)$, since $\KERC$ is a closed convex subset of the {\em complete} Hilbert vector space $\HILBERT$, there exists a $ w(u) \in \KERC$, such that $ \parallel u - w(u) \parallel_{\; \HILBERT} \; \le \;
\parallel u - z \parallel_{\; \HILBERT}$, $\forall z \in \KERC$, and that element is unique i.s.n., i.e. 
if there exists another $ w^{\prime}(u) \in \KERC $
such that $ \parallel u - w^{\prime}(u) \parallel_{\; \HILBERT} \; \le \;
\parallel u - z \parallel_{\; \HILBERT}$, $\forall z \in \KERC$, then  $ \parallel w(u) - w^{\prime}(u) \parallel_{\; \HILBERT} \; = \; 0 $.

{\bf Fact 2}, (Principle of Orthogonality, \cite{CONWAY} p. 9)
Define $ v(u) = u - w(u)$, then $ v ( u ) $ is  orthogonal to each of the elements
in $\KERC$, so that $ v(u) \in \KERPER $. Additionally, if Fact 2 is true then Fact 1 is true.
The element $  w(u) $ is defined as the orthogonal projection of $ u $ on the closed subspace $ \KERC $ denoted as 
$ w(u) = \PROJ ( u \mid \KERC ) $, similarly $ v(u) = \PROJ ( u \mid \KERPER )$.

Hence $u \in C(\THREEBETA) $ may be decomposed  as $ u = v(u) + w(u) \; \ISN $ with $ v(u) \in \KERPER$ and
$ w(u) \in \KERC $. This decomposition is unique i.s.n.: if we also may write $ u = v^{\prime}(u) + w^{\prime}(u) \; \ISN $, 
with $ v^{\prime}(u) \in \KERPER$ and $ w^{\prime}(u) \in \KERC$, 
then $ v(u) - v^{\prime}(u) = w^{\prime}(u) -w(u) \; \ISN $ with 
$ v(u) - v^{\prime}(u) \in \KERPER$ and $ w^{\prime}(u) -w(u) \in \KERC $ by linearity.
Then, 
$ \parallel v(u) - v^{\prime}(u) \parallel_{\HILBERT}^{2} \; = \; { \PRODH { v(u) - v^{\prime}(u), v(u) - v^{\prime}(u) } }
\; = \; \PRODH { v(u) - v^{\prime}(u), w^{\prime}(u) -w(u) } \; = \; 0 $.
Similarly $ \parallel w(u) - w^{\prime}(u) \parallel_{\HILBERT}^{2} = 0 $. Hence $\KERC$ and $\KERPER$ are topological complements, 
\cite{CONWAY} p. 93, i.e.  $ C(\THREEBETA) = \KERPER \oplus  \KERC $.
\subsubsection{ Images  of $\THREEBETA$, $S(\THREEBETA)$, $ C(\THREEBETA) $ and $ \KERPER $ } \label{subsub:IMAGES}
The previous properties are valid if we replace 
the space $ \RPAR $
with an arbitrary Banach space. The following properties depend strongly on the finite dimensional character
of $ \RPAR $.
The main property is that $\KERPER$ is a finite dimensional sub-space of $\HILBERT$ as shown below.

Call $ \IMAGE [ \THREEBETA ]$ the image of the operator $ \OPZ: \THREEBETA \to \RPAR $, then, 
$ \IMAGE [ \THREEBETA ] \subseteq \RPAR$. 
Since $\RPAR$ has
dimension $\DIMPAR$ then any $\DIMPAR + 1$ vectors in $\RPAR$ are linearly dependent, and  there are $\DIMPAR$ linearly independent vectors that constitute a basis for $\RPAR$, see e.g. \cite{BIRKHOFF} pp. 178-179.
Since $ \IMAGE [ \THREEBETA ] \subseteq \RPAR$ then any $\DIMPAR + 1$ vectors in  $ \IMAGE [ \THREEBETA ]$ are linearly dependent. 
Since by hypothesis 
there exists at least one $ u_{0} \in \THREEBETA$ such that $ \OPZ(u_{0}) \neq 0$, then there exists $ \DIML \in \NAT $ with $ 1 \leq \DIML \le \DIMPAR $, such that any $\DIML + 1$ vectors in 
$ \IMAGE [ \THREEBETA ]$ are linearly dependent, and there are $\DIML$ linearly independent 
vectors $\OPZ(\widehat{\PI}_{1}), \OPZ(\widehat{\PI}_{2}), \cdots, \OPZ(\widehat{\PI}_{\DIML})$ that belong to $ \IMAGE [ \THREEBETA ]$ with $ \PIHI \in \THREEBETA $, for $ 1 \le i \le \DIML $. 
Note that $ \IMAGE [ \THREEBETA ]$ is not necessarily a linear subspace.

The elements $ \PIHI \in \THREEBETA $ for $ 1 \le i \le \DIML $, are linearly independent i.s.n., 
i.e. whenever there are real coefficients $\ASUBI \in \REALS $ for $ 1 \le i \le \DIML $, for which we have $\NORMPIH = 0$, then  $\ASUBI = 0 $ for $ 1 \le i \le \DIML $.  If not, there would exist
$\ASUBI$'s,  $\ASUBI \in \REALS $ for $ 1 \le i \le \DIML $, not all null, such that $ \NORMPIH \; = \; 0 $,
but then, because of the OP-HRBB condition $ \NORMLPIH \le \KH \NORMPIH$, see (\ref{eq:OP-HRBB}), it would be  $\NORMLPIH = 0$, iff
$ \SLPIH = 0 $, but the $ \OPZ( \PIHI ) $'s are l.i., so that it should be $ \ASUBI = 0 $, $ 1 \le i \le \DIML $, which is a contradiction.

Next, decompose each $ \PIHI $ as in the previous item \ref{subsub:KERNEL}, i.e. for $ 1 \le i \le \DIML $, $ \PIHI = \VHI + \WHI \; \ISN $,
where $ \VHI = \PROJ (\PIHI \mid \KERPER ) $ and\break
$ \WHI = \PROJ (\PIHI \mid \KERC ) $, so that
$ \VHI \in \KERPER \subseteq C( \THREEBETA )$ and $ \WHI \in \KERC  \subseteq C( \THREEBETA )$.
Note that, even though $ \PIHI \in \THREEBETA $, and then $ \PIHI \in S( \THREEBETA ) $, in general it may happen that
$ \VHI \notin S( \THREEBETA ) $ and $ \WHI \notin S( \THREEBETA ) $.
Since, see item \ref{subsub:EXTC}, $ \OPZ( \PIHI ) = \OPC ( \PIHI ) = \OPC  (\VHI + \WHI ) =
\OPC ( \VHI ) + \OPC ( \WHI ) = \OPC ( \VHI )$, then the vectors  $ \OPC ( \VHI ) $'s are linearly independent. 
Hence, the elements $ \VHI $'s are linearly independent i.s.n.: if not, there would exist
$\ASUBI$'s,  $\ASUBI \in \REALS $ for $ 1 \le i \le \DIML $, not all null, such that $ \NORMVH \; = \; 0 $. 
Then, since the extension $ \OPC $ is a bounded linear operator, see item \ref{subsub:EXTC}, then $ \NORMLVH \le \KH \NORMVH$, so that it would be  $\NORMLVH = 0$, iff
$ \SLVH = 0 $. But the $ \OPC (\VHI)  $'s are l.i., so that it should be $ \ASUBI = 0 $, $ 1 \le i \le \DIML $, which is a contradiction.

Since the $ \VHI $'s are linearly independent i.s.n., and they all belong to
$\KERPER$, use the Gram-Schmidt procedure, see e.g. \cite{BIRKHOFF}, p. 204,  to obtain $\DIML$ orthonormal elements 
$\UHI \in \KERPER \subseteq C( \THREEBETA ) \subseteq \HILBERT$, that span the same space than the $ \VHI $'s,
so that $ \| \UHI \|_{\HILBERT} = 1$, $ \PRODH { \UHI, \UHJ } = 0$ for $ i \neq j$,
and $ \PRODH { \UHI, w } = 0$, for $ 1 \le i \le \DIML $ and $ \forall \; w \in \KERC$.
Hence, each $\UHI$ is a linear combination i.s.n. of the $\VHI$'s, and since this transformation is invertible, then each $\VHI$ is a linear transformation i.s.n. of the $\UHI$'s. The vectors $\OPC (\UHI)$ are linearly independent: if not, there would exist
$\ASUBI$'s,  $\ASUBI \in \REALS $ for $ 1 \le i \le \DIML $, not all null, such that $ \SLUH \; = \; 0 $, but then, 
$ \OPC ( \SUH ) = 0$, so that $ \SUH \in \KERC$. Then, for $ 1 \le k \le \DIML$, we have  $ \PRODH { \UHK, \SUH } = 0 $.
But $ \PRODH { \UHK, \SUH } = \sum_{i=1}^{\DIML} \ASUBI  \PRODH { \UHK, \UHI } = 
\ASUBK  {\| \UHK \|^{2}_{\HILBERT}} =
\ASUBK$, so that $\ASUBK = 0$ for $ 1 \le k \le \DIML $, which is a contradiction.

Call $ \IL $ the span of the linearly independent vectors $ \OPC (\PIHI )$, for $ 1 \le i \le \DIML $, so that 
$ \IMAGE [ \THREEBETA) ] \subseteq \IL  $. Since $ \OPC (\PIHI )= \OPC (\VHI )$, then $\IL$ is the span of the linearly independent vectors $ \OPC (\VHI )$, for $ 1 \le i \le \DIML $. Since each $ \VHI $ is a linear combination i.s.n. of the 
linearly independent i.s.n. elements $ \UHI $, then, since $\OPC$ is a linear operator, each vector $\OPC (\VHI )$ is a linear combination of the linearly independent vectors $ \OPC (\UHI )$ and vice-versa, and then $ \IL$ is the span of the 
linearly independent vectors $ \OPC (\UHI )$, for $ 1 \le i \le \DIML $. 

Call $ \IMAGE [ S(\THREEBETA) ]$ the image of the operator $ \OPS: S(\THREEBETA) \to \RPAR $. Recall that
if $ u \in S(\THREEBETA) $ then $\OPS (u) = \OPC (u)$. Clearly, $ \IL \subseteq \IMAGE [ S(\THREEBETA) ]$.
If $ u \in S(\THREEBETA) $ then $u$ is a linear combination i.s.n.
of a finite number of elements in $\THREEBETA$, and then the vector $\OPC (u) \in \IMAGE [ S(\THREEBETA) ] $,  is the same
linear combination of the corresponding finite number of vectors in $\IMAGE [ \THREEBETA ]$. 
But since each vector
in $\IMAGE [ \THREEBETA ]$ is a linear combination of the vectors $\OPC (\UHI)$, for $ 1 \le i \le \DIML $,
then $ \OPC (u)$ is a linear combination of the independent vectors $\OPC (\UHI)$, for $ 1 \le i \le \DIML $, hence $  \OPC (u) \in \IL$, so that 
$\IMAGE [ S(\THREEBETA) ] = \IL$. 
If $ u \in  S(\THREEBETA) $, then since  $ \IMAGE [ S(\THREEBETA) ] = \IL $, 
there exist $\ALFAIU \in \REALS$, $ 1 \le i \le \DIML $, such that $ \OPC (u) = \sum_{i=1}^{\DIML} \ALFAIU \; \OPC( \UHI ) $.
Define $ \GAMAU = u - \sum_{i=1}^{\DIML} \ALFAIU \;  \UHI $, then $ \OPC (\GAMAU) = 0$, so that $\GAMAU \in \KERC$ and
$ \sum_{i=1}^{\DIML} \ALFAIU \;  \UHI \in \KERPER $. Hence $u \in S(\THREEBETA)$ may be written as
$ u  = \sum_{i=1}^{\DIML} \ALFAIU \;  \UHI  + \GAMAU \; \ISN$
\begin{OBSERVATION}
Note that, whenever $ u \in C ( \THREEBETA ) $ may be written as $ u  = \sum_{i=1}^{\DIML} \ALFAIU \;  \UHI  + \GAMAU $, where
$ \GAMAU \in \KERC$ and the $ \UHI$'s, $ 1 \le i \le \DIML$,  are orthonormal elements in $ \KERPER $, then we have
\begin{equation} \label{eq:normsumh}
\NORMH{ u }^{2} = \sum_{i=1}^{\DIML}  \left| \ALFAIU \right|^{2} +  \NORMH{ \GAMAU }^{2}
\end{equation}
\end{OBSERVATION}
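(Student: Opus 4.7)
The plan is to compute $\NORMH{u}^{2}$ directly from $\NORMH{u}^{2} = \PRODH{u,u}$ by substituting the given decomposition $u = \sum_{i=1}^{\DIML} \ALFAIU \, \UHI + \GAMAU$ and expanding bilinearly. Three families of terms arise, and I would show that two of them collapse using the orthogonality properties already proved in Sections \ref{subsub:KERNEL} and \ref{subsub:IMAGES}, leaving exactly the two summands on the right-hand side of (\ref{eq:normsumh}).

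Concretely, I would write
\begin{displaymath}
\PRODH{u,u} = \sum_{i=1}^{\DIML} \sum_{j=1}^{\DIML} \ALFAIU \, \alpha_{j}(u) \, \PRODH{\UHI, \UHJ} \; + \; 2 \sum_{i=1}^{\DIML} \ALFAIU \, \PRODH{\UHI, \GAMAU} \; + \; \PRODH{\GAMAU, \GAMAU}.
\end{displaymath}
For the double sum, I would invoke the orthonormality of the $\UHI$'s established at the end of Section \ref{subsub:IMAGES}, namely $\PRODH{\UHI,\UHJ} = 0$ for $i \neq j$ and $\PRODH{\UHI,\UHI} = 1$, so only the diagonal terms survive and contribute $\sum_{i=1}^{\DIML} \left| \ALFAIU \right|^{2}$. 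For the middle sum, each $\UHI$ lies in $\KERPER$ while $\GAMAU \in \KERC$, and by the very definition of $\KERPER$ given in Section \ref{subsub:KERNEL}, every cross term $\PRODH{\UHI,\GAMAU}$ vanishes. The last term is $\NORMH{\GAMAU}^{2}$ by definition of the semi-norm. Adding the three contributions yields (\ref{eq:normsumh}).

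No substantive obstacle arises: the identity is just the Pythagorean decomposition for a finite orthonormal family plus a remainder lying in the orthogonal complement, specialized to the present semi-inner-product setting. The only point worth a moment of care is that the decomposition of $u$ is unique only in semi-norm, as established in Section \ref{subsub:KERNEL}; however, each step of the computation depends only on semi-inner-product values, and whenever $w_{1} = w_{2} \; \ISN$ the Cauchy--Schwarz inequality gives $\left| \PRODH{v, w_{1} - w_{2}} \right|^{2} \le \NORMH{v}^{2} \NORMH{w_{1} - w_{2}}^{2} = 0$ for every $v \in \HILBERT$, so the ambiguity modulo $\NORMH{\cdot}$-zero elements does not affect the identity.
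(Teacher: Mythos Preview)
Your argument is correct and is precisely the standard Pythagorean computation the paper has in mind. In the paper this result is stated as an Observation without an explicit proof, the orthonormality of the $\UHI$'s and the orthogonality between $\KERPER$ and $\KERC$ having just been established; your expansion via bilinearity and the remark on semi-norm ambiguity fill in exactly the details the paper leaves to the reader.
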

Call $ \IMAGE [ C(\THREEBETA) ]$ the image of the operator $ \OPC: C(\THREEBETA) \to \RPAR $.
Since\break $ \IMAGE [ S(\THREEBETA) ] \subseteq  \IMAGE [ C(\THREEBETA) ]$ then $ \IL \subseteq \IMAGE [ C(\THREEBETA) ]$.
Fix an arbitrary $ u \in C(\THREEBETA) $, then there exists a sequence $ (u_{n})_{n \in \NAT}$ of elements $ u_{n} \in S(\THREEBETA)$ such that $ \| u_{n} - u \|_{\HILBERT} \to 0 $, 
and $ \| \OPC (u_{n}) - \OPC (u) \|_{\RPAR} \to 0 $. 
Since $ \| u_{n} - u \|_{\HILBERT} \to 0 $,
then $ {\left( u_{n} \right)}_{n \in \NAT} $ is a Cauchy fundamental sequence in $\HILBERT$.
Since $ u_{n} \in S(\THREEBETA)$, then, there exist sequences $ {\left( \ALFAIUN \right)}_{n \in \NAT} $, with $\ALFAIUN \in \REALS$  for $ 1 \le i \le \DIML $, $ \forall n \in \NAT $, and a sequence $ {\left( \GAMAIUN \right)}_{n \in \NAT} $, with $\GAMAIUN \in \KERC$, $\forall n \in \NAT$, 
such that $ u_{n}  = \sum_{i=1}^{\DIML} \ALFAIUN \;  \UHI  + \GAMAIUN$.
Since $ {\left( u_{n} \right)}_{n \in \NAT} $ is a Cauchy fundamental sequence in $\HILBERT$,
with $ u_{n} \in S(\THREEBETA) \subseteq  C(\THREEBETA)$,
then, (\ref{eq:normsumh})
shows that the sequences $ {\left( \ALFAIUN \right)}_{n \in \NAT} $ for $ 1 \le i \le \DIML $ are Cauchy fundamental sequences of real numbers, and the sequence $ {\left( \GAMAIUN \right)}_{n \in \NAT} $ is a Cauchy fundamental sequence
of elements in $ \KERC \subseteq C(\THREEBETA) \subseteq \HILBERT$. 
Since the reals are complete, there exist real numbers $\BETAZI \in \REALS$ for which $\ALFAIUN \to \BETAZI$ for $ 1 \le i \le \DIML $,
and, since $ \HILBERT $ is complete and $ \KERC $ is closed,
there exists an element $ \eta \in \KERC $ such that $ \| \GAMAIUN - \eta \|_{\HILBERT} \to 0 $.
Define $ u^{\prime} = \sum_{i=1}^{\DIML} \BETAZI \;  \UHI  + \eta$, then $ u^{\prime} \in  C(\THREEBETA)$. 
Then (\ref{eq:normsumh}) shows that $ \left\|  u_{n} - u^{\prime} \right\|_{\HILBERT} \to 0$.
Since $ \left\| u - u^{\prime} \right\|_{\HILBERT} \le  \left\| u - u_{n} \right\|_{\HILBERT} + \left\| u_{n} - u^{\prime} \right\|_{\HILBERT}$, taking the limit, we obtain $ u = u^{\prime} \; \ISN $
Hence for each $ u \in C(\THREEBETA)$ we have found real numbers $ \ALFAIU \in \REALS$, for $ 1 \le i \le \DIML $,
and an element $ \GAMAU \in \KERC$ such that
\begin{equation} \label{eq:udecomp}
u = \sum_{i=1}^{\DIML} \ALFAIU \;  \UHI  + \GAMAU \qquad \; \ISN \qquad \forall u \in C(\THREEBETA)
\end{equation}
Then, $ \OPC (u) = \sum_{i=1}^{\DIML} \ALFAIU \;  \OPC (\UHI) $, so that $ \OPC (u) \in \IL $, and then
$ \IMAGE [ C(\THREEBETA) ] = \IL $.
Additionally, since for arbitrary $ u \in C(\THREEBETA) $,
from (\ref{eq:udecomp}), we have\break
$ \PROJ ( u \mid \KERPER ) = \sum_{i=1}^{\DIML} \ALFAIU \;  \UHI   \; \ISN $, 
then, $ u \in \KERPER $ iff $ u = \sum_{i=1}^{\DIML} \ALFAIU \;  \UHI $\break $ \ISN $, and
then $\KERPER$ is a \emph{finite}
dimension subspace, $ \KERPER \subseteq C(\THREEBETA) \subseteq \HILBERT$, with dimension $\DIML$, even though
$\HILBERT$ might be a non-separable space. 

Hence, we have $ \IMAGE[ \KERC ] = \{ 0 \} $,
and $ \IMAGE [ \THREEBETA ] \subseteq  \IL = \IMAGE [ S(\THREEBETA) ]  = \IMAGE [ C(\THREEBETA) ] = \IMAGE[ \KERPER ] $. 
\subsubsection{Generalized Riesz representation of the operator $\OPC: C(\THREEBETA) \to \RPAR $ } 
From (\ref{eq:udecomp}), it is  $ \OPC (u) = \sum_{i=1}^{\DIML} \ALFAIU \;  \OPC (\UHI)$, for all $ u \in C(\THREEBETA) $.
Since the $ \UHI$'s are orthonormal and perpendicular to $\GAMAU$,
then, 
$ \PRODH { u , \; \UHK } \; =$\break
$ \PRODH { \sum_{i=1}^{\DIML} \ALFAIU \;  \UHI  + \GAMAU , \; \UHK } 
\; = \;
\ALFAKU 
$. Hence
\begin{equation} 
\nonumber  
\OPC (u ) \;= \; \sum_{i=1}^{\DIML} \PRODH { u, \UHI }  \OPC ( \UHI ) \qquad \forall u \in C(\THREEBETA)
\end{equation}
see (\ref{eq:REP}), which is the vector generalized Riesz representation for the extension $ \OPC $ of an
operator $ \OPZ: \THREEBETA \to \RPAR$, $ \THREEBETA \subseteq \HILBERT $, satisfying the OP-HRBB condition.
\section{Optimal estimator under the HRBB condition}    \label{sec:OPTEST}
The space $\SPACELTWOTRUE$ is a Hilbert space, \cite{HEWITT} p. 194,
with semi-inner product 
$ \PRODH { u_{1} , u_{2} } = \PRODLTWO { u_{1} , u_{2} } = \int u_{1} \; u_{2} \; \DIFF \RPTRUE $, $ \forall \, u_{1},  u_{2} \in \SPACELTWO $,
semi-norm $ \NORMH{ u } = \NORMLTWO{ u } = \bigl( \int u^{2} \; \DIFF \RPTRUE \bigr)^{1/2} $,
and equality in semi-norm (i.s.n.) given by equality with
probability 1 (w.p. 1). 
\begin{LEMMA}  \label{lemma:HRBB-EST}
If the HRBB condition holds for Problem \ref{prob:MAIN},
see Definition \ref{def:HRBB},
then there exists  
a finite covariance unbiased estimator $ \PSIHC \in \UG $.
\end{LEMMA}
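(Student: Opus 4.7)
The plan is to recognize that the HRBB condition of Definition \ref{def:HRBB} is precisely the OP-HRBB hypothesis (\ref{eq:OP-HRBB}) of Theorem \ref{theo:OP-HRBB} applied to the concrete Hilbert space $\HILBERT = \SPACELTWOTRUE$ (with semi-inner product $\PRODLTWO{\cdot,\cdot}$ and equality i.s.n.\ being equality w.p.\ 1), the subset $\THREEBETA \subseteq \SPACELTWOTRUE$ guaranteed by Hypothesis \ref{hypo:TWO}, and the operator $\OPZ:\THREEBETA \to \RPAR$ of Section \ref{sub:BETA-OP}, which sends $\PIT$ to $\HT$. To invoke that theorem I first need an element $u_{0} \in \THREEBETA$ with $\OPZ(u_{0}) \neq 0$: by item \ref{it:NONC} of Hypothesis \ref{hypo:BASIC} there exist $\INDX_{1}, \INDX_{2} \in \SINDX$ with $\GTONE \neq \GTTWO$, so at least one of $\HT(\INDX_{1})$, $\HT(\INDX_{2})$ is non-zero, yielding the required $u_{0}$.

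Theorem \ref{theo:OP-HRBB} then delivers $\DIML \in \NAT$ with $1 \le \DIML \le \DIMPAR$ and orthonormal (i.s.n.) elements $\UHI \in C(\THREEBETA) \subseteq \SPACELTWOTRUE$, $1 \le i \le \DIML$, together with the bounded linear extension $\OPC : C(\THREEBETA) \to \RPAR$ of $\OPZ$ such that
\begin{equation}
\nonumber
\OPC(u) \;=\; \sum_{i=1}^{\DIML} \PRODLTWO{u, \UHI}\, \OPC(\UHI) \qquad \forall\, u \in C(\THREEBETA).
\end{equation}
The next step is to use this representation in reverse to construct the candidate estimator. Define the $\RPAR$-valued function
\begin{equation}
\nonumber
\FIHC \;=\; \sum_{i=1}^{\DIML} \OPC(\UHI)\; \UHI,
\end{equation}
where each $\OPC(\UHI) \in \RPAR$ is a constant vector and each $\UHI$ is a scalar function in $\SPACELTWOTRUE$. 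Because every component of $\FIHC$ is a finite linear combination of elements of $\SPACELTWOTRUE$, we have $\FIHC \in \VSPACETWOTRUE$.

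Set $\PSIHC = \FIHC + \GTRUE$. Unbiasedness follows by a direct calculation: for every $\INDX \in \SINDX$, since $\PIT \in C(\THREEBETA) \subseteq \SPACELTWOTRUE$ and $\OPC$ extends $\OPZ$,
\begin{equation}
\nonumber
\int \FIHC\, \PIT\, \DIFF \RPTRUE
\;=\; \sum_{i=1}^{\DIML} \OPC(\UHI)\, \PRODLTWO{\UHI, \PIT}
\;=\; \OPC(\PIT) \;=\; \OPZ(\PIT) \;=\; \HT,
\end{equation}
and since $\int \PIT\, \DIFF \RPTRUE = 1$ we obtain $\int \PSIHC\, \PIT\, \DIFF \RPTRUE = \HT + \GTRUE = \GT$. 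Finally, Cauchy--Schwarz applied with $[\PSIHC]_{k} \in \SPACELTWOTRUE$ and $\PIT \in \SPACELTWOTRUE$ gives $\PSIHC \in \VSPACEONE$ for each $\INDX \in \SINDX$, so $\PSIHC$ is a finite-covariance unbiased estimator and $\PSIHC \in \UG$.

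The main obstacle is really just confirming the hypotheses of Theorem \ref{theo:OP-HRBB}, namely that some $\OPZ(u_{0}) \neq 0$; the construction of $\FIHC$ from the orthonormal representation, and the verification of unbiasedness and square-integrability, are then mechanical. Routine details that I am skipping include the elementary check that $\FIHC$ actually lies in $C(\THREEBETA)^{\DIMPAR}$ (hence in $\VSPACETWOTRUE$) and that equality i.s.n.\ in the Hilbert space $\SPACELTWOTRUE$ is exactly equality w.p.\ 1, which make the representation formula evaluate correctly when applied at $u = \PIT$.
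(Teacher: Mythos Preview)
Your proposal is correct and follows essentially the same route as the paper: identify $\HILBERT = \SPACELTWOTRUE$, apply Theorem \ref{theo:OP-HRBB} to the operator $\OPZ$ of Section \ref{sub:BETA-OP}, define $\FIHC = \sum_{i=1}^{\DIML}\OPC(\UHI)\,\UHI$, and read off unbiasedness from $\OPC(\PIT)=\OPZ(\PIT)=\HT$. You are slightly more explicit than the paper in checking the hypothesis $\OPZ(u_0)\neq 0$ via item \ref{it:NONC} of Hypothesis \ref{hypo:BASIC}, but otherwise the arguments coincide.
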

\begin{proof}
Since $ \SPACELTWO $ is a Hilbert space, then we take the elements of $ \HILBERT $ as the functions in $ \SPACELTWO $.
Define the operator  $ \OPZ ( u ) = \FUNCH \bigl( \PI^{-1} ( u ) \bigr) $, for all $ u \in \THREEBETA $,
see Section \ref{sub:BETA-OP}. 
Since the HRBB condition 
holds for Problem \ref{prob:MAIN}, see (\ref{eq:HRBB}), then the OP-HRBB condition holds, 
see (\ref{eq:OP-HRBB}), and then we may apply Theorem \ref{theo:OP-HRBB}. 
From (\ref{eq:REP}) we obtain 
\begin{align}   \label{eq:OP-REP}
\OPC (u) 
&= \sum_{i=1}^{\DIML} \PRODH { u, \UHI } \OPC ( \UHI )
= \sum_{i=1}^{\DIML} \OPC ( \UHI) \int u \; \UHI  \; \DIFF \RPTRUE 
\\
\nonumber
&= \int u \; \left[ \sum_{i=1}^{\DIML} \UHI \; \OPC(\UHI) \right] \DIFF \RPTRUE \qquad \forall u \in C ( \THREEBETA ) 
\end{align}
where the $ \UHI $'s are orthonormal, with $ \UHI \in \KERPER \subseteq C( \THREEBETA )
$, for $ 1 \le i \le \DIML $.
Note the importance of working with finite dimensions $ \DIMPAR $ and $ \DIML $, with $ 1 \le \DIML \le \DIMPAR $, since this permits exchanging sums and integrals invoking elementary properties of Lebesgue integrals.
Define
\begin{equation} \label{eq:HRBB-EST}
\FIHC = \sum_{i=1}^{\DIML} \UHI \; \OPC(\UHI)
\end{equation}
so that 
$
\OPC (u) = \int \FIHC \; u \; \DIFF \RPTRUE 
$, $ \forall \; u \in C ( \THREEBETA ) $.

Since each $\OPC(\UHI) $ is some constant
real vector, i.e. $\OPC(\UHI) \in \RPAR $, for $ 1 \le i \le \DIML $, and each $\UHI \in \SPACELTWOTRUE $, then each component of the vector 
$ \FIHC $ is square integrable, i.e. $ \left[ \FIHC \right]_{i} \in \SPACELTWOTRUE $, equivalently $ \FIHC \in \VSPACETWOTRUE $. 
Then $ \FIHC $ is a measurable function from $ \VSPACETWOTRUE $ to $ \RPAR $, so that
$ \FIHC (\SAMP)= \sum_{i=1}^{\DIML} \UHI (\SAMP) \; \OPC(\UHI) $ is a random vector, 
$ \FIHC (\SAMP) : \SOM \to \RPAR $, that does not depend on the sub-indexes $\INDX \in \SINDX $.
Additionally since  $ \FIHC \in \VSPACETWOTRUE $, then  $ \PSIHC = \FIHC + \GTRUE $ has finite
covariance as previously discussed in Section \ref{subsec:MTS}.
Since $ \OPC (u) = \OPZ (u) $ if $ u \in \THREEBETA $, see Section \ref{subsub:EXTC}, and, for each $ u \in \THREEBETA $ there exists 
$ \INDX \in \SINDX $ such that $ u = \PIT $, see Hypothesis \ref{hypo:BASIC}, 
and $ \OPZ(\PIT) = \HT $, $ \forall \INDX \in \SINDX $, see (\ref{eq:OPZ}),
then, using (\ref{eq:OP-REP}) and (\ref{eq:HRBB-EST}),
$
\HT = \OPZ ( \PIT ) = \OPC (\PIT) = \int  \FIHC \; \PIT \; \DIFF \RPTRUE =  \int  \FIHC \; \left( d \RPT / d \RPTRUE \right) \; \DIFF \RPTRUE 
= \int \FIHC \; \DIFF \RPT = \int \FIHC (\SAMP) \; \DIFF  \SOMPT = \EXPT \left[ \FIHC \right]
$, $ \forall \INDX \in \SINDX $, see (\ref{eq:UNBIASED}).
Then, $ \EXPT \left[ \PSIHC \right] = \int \PSIHC \; \DIFF \RPT = \int \PSIHC (\SAMP) \; \DIFF  \SOMPT = \GT $,
$ \forall \INDX \in \SINDX $.
Hence, $ \PSIHC (\SAMP) $ is unbiased for all $ \INDX \in \SINDX $, and then $ \PSIHC \in \UG $.
\end{proof}
\begin{DEFINITION} \label{def:HRBB-EST}
Define the HRBB estimator as $ \PSIHC = \FIHC + \GTRUE $, 
where $ \FIHC $ is given by (\ref{eq:HRBB-EST}) as discussed in Lemma \ref{lemma:HRBB-EST}, so that $ \PSIHC  \in \UG $.
\end{DEFINITION}
\begin{DEFINITION}[Barankin-efficient estimator]     \label{def:BKEFF}
A finite covariance unbiased estimator $ \PSIH \in \UG $ for Problem \ref{prob:MAIN}, will be called Barankin-efficient, if
$ \COVTRUE ( \PSI ) \ge \COVTRUE ( \PSIH )$, for all $ \PSI \in \UG $.
Equivalently, $ \PSIH $ is a minimum-covariance unbiased estimator for Problem \ref{prob:MAIN}.
\end{DEFINITION}
\begin{DEFINITION}    \label{def:MSUP}
Let $ \mathscr{W} $ be a collection of real s.n.n.d. matrices of dimensions $ N \times N $. 
A s.n.n.d. matrix $ A \in \REALS^{N \times N} $ is an upper (lower) bound for $ \mathscr{W} $  if $ A \ge W $ ($ A \le W $),
$ \forall W \in \mathscr{W} $.
Define, if it exists, the
matrix-supreme (msup) of the matrices in $ \mathscr{W} $, as a real s.n.n.d. matrix $A$ of dimensions $ N \times N$,
such that $ A \ge W $, $ \forall W \in \mathscr{W} $, and such that for each $ \epsilon \in \REALS^{+}$,
$ \epsilon > 0$, there exists $ W(\epsilon) \in  \mathscr{W} $ such that 
$
\| A - W(\epsilon) \|_{F} < \epsilon
$.
The notation will be $ A = \underset{ W \in \mathscr{W} } {\MSUP} \; \mathscr{W} $. If $ A \in \mathscr{W} $, then A will be called 
the matrix-maximum of $ \mathscr{W} $.
\end{DEFINITION}

Define $ \MATRIXL $ as the real matrix, $ \MATRIXL \in \REALS^{ \DIMPAR \times \DIML } $, 
with columns $ \left[ \MATRIXL \right]_{i} = \OPC(\UHI) $, for $ 1 \le i \le \DIML $,
and define $ {\UVECH}^{T} = \left( \UH_{1}, \UH_{2}, \cdots, \UH_{\DIML} \right)$,
so that  $ \FIHC = \MATRIXL \; \UVECH$, see (\ref{eq:HRBB-EST}).
Since the $\UHI$'s are orthonormal in $ \SPACELTWOTRUE $, then $ \EXPTRUE \left[ \UVECH \; \UVECH^{T} \right] = \IDL $, where $ \IDL $ is the identity matrix of dimensions $ \DIML \times \DIML $. 
From (\ref{eq:HRBB-EST}), we have:
$
\COVTRUE ( \PSIHC ) = \EXP_{\INDXTRUE} \left[ \FIHC \; \FIHC^{T} \right] = 
\EXP_{\INDXTRUE} \left[ \MATRIXL \; \UVECH \; \left( \MATRIXL \; \UVECH \right)^{T} \right] =
\MATRIXL \; \EXP_{\INDXTRUE} \left[  \UVECH \;  \UVECH^{T} \right] \MATRIXL^{T} = 
\MATRIXL \; \IDL \; \MATRIXL^{T} = \MATRIXL \; \MATRIXL^{T}
$
so that
\begin{equation} \label{eq:COV}
\COVTRUE ( \PSIHC ) = 
\MATRIXL \; \MATRIXL^{T}
= \sum_{i=1}^{\DIML} \OPC(\UHI)\; \OPC(\UHI)^{T}
\end{equation}
\begin{THEOREM}      \label{theo:HRBB-OPT}
If the HRBB condition holds for Problem \ref{prob:MAIN},
see Definition \ref{def:HRBB},
then the HRBB estimator $ \PSIHC $, see Definition \ref{def:HRBB-EST},
is an unbiased Barankin-efficient estimator, and $ \COVTRUE \left( \PSIHC \right) = \underset{ W \in \WA} {\MSUP} \; \WA  $.
\end{THEOREM}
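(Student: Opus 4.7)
The plan is as follows: Lemma \ref{lemma:HRBB-EST} already gives $\PSIHC \in \UG$, so applying Theorem \ref{theo:ONE} to $\PSIHC$ itself yields the upper-bound half of Definition \ref{def:MSUP} for free: $\COVTRUE(\PSIHC) \ge W(\QUADQ)$ for every $\QUADQ \in \CONDA$. The real task is to find, for each $\epsilon > 0$, a single quad-tuple $\QUADQ^{(\epsilon)} \in \CONDA$ with $\NORMFROB{W(\QUADQ^{(\epsilon)}) - \COVTRUE(\PSIHC)} < \epsilon$. Barankin efficiency (Definition \ref{def:BKEFF}) will then drop out by closedness of the s.n.n.d.\ cone.

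The approximation strategy is to exploit that each orthonormal $\UHI$ delivered by Theorem \ref{theo:OP-HRBB} lies in $C(\THREEBETA) = \overline{S(\THREEBETA)}$, so for each $n \in \NAT$ and $1 \le i \le \DIML$ one can pick a representative $u_i^{(n)} = \sum_{k} a_{i,k}^{(n)} \PI(\INDX_{i,k}^{(n)}) \in S(\THREEBETA)$ with $\NORMLTWO{u_i^{(n)} - \UHI} \to 0$. I would pool all the indexes $\INDX_{i,k}^{(n)}$ into a single tuple $\TAO^{(n)} \in \SINDX^{\DIMM^{(n)}}$ and pack the coefficients into a matrix $A^{(n)} \in \REALS^{\DIML \times \DIMM^{(n)}}$ such that the $i$-th coordinate of $A^{(n)} \VECB(\TAO^{(n)})$ is exactly $u_i^{(n)}$ w.p.\ 1. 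Then the $\DIML \times \DIML$ Gram matrix $A^{(n)} B(\TAO^{(n)}) (A^{(n)})^T$ has entries $\PRODLTWO{u_i^{(n)}, u_j^{(n)}}$, which by Cauchy--Schwarz tend to $\PRODLTWO{\UHI, \UHJ} = \KRON_{ij}$, so this Gram matrix converges to $\IDL$ and is invertible for $n$ large, placing $\QUADQ^{(n)} \in \CONDA$ eventually. Using Observation \ref{obs:MATG} applied to $\PSIHC$, the matrix $G(\TAO^{(n)}) (A^{(n)})^T = \EXPTRUE[\FIHC \, (A^{(n)} \VECB(\TAO^{(n)}))^T]$ has $(k,i)$ entry $\PRODLTWO{[\FIHC]_k, u_i^{(n)}}$, which by Cauchy--Schwarz and the HRBB representation (\ref{eq:HRBB-EST}) converges to $\PRODLTWO{[\FIHC]_k, \UHI} = [\OPC(\UHI)]_k = [\MATRIXL]_{k,i}$; thus $G(\TAO^{(n)}) (A^{(n)})^T \to \MATRIXL$. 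Continuity of matrix inversion and multiplication then yields
\begin{equation}
\nonumber
W(\QUADQ^{(n)}) \; \to \; \MATRIXL \, \IDL \, \MATRIXL^T = \MATRIXL \MATRIXL^T = \COVTRUE(\PSIHC)
\end{equation}
in Frobenius norm, settling the matrix-supreme identity.

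For Barankin efficiency, fix any $\PSI \in \UG$. Theorem \ref{theo:ONE} gives that $\COVTRUE(\PSI) - W(\QUADQ^{(n)})$ is s.n.n.d.\ for all $n$ sufficiently large. Since the cone of s.n.n.d.\ matrices of fixed dimension is closed under Frobenius-norm convergence, passing to the limit yields $\COVTRUE(\PSI) - \COVTRUE(\PSIHC) \ge 0$, which is Definition \ref{def:BKEFF}. The main obstacle I anticipate is organizational rather than mathematical---laying out $\TAO^{(n)}$ and $A^{(n)}$ coherently so that $A^{(n)} \VECB(\TAO^{(n)})$ faithfully reproduces the $\DIML$-vector of approximants $(u_1^{(n)}, \dots, u_{\DIML}^{(n)})$---after which the argument reduces to routine continuity in finite-dimensional matrix algebra.
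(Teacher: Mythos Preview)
Your proposal is correct and follows essentially the same route as the paper: approximate each $\UHI \in C(\THREEBETA)$ by elements of $S(\THREEBETA)$, pool the indexes into a single $\TAO^{(n)}$, pack the coefficients into a block-row matrix $A^{(n)}$, verify that the resulting $\DIML \times \DIML$ Gram matrix $A^{(n)} B(\TAO^{(n)}) (A^{(n)})^{T}$ converges to $\IDL$ and that $G(\TAO^{(n)}) (A^{(n)})^{T}$ converges to $\MATRIXL$, then conclude by continuity of inversion and multiplication together with closedness of the s.n.n.d.\ cone (the paper's Appendix Lemmas \ref{lemma:MATRICES} and \ref{lemma:MATLIM} formalize exactly these two continuity facts).

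Your write-up is in fact slightly cleaner than the paper's in two respects. First, the paper splits into Case~1 ($\UHI \in S(\THREEBETA)$ for all $i$, yielding a matrix-\emph{maximum}) versus Case~2 (some $\UHI \notin S(\THREEBETA)$, yielding only a matrix-supreme); you treat both cases uniformly, which is adequate since the theorem only asserts $\MSUP$. Second, to compute $G(\TAO^{(n)}) (A^{(n)})^{T}$ the paper writes $\UVECH = \SVECH(m) + (\UVECH - \SVECH(m))$ and expands into four terms, three of which must be shown to vanish; your direct identification of the $(k,i)$ entry as $\PRODLTWO{[\FIHC]_{k}, u_{i}^{(n)}} \to \PRODLTWO{[\FIHC]_{k}, \UHI} = [\MATRIXL]_{k,i}$ via Cauchy--Schwarz is shorter and equivalent.
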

\begin{proof}
The HRBB estimator $ \PSIHC $ is unbiased and has finite covariance as a consequence of Lemma \ref{lemma:HRBB-EST}. 
To show that it is Barankin-efficient
let's consider the following two cases.
\\
\indent
1) {\em All the $ \UHI$'s belong to $ S ( \THREEBETA ) $}.
Then,  $ \UVECH \in \bigl( S ( \THREEBETA ) \bigr)^{\DIML} $. Since $ \FIHC = \MATRIXL \; \UVECH $, then
$ \FIHC  \in \bigl( S ( \THREEBETA) \bigr)^{\DIMPAR} $, and then, see Theorem \ref{theo:ONE} and Observation \ref{obs:PARORD},
we have equality in (\ref{eq:THEONE}). 
More precisely, since each $ \UHI \in S ( \THREEBETA ) $, then, there exist $ M_{i} \in \NAT $, $ \AVEC_{i} \in \REALS^{  M_{i} } $, and
$ \TAO_{i} \in \SINDX^{M_{i}} $, such that $ \UHI = \AVEC_{i}^{T} \VECB ( \TAO_{i} ) $ w.p. 1, for $ 1 \le i \le \DIML $.
Define $ \widehat{M} = \sum_{i=1}^{\DIML} M_{i} $, and $ {\widehat{\TAO}}^{T} = \left( \TAO_{1}^{T} \cdots  \TAO_{\DIML}^{T} \right) $,
$ \widehat{\TAO} \in {\SINDX}^{\widehat{M}} $. 
Call $ \widehat{\VECB} = \VECB ( \widehat{\TAO} ) $, $ \widehat{\VECB} \in {\THREEBETA}^{\widehat{M}} $,
so that 
$ \widehat{\VECB}^{T} = \VECB^{T} (\widehat{\TAO}) = \left( \VECB^{T}(\TAO_{1}) \cdots \VECB^{T}(\TAO_{\DIML}) \right) $,
and $ \widehat{B} = \EXPTRUE \left[ \widehat{\VECB} \; \widehat{\VECB}^{T} \right] $, $ \widehat{B} \in \REALS^{ \widehat{M} \times \widehat{M} } $.
Define the real matrix $ \widehat{A} \in \REALS^{ \DIML \times \widehat{M} } $, as the block-diagonal matrix
$  \widehat{A} = \DIAG \left( \AVEC_{1}^{T}, \AVEC_{2}^{T}, \dots, \AVEC_{\DIML}^{T} \right) $, 
where each block $ \AVEC_{i}^{T} $ is of dimension $ 1 \times \ M_{i}$, for $ 1 \le i \le M_{i} $,
so that $ \UVECH =  \widehat{A} \;  \widehat{\VECB} $ w.p. 1. 
Since the $\UHI$'s are orthonormal in $ \SPACELTWOTRUE $, we have $ \EXPTRUE \left[ \UVECH \; \UVECH^{T} \right] = \IDL $. 
Since $ \EXPTRUE \left[ \UVECH \; \UVECH^{T} \right] =  
\EXPTRUE \left[ \widehat{A} \;  \widehat{\VECB}  \; \left( \widehat{A} \;  \widehat{\VECB} \right)^{T} \right] =
\widehat{A} \; \EXPTRUE \left[ \widehat{\VECB}  \; \widehat{\VECB}^{T} \right] \widehat{A}^{T} = 
\widehat{A} \; \widehat{B} \; \widehat{A}^{T}
$, then $ \widehat{A} \; \widehat{B} \; \widehat{A}^{T} = \IDL $, so that $ \DET \left( \widehat{A} \; \widehat{B} \; \widehat{A}^{T} \right) = 1$.
Since $ \PSIHC \in \UG $ is unbiased, then, see Observation \ref{obs:MATG}, $ G ( \TAOH )  = \EXPTRUE \left[ \FIHC \; \VECB^{T}(\TAOH) \right] = 
\EXPTRUE \left[ \MATRIXL \; \UVECH \; \VECB^{T}(\TAOH) \right] =  \MATRIXL \; \EXPTRUE  \left[ \UVECH \; {\widehat{\VECB}}^{T} \right] = 
\MATRIXL \; \EXPTRUE  \left[  \widehat{A} \;  \widehat{\VECB}  \; {\widehat{\VECB}}^{T} \right] =
\MATRIXL \widehat{A} \; \EXPTRUE  \left[  \widehat{\VECB}  \; {\widehat{\VECB}}^{T} \right] = \MATRIXL \; \widehat{A} \; \widehat{B} 
$.
Take $ \QUADQH = \left(  \widehat{M}, \DIML, \widehat{A}, \TAOH \right) $, see Definition \ref{def:CONDA}, so that $ \QUADQH \in \CONDA $ 
since $ \DET \left( \widehat{A} \; \widehat{B} \; \widehat{A}^{T} \right) = 1$,
and then
$ W ( \QUADQH ) \in \WA $.
Hence $ W ( \QUADQH ) = G ( \TAOH ) \;  \widehat{A}^{T} \left( \widehat{A} \; \widehat{B} \; \widehat{A}^{T} \right)^{-1} \widehat{A} \; G^{T}( \TAOH ) =
G ( \TAOH ) \;  \widehat{A}^{T} \widehat{A} \; G^{T}( \TAOH ) =
\MATRIXL \; ( \widehat{A} \; \widehat{B} \widehat{A}^{T} ) ( \widehat{A} \; \widehat{B}^{T} \widehat{A}^{T} ) \MATRIXL^{T}
= \MATRIXL \;  \MATRIXL^{T} = \COVTRUE ( \PSIHC )
$, see (\ref{eq:COV}). Then $  \COVTRUE ( \PSIHC ) \in \WA  $, and then, see (\ref{eq:THEONE}) and Observation \ref{obs:PARORD},
$ W ( \QUADQH ) = \COVTRUE ( \PSIHC ) $ is a matrix-maximum for the matrices $ W \in \WA $
and a matrix-minimum for the covariances of any unbiased estimator $ \PSI \in \UG$, 
so that $ \PSIHC $ is a minimal covariance unbiased estimator, i.e.
the unbiased HRBB estimator $ \PSIHC $ is Barankin-efficient.
\\
\indent
2) {\em At least for one $i^{*}$, $ 1 \le i^{*} \le \DIML $, we have that $ \UH_{i^{*}}$ belongs to $ C ( \THREEBETA ) $  and $ \UH_{i^{*}} \notin S ( \THREEBETA ) $}.
Since each $ \UHI $ belongs to $ C ( \THREEBETA ) $, then there exist sequences $ \left( \YIM \right)_{m \in \NAT} $, for $ 1 \le i \le \DIML$,
with $ \YIM \in S ( \THREEBETA ), 1 \le i \le \DIML,  \forall m \in \NAT$, 
such that $ \underset{m \to \infty}{\lim} \NORMLTWO{ \UHI - \YIM } = 0 $, for $ 1 \le i \le \DIML$.
As before, for each $ \YIM \in S ( \THREEBETA ) $, there exist $ M_{i}(m) \in \NAT$, $ \AVEC_{i}(m) \in \REALS^{ M_{i}(m) } $,
and $ \TAO_{i}(m) \in \SINDX^{M_{i}(m)} $, such that $ \YIM = \AVEC^{T}_{i}(m) \; \VECB( \TAO_{i}(m) ) $ w.p. 1,
and $ \underset{m \to \infty}{\lim} \NORMLTWO{ \UHI - \AVEC^{T}_{i}(m) \; \VECB \left( \TAO_{i}(m) \right) } = 0 $.
Define $ \widehat{M} (m) = \sum_{i=1}^{\DIML} M_{i}(m)$, 
define $ \TAOH^{T} (m) =  \left( \TAO^{T}_{1}(m) \cdots \TAO^{T}_{\DIML}(m) \right)  $,\break
$ \TAOH (m) \in \SINDX^{ \widehat{M} (m) } $,
and $ \widehat{\VECB}^{T}_{m} = \VECB^{T}(\TAOH(m)) =  \bigl( \VECB^{T}(\TAO_{1}(m)) \cdots  \VECB^{T}(\TAO_{\DIML}(m)) \bigr) 
$, 
$ \widehat{\VECB}_{m} \in \THREEBETA^{ \widehat{M} (m) } $.
Define the real matrix $ \widehat{A}(m) \in \REALS^{ \DIML \times \widehat{M}(m) } $, as the block-diagonal matrix
$  \widehat{A}(m) = \DIAG \left( \AVEC_{1}^{T}(m), \ldots, \AVEC_{\DIML}^{T}(m) \right) $, 
where each block $ \AVEC_{i}^{T}(m) $ is of dimension $ 1 \times  M_{i}(m)$, for $ 1 \le i \le \DIML $.
Define $ \SVECH^{T}(m) = \left( {\widehat{s}}_{1}(m) \cdots {\widehat{s}}_{\DIML}(m) \right) $,
$ \SVECH(m)  \in { \Bigl( S (\THREEBETA ) \Bigr)}^{ \DIML } $,
so that
$ \SVECH(m) =  \widehat{A}(m) \;  \widehat{\VECB}_{m} $ w.p. 1.
Define $ \MATRIXS (m) = \EXPTRUE \left[  \SVECH(m) \; \SVECH^{T}(m) \right]$, $\MATRIXS (m) \in \REALS^{ \DIML \times \DIML }$.
Then, $ \MATRIXS (m) = \EXPTRUE \left[  \SVECH(m) \; \SVECH^{T}(m) \right] =$\break
$
\widehat{A}(m) \; \EXPTRUE \left[   \widehat{\VECB}_{m}  \widehat{\VECB}^{T}_{m}   \right] \widehat{A}^{T}(m)
$.
Call $ \widehat{B}(m) = \EXPTRUE \left[   \widehat{\VECB}_{m}  \widehat{\VECB}^{T}_{m}   \right] $, 
$ \widehat{B}(m) \in \REALS^{ \widehat{M} (m) \times \widehat{M} (m) } $,
so that
$ \MATRIXS (m) = \widehat{A}(m) \; \widehat{B}(m) \; \widehat{A}^{T}(m) $.
Since $ \PSIHC $ is unbiased, see Observation \ref{obs:MATG}, we have 
$ G \left( \TAOH (m) \right)
=
\EXPTRUE \left[ \FIHC \; \VECB^{T}(\TAOH(m)) \right]
=
\EXPTRUE \left[ \MATRIXL \; \UVECH \; \VECB^{T}(\TAOH(m)) \right]
=\break
\MATRIXL \; \EXPTRUE  \left[ \Bigl( \SVECH(m) + [  \UVECH - \SVECH(m) ] \Bigr) \; {\widehat{\VECB}}^{T}_{m} \right]
=
\MATRIXL \; \EXPTRUE  \left[ \left( \widehat{A}(m) \;  \widehat{\VECB}_{m}  + [  \UVECH - \SVECH(m) ] \right) \; {\widehat{\VECB}}^{T}_{m} \right]
=\break
\MATRIXL \; \widehat{A}(m) \widehat{B}(m) + \MATRIXL \; \EXPTRUE  \left[ \left( \UVECH - \SVECH(m)  \right) \; {\widehat{\VECB}}^{T}_{m} \right]
$.

Define  $ \QUADQH (m) = \left( \widehat{M}(m) , \DIML,  \widehat{A}(m), \TAOH (m) \right) $,
then, see Appendix Lemma \ref{lemma:MATRICES}, for $ m \ge M_{0} $, we have $ \DET \left( \widehat{A}(m) \; \widehat{B}(m) \; \widehat{A}^{T}(m)   \right) \neq 0 $,
so that, for $ m \ge M_{0} $, $ \QUADQH (m) \in \CONDA $, and then $ W ( \QUADQH (m) ) \in \WA $.
Then, after some algebra, for $ m \ge M_{0} $ we obtain:
\begin{align}
\nonumber
W ( &  \QUADQH (m) ) = G ( \TAOH(m) ) \;  \widehat{A}^{T}(m) 
\\
\nonumber
& \qquad \left( \widehat{A}(m) \; \widehat{B}(m) \; \widehat{A}^{T}(m) \right)^{-1} \widehat{A}(m) \; G^{T}( \TAOH(m) ) 
\\
\nonumber
&=
\; \MATRIXL \;  \MATRIXS (m)  \; \MATRIXL^{T} 
+
\MATRIXL \; \EXPTRUE  \left[  \left( \UVECH - \SVECH(m) \right)  \SVECH^{T}(m)  \right] \MATRIXL^{T} 
\\
\nonumber
& \quad
+
\MATRIXL \; \EXPTRUE  \left[  \SVECH(m) \left( \UVECH - \SVECH(m) \right)^{T}  \right] \MATRIXL^{T} 
\\
\nonumber
& \quad
+
\MATRIXL  \; \EXPTRUE  \left[  \left( \UVECH - \SVECH(m) \right)  \SVECH^{T}(m)  \right] \; \bigl( \MATRIXS (m) \bigr)^{-1}
\\
\nonumber
& \quad \qquad
\EXPTRUE  \left[  \SVECH(m) \left( \UVECH - \SVECH(m) \right)^{T}  \right] \MATRIXL^{T}
\end{align}
so that, see Appendix Lemma \ref{lemma:MATRICES}, 
$ W ( \QUADQH (m) ) \to \MATRIXL \; \MATRIXL^{T} = \COVTRUE ( \PSIHC )$, see (\ref{eq:COV}),
component by component and then in Frobenius norm.

Unlike the previous case, 
if for some $i^{*}$, $ 1 \le i^{*} \le \DIML $, we have that $ \UH_{i^{*}}$ belongs to $ C ( \THREEBETA ) $  
and $ \UH_{i^{*}} \notin S ( \THREEBETA ) $, then  $ \COVTRUE ( \PSIHC ) = \MATRIXL \; \MATRIXL^{T} \notin \WA $.
If not, $ \COVTRUE ( \PSIHC ) \in \WA $, and then we have equality in (\ref{eq:THEONE}), so that,
see Theorem  \ref{theo:ONE}, $ [ \FIHC ]_{i} \in S ( \THREEBETA ) $, for each $ 1 \le i \le \DIML $.
But $ \FIHC = \MATRIXL \UVECH $, so that $ \MATRIXL^{T} \FIHC = \MATRIXL^{T} \MATRIXL \UVECH  $, 
with $ \MATRIXL = \bigl( \OPC ( \widehat{u}_{1} ) \cdots  \OPC ( \widehat{u}_{\DIML} ) \bigr) $. 
Since the $ \DIML $ columns of $ \MATRIXL $ are linearly independent then $ \DET ( \MATRIXL^{T} \MATRIXL ) \neq 0 $, if not
there exists $ \ALFAVEC \in \RPAR $, $ \ALFAVEC \neq 0 $,  such that $  \MATRIXL^{T} \MATRIXL \ALFAVEC = 0 $, so that
$  \ALFAVEC^{T} \MATRIXL^{T} \MATRIXL \ALFAVEC =  \NORMRPAR { \MATRIXL \ALFAVEC }^{2} = 0 $, and then $ \MATRIXL \ALFAVEC = 0 $, contradiction.
Hence $ \UVECH = \bigl( \MATRIXL^{T} \MATRIXL  \bigr)^{-1} \MATRIXL^{T} \FIHC $, and then $ [ \UVECH ]_{i} \in S ( \THREEBETA ) $, for each $ 1\le i \le \DIML $, contradiction.

Since $ \forall \PSI \in \UG $, see Theorem \ref{theo:ONE}, it is $ \COVTRUE ( \PSI ) \ge W ( \QUADQH (m) ) $, 
then $ \COVTRUE ( \PSI ) - W ( \QUADQH (m) ) \ge 0 $, so that, taking the limit,  see Appendix Lemma \ref{lemma:MATLIM}, we have
$ \COVTRUE ( \PSI ) - \COVTRUE ( \PSIHC ) \ge 0 $, and then $ \COVTRUE ( \PSI ) \ge \COVTRUE ( \PSIHC )$, $ \forall \; \PSI \in \UG $.
Then, even though $ \COVTRUE ( \PSIHC ) \notin \WA $, we have $ \COVTRUE ( \PSI ) \ge \COVTRUE ( \PSIHC )$, $ \forall \; \PSI \in \UG $,
and, see Theorem \ref{theo:ONE}, $ \COVTRUE ( \PSIHC ) \ge W $, $\forall \; W \in \WA $.  
Furthermore, as previously shown, there exists a sequence $ W_{m} \in \WA $, 
$ W_{m} \equiv  W ( \QUADQH (m) )$, such that  $ \NORMFROB { \COVTRUE ( \PSIHC ) - W_{m} }  \to 0 $. Hence, though
$ \COVTRUE ( \PSIHC ) $ is not a matrix-maximum for $ \WA $, it is a matrix-supreme for $ \WA $, and $ \COVTRUE ( \PSIHC ) $ is a 
matrix-minimum for all the covariances of the estimators in $ \UG $, so that $ \PSIHC $ is Barankin-efficient.
\end{proof}
\begin{OBSERVATION}
A key point in Theorem \ref{theo:HRBB-OPT} is that if $ \WA $ is bounded above, then, 
the optimal covariance $ \COVTRUE ( \PSIHC ) $ may be obtained as the matrix-supreme, see Definition \ref{def:MSUP}, of the matrices
$ W \in \WA $, see (\ref{eq:THEONE}).
\begin{equation} 
\nonumber
\COVTRUE ( \PSIHC ) =  \underset{ \QUADQ \in \CONDA } {\MSUP}
\; G(\TAO) \; A^{T} \; \left(A \; B(\TAO) \; A^{T}) \right)^{-1} \; A \; G^{T}(\TAO) 
\end{equation}
and the matrix-supreme will be a matrix-maximum if and only if $ \PSIHC - \GTRUE = \FIHC \in \Bigl( S ( \THREEBETA ) \Bigr)^{ \DIMPAR } $.
\end{OBSERVATION}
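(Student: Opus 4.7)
The plan is to combine the explicit form of the HRBB estimator provided by Lemma \ref{lemma:HRBB-EST} with the linear matrix inequality of Theorem \ref{theo:ONE}. From Lemma \ref{lemma:HRBB-EST} we already know $\PSIHC \in \UG$, and a direct computation using $\FIHC = \MATRIXL \, \UVECH$ together with $\EXPTRUE[\UVECH\,\UVECH^{T}] = \IDL$ (which follows from the orthonormality of the $\UHI$'s in $\SPACELTWOTRUE$) gives the closed form $\COVTRUE(\PSIHC) = \MATRIXL\,\MATRIXL^{T} = \sum_{i=1}^{\DIML} \OPC(\UHI)\,\OPC(\UHI)^{T}$. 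It then remains to show that this matrix is simultaneously an upper bound (in the L\"owner order) for $\WA$, a lower bound for the covariances of all estimators in $\UG$, and a matrix-supreme of $\WA$ in the sense of Definition \ref{def:MSUP}.

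I would split the analysis into two cases according to whether all the orthonormal vectors $\UHI$ already lie in $S(\THREEBETA)$ or whether at least one of them lies only in the closure $C(\THREEBETA)$. In the first case, each $\UHI$ is a finite linear combination w.p.\ 1 of elements of $\THREEBETA$, so concatenating these representations yields a vector $\widehat{\VECB} \in \THREEBETA^{\widehat M}$ and a block-diagonal matrix $\widehat A$ with $\UVECH = \widehat A \,\widehat{\VECB}$ w.p.\ 1. Orthonormality of $\UVECH$ translates into $\widehat A\,\widehat B\,\widehat A^{T} = \IDL$, hence the quad-tuple $\QUADQH = (\widehat M,\DIML,\widehat A,\TAOH)$ lies in $\CONDA$. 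A direct computation of $G(\TAOH) = \EXPTRUE[\FIHC\,\VECB^{T}(\TAOH)] = \MATRIXL\,\widehat A\,\widehat B$ (using Observation \ref{obs:MATG}) then collapses $W(\QUADQH)$ to $\MATRIXL\,\MATRIXL^{T}$, so $\COVTRUE(\PSIHC) \in \WA$ and equality in the bound of Theorem \ref{theo:ONE} is attained, making $\PSIHC$ Barankin-efficient and realizing the matrix-maximum of $\WA$.

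In the second case, I would first argue that $\COVTRUE(\PSIHC) \notin \WA$: if it belonged to $\WA$, the equality condition of Theorem \ref{theo:ONE} would force $\FIHC \in \bigl(S(\THREEBETA)\bigr)^{\DIMPAR}$, and since the columns of $\MATRIXL$ are linearly independent (so $\MATRIXL^{T}\MATRIXL$ is invertible), inverting $\FIHC = \MATRIXL\,\UVECH$ would imply $\UVECH \in \bigl(S(\THREEBETA)\bigr)^{\DIML}$, contradicting the standing assumption. I would then approximate each $\UHI$ in $\NORMLTWO{\cdot}$ by a sequence $\YIM \in S(\THREEBETA)$, write each approximant as $\AVEC_{i}^{T}(m)\,\VECB(\TAO_{i}(m))$, and build indexed versions $\widehat{\VECB}_{m}$, $\widehat A(m)$, $\widehat B(m)$, and $\QUADQH(m)$. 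Expanding $W(\QUADQH(m))$ and tracking the cross terms $\EXPTRUE[(\UVECH - \SVECH(m))\,\SVECH^{T}(m)]$ (which vanish in norm as $m \to \infty$), one obtains $W(\QUADQH(m)) \to \MATRIXL\,\MATRIXL^{T} = \COVTRUE(\PSIHC)$ component by component, and hence in Frobenius norm. Theorem \ref{theo:ONE} gives $\COVTRUE(\PSI) \ge W(\QUADQH(m))$ for every $\PSI \in \UG$ and every sufficiently large $m$, and passing to the limit in the L\"owner order (closedness of the cone of s.n.n.d.\ matrices) yields $\COVTRUE(\PSI) \ge \COVTRUE(\PSIHC)$, establishing Barankin-efficiency and the matrix-supreme identity.

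The main obstacle, which I would handle via the appendix lemmas cited in the proof (Lemmas \ref{lemma:MATRICES} and \ref{lemma:MATLIM}), is twofold: ensuring that $\widehat A(m)\,\widehat B(m)\,\widehat A^{T}(m)$ is invertible for all sufficiently large $m$ (so that $\QUADQH(m) \in \CONDA$ and $W(\QUADQH(m))$ is even defined), and controlling the behavior of its inverse in the limit so that the cross-term contributions to $W(\QUADQH(m))$ vanish and leave only $\MATRIXL\,\MATRIXL^{T}$. Without finite-dimensionality of the image of $\OPC$ and uniform control of $\widehat A(m)\,\widehat B(m)\,\widehat A^{T}(m) \to \IDL$, the bound would not pass to the limit cleanly; this is precisely where the reduction to the $\DIML$-dimensional subspace spanned by the $\UHI$'s (rather than the possibly non-separable Hilbert space $\SPACELTWOTRUE$) carries the argument.
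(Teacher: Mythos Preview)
Your proposal is correct and follows essentially the same route as the paper's proof of Theorem \ref{theo:HRBB-OPT}, of which this Observation is a summary: the same two-case split according to whether all $\UHI$ lie in $S(\THREEBETA)$, the same block-diagonal construction of $\widehat A$ (or $\widehat A(m)$), the same use of Observation \ref{obs:MATG} to compute $G(\TAOH)$, and the same appeal to Lemmas \ref{lemma:MATRICES} and \ref{lemma:MATLIM} for eventual invertibility and passage to the limit. Your ordering in Case 2 (showing $\COVTRUE(\PSIHC)\notin\WA$ before the approximation argument rather than after) is immaterial, and your identification of the equivalence between ``all $\UHI\in S(\THREEBETA)$'' and ``$\FIHC\in\bigl(S(\THREEBETA)\bigr)^{\DIMPAR}$'' via the invertibility of $\MATRIXL^{T}\MATRIXL$ is exactly what the paper uses to close the ``if and only if'' claim.
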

\section{LMI equivalent formulation} \label{sec:OLMI}
\subsection{Equivalence of the LMI bound and the HRBB condition}
The statement that the Barankin covariance lower bounds $ \WA $ are bounded above, is a disguised form
of the HRBB condition, as a matter of fact the converse is also true, see Lemma \ref{lemma:BOUND-HRBB} and Theorem \ref{theo:EQUIV} below.
\begin{LEMMA}  \label{lemma:BOUND-HRBB}
If the Barankin covariance lower bounds $ \WA $ are bounded above, i.e. the collection $ \WA $ is bounded, see Definition \ref{def:COLLBOUNDA},
then
the HRBB condition holds, see  Definition \ref{def:HRBB}.
\end{LEMMA}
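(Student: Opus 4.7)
The plan is to derive the HRBB bound directly from the Löwner upper bound on the Barankin matrix family $\WA$ by engineering, for each finite linear combination of $\PITI$'s, a quad-tuple in $\CONDA$ whose associated $W(\QUADQ)$ is a rank-one matrix that exactly ties together the two sides of the HRBB inequality. Let $U \in \REALS^{\DIMPAR \times \DIMPAR}$ be any s.n.n.d. matrix with $U \ge W$ for every $W \in \WA$. Fix arbitrary $\DIMM \in \NAT$, reals $\ASUBI \in \REALS$, and indices $\TSUBI \in \SINDX$ for $1 \le i \le \DIMM$, and set $\AVEC = (a_1,\ldots,a_{\DIMM})^T$, $\TAO = (\INDX_1,\ldots,\INDX_{\DIMM})$, $\mathbf{v} = G(\TAO)\,\AVEC = \sum_{i=1}^{\DIMM}\ASUBI\,\HTI$, and $s = \NORMLTWO{\sum_{i=1}^{\DIMM}\ASUBI\,\PITI}$. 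Note that $\AVEC^{T} B(\TAO)\,\AVEC = s^{2}$ by the very definition of $B(\TAO)$.

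Case 1 ($s > 0$): take $\DIMA = 1$ and $A = \AVEC^{T}$, so that $A\,B(\TAO)\,A^{T} = s^{2} \neq 0$ and $\QUADQ = (\DIMM, 1, A, \TAO) \in \CONDA$. A short computation collapses $W(\QUADQ)$ to the rank-one matrix $\mathbf{v}\mathbf{v}^{T}/s^{2}$. The hypothesis $\mathbf{v}\mathbf{v}^{T}/s^{2} \le U$, tested against $\mathbf{v}$, gives $\NORMRPAR{\mathbf{v}}^{4}/s^{2} \le \mathbf{v}^{T} U\,\mathbf{v} \le \NORMOP{U}\,\NORMRPAR{\mathbf{v}}^{2}$, whence $\NORMRPAR{\mathbf{v}} \le \sqrt{\NORMOP{U}}\,s$, which is precisely the HRBB inequality with $\KH = \sqrt{\NORMOP{U}}$ (trivially valid when $\mathbf{v} = 0$).

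Case 2 ($s = 0$): the quad-tuple from Case 1 is no longer admissible, and this is the real obstacle. The remedy is a perturbation by a fresh element of $\THREEBETA$ whose $L_{2}$-norm is bounded away from zero. Pick any $\INDX_{0} \in \SINDX$, so by Observation \ref{obs:THREEBETA} $\NORMLTWO{\PI(\INDX_{0})} > 0$; for each $t \in \REALS$ set $\sigma = (\INDX_{1},\ldots,\INDX_{\DIMM},\INDX_{0})$ and $\mathbf{c}_{t} = (t a_{1},\ldots,t a_{\DIMM},1)^{T}$. Because $\sum_{i}\ASUBI\,\PITI = 0$ w.p. $1$, $\mathbf{c}_{t}^{T} B(\sigma)\,\mathbf{c}_{t} = \NORMLTWO{t\sum_{i}\ASUBI\,\PITI + \PI(\INDX_{0})}^{2} = \NORMLTWO{\PI(\INDX_{0})}^{2} > 0$ uniformly in $t$, so $\QUADQ_{t} = (\DIMM+1, 1, \mathbf{c}_{t}^{T}, \sigma) \in \CONDA$. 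The corresponding bound $W(\QUADQ_{t}) = (t\mathbf{v} + \FUNCH(\INDX_{0}))(t\mathbf{v} + \FUNCH(\INDX_{0}))^{T} / \NORMLTWO{\PI(\INDX_{0})}^{2} \le U$, evaluated at $\mathbf{v}$, produces a scalar quadratic in $t$ whose leading coefficient is $\NORMRPAR{\mathbf{v}}^{4}/\NORMLTWO{\PI(\INDX_{0})}^{2}$; since the right-hand side $\mathbf{v}^{T} U \mathbf{v}$ does not depend on $t$, boundedness in $t$ forces $\mathbf{v} = 0$, and the HRBB inequality holds trivially.

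Combining both cases with the single constant $\KH = \sqrt{\NORMOP{U}}$ (independent of $\DIMM$, of the $\ASUBI$'s, and of the $\TSUBI$'s) verifies Definition \ref{def:HRBB}. The only delicate step is Case~2, which is why one cannot simply use $A = \AVEC^{T}$ and must augment by an extra index $\INDX_{0}$ that keeps the quadratic form in $B$ nondegenerate while still allowing the target vector $\mathbf{v}$ to be scaled freely by the parameter $t$.
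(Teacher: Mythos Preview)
Your proof is correct and follows essentially the same approach as the paper: restrict to $\DIMA = 1$ so that $W(\QUADQ)$ collapses to a rank-one matrix, extract the scalar HRBB inequality from the Löwner bound, and handle the degenerate case $\NORMLTWO{\sum_i \ASUBI\,\PITI} = 0$ by a perturbation using an extra element of $\THREEBETA$. The only cosmetic differences are that the paper reads off the constant via the trace ($\KH = \sqrt{\TRACE[\MATBOUND]}$) rather than the operator norm, and in the degenerate case scales the perturbation by $1/n \to 0$ instead of scaling the main combination by $t \to \infty$; these are dual parametrizations of the same limiting argument.
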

\begin{proof}
Call $ \MATBOUND $ the bound for $ \WA $, i.e.
\begin{equation} \label{eq:EQUIV-ONE}
\MATBOUND \ge 
W ( \QUADQ ) = G(\TAO) \; A^{T} \; \left(A \; B(\TAO) \; A^{T}) \right)^{-1} \; A \; G^{T}(\TAO) 
\end{equation}
for all $ \QUADQ \in \CONDA $.
Since this is true for matrices $A$ of all sizes $\DIMA \in \NAT$ for a given $ \DIMM \in \NAT$,  
$ A \in \REALS^{ \DIMA \times \DIMM } $, in particular is true when $ \DIMA = 1 $, i.e. when $A$ has one single row.
Call $ {\VECA}^{T} = \left( a_{1}, a_{2}, \cdots, a_{\DIMM} \right) $ the single row, so that  $ A = {\VECA}^{T} $, with $ {\VECA} \in \REALS^{\DIMM} $. 
Then, $ A \; B(\TAO) \; A^{T} = {\VECA}^{T} \; B(\TAO) \; {\VECA} = 
{\VECA}^{T} \; \EXPTRUE \left[ \VECB(\TAO) \; \VECB^{T}(\TAO) \right] \; {\VECA} =  
\EXPTRUE \left[  {\VECA}^{T} \; \VECB(\TAO) \; \VECB^{T}(\TAO) \; {\VECA} \right]  =  
\EXPTRUE \left[  \left( {\VECA}^{T} \; \VECB(\TAO) \right)^{2}  \right]
=
\EXPTRUE \left[ \left( \sum_{i = 1}^{\DIMM} \ASUBI \; \PITI \right)^{2}  \right]
=$\break
$
{\NORMPITISQ}
$. 
Observe that $ {\VECA}^{T} \; B(\TAO) \; {\VECA}$ is a non-negative scalar, i.e.\break
$ {\VECA}^{T} \; B(\TAO) \; {\VECA} \in \REALS^{+}$, 
and since we assumed that $ \QUADQ \in \CONDA $ then 
$ {\VECA}^{T} \; B(\TAO) \; {\VECA} \neq 0 $, as a matter of fact 
$ {\VECA}^{T} \; B(\TAO) \; {\VECA}  = \NORMPITISQ > 0 $.
On the other hand, $ G(\TAO) \; A^{T} = G(\TAO) \; {\VECA} = \sum_{i = 1}^{\DIMM} \ASUBI \; \HTI $.
Then, (\ref{eq:EQUIV-ONE}) takes the form
\begin{displaymath}
\MATBOUND \ge  
\frac{\left( \sum_{i = 1}^{\DIMM} \ASUBI \; \HTI \right)   \left( \sum_{i = 1}^{\DIMM} \ASUBI \; \HTI \right)^{T} } {\NORMPITISQ}
\end{displaymath}
Hence,
\begin{displaymath}
\TRACE \left[ \MATBOUND \right] \ge 
\frac{ \TRACE \left[ \left( \sum_{i = 1}^{\DIMM} \ASUBI \; \HTI \right)   \left( \sum_{i = 1}^{\DIMM} \ASUBI \; \HTI \right)^{T} \right] }
{\NORMPITISQ}
\end{displaymath}
Call $ \KH =  \left( \TRACE \left[ \MATBOUND \right] \right)^{1/2}$. 
Since,
$
\TRACE \left[  \left( \sum_{i = 1}^{\DIMM} \ASUBI \, \HTI \right)   \left( \sum_{i = 1}^{\DIMM} \ASUBI \, \HTI \right)^{T} \right] 
=\break
{\NORMGTISQ}
$
then,
$ \KH \ge { {\NORMGTI} } \; / \;  { {\NORMPITI} } $.\break
Hence
$ { {\NORMGTI} \le \KH \; {\NORMPITI} } $, $ \forall \DIMM \in \NAT $, $ \forall \ASUBI \in \REALS $,\break
 $ 1 \le i \le \DIMM $,
$ \forall \TSUBI \in \SINDX $, $ 1 \le i \le \DIMM $, such that $ {\NORMPITI}  \neq 0 $.
If $  {\NORMPITI} = 0 $, then $ \sum_{i = 1}^{\DIMM} \ASUBI \; \PITI  =0 $ w.p. 1.
Take an arbitrary $ u^{*} \in \THREEBETA $, then, see Observation \ref{obs:THREEBETA}, $ \NORMLTWO{  u^{*} } \neq 0 $.
Call $ \INDX^{*} = \PI^{-1} (u^{*}) $.
Then $ \sum_{i = 1}^{\DIMM} \ASUBI \; \PITI  +  (1/n) \; u^{*} = (1/n) \; u^{*} $ w.p. 1, for all $ n \in \NAT $, so that
$ \NORMLTWO{ \sum_{i = 1}^{\DIMM} \ASUBI \; \PITI  +  (1/n) \; u^{*} } = \NORMLTWO{ (1/n) \; u^{*} } \neq 0 $, for all $ n \in \NAT $.
Hence the previously obtained inequality applies, 
$ \NORMRPAR{ \sum_{i = 1}^{\DIMM} \ASUBI  \HTI  +  (1/n)  \FUNCH \bigl( \INDX^{*} \bigr) }
\le
\KH  \NORMLTWO{ \sum_{i = 1}^{\DIMM} \ASUBI \PITI  +  (1/n) u^{*} } 
=
\KH \; (1/n) \NORMLTWO{ u^{*} }
$, so that\break 
$ \NORMRPAR{ \sum_{i = 1}^{\DIMM} \ASUBI \; \HTI  +  (1/n) \; \FUNCH \bigl( \INDX^{*} \bigr) } \to 0 $,
as $ n \to +\infty $.
Since\break 
$
 \NORMRPAR{ \sum_{i = 1}^{\DIMM} \ASUBI \HTI   }
=
\NORMRPAR{ \sum_{i = 1}^{\DIMM} \ASUBI \HTI  +  (1/n) \FUNCH \bigl( \INDX^{*} \bigr) - (1/n) \; \FUNCH \bigl( \INDX^{*} \bigr) }
\le\break
\NORMRPAR{ \sum_{i = 1}^{\DIMM} \ASUBI \; \HTI  +  (1/n)  \; \FUNCH \bigl( \INDX^{*} \bigr) }
+
(1/n) \; \NORMRPAR{ \FUNCH \bigl( \INDX^{*} \bigr) }
$
then, taking the limit $ n \to +\infty $, it results $ \NORMRPAR{ \sum_{i = 1}^{\DIMM} \ASUBI \; \HTI   } = 0 $.

Hence
$ { {\NORMGTI} \le \KH \; {\NORMPITI} } $, $ \forall \DIMM \in \NAT $, $ \forall \ASUBI \in \REALS $, $ 1 \le i \le \DIMM $,
$ \forall \TSUBI \in \SINDX $, $ 1 \le i \le \DIMM $, such that $ {\NORMPITI}  \ge 0 $, 
as required by Definition \ref{def:HRBB}, so that 
the HRBB condition holds.
\end{proof}
\begin{THEOREM}   \label{theo:EQUIV}
The HRBB condition holds, see Definition \ref{def:HRBB},
if and only if the collection $ \WA $ is bounded, see Definition \ref{def:COLLBOUNDA}. 
\end{THEOREM}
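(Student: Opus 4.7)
The statement is an \emph{iff}, so the plan is to dispatch each direction by invoking previously proved results, leaving essentially no heavy lifting for this theorem itself.

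The first direction, namely that if $\WA$ is bounded above then the HRBB condition holds, is precisely the content of Lemma \ref{lemma:BOUND-HRBB} which has already been established just above. So I would simply invoke that lemma and move on.

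For the converse, assume the HRBB condition holds. The plan is to exhibit an explicit upper bound for $\WA$. By Lemma \ref{lemma:HRBB-EST}, under HRBB the HRBB estimator $\PSIHC = \FIHC + \GTRUE$ is a well-defined finite covariance unbiased estimator, i.e.\ $\PSIHC \in \UG$. Now apply Theorem \ref{theo:ONE}, or equivalently the statement of Observation \ref{obs:PARORD}, to this particular $\PSIHC$: we obtain
\begin{equation}
\nonumber
\COVTRUE ( \PSIHC ) \ge W ( \QUADQ ) \qquad \forall \; \QUADQ \in \CONDA,
\end{equation}
which says exactly that the fixed real s.n.n.d.\ matrix $\MATBOUND := \COVTRUE(\PSIHC) \in \REALS^{\DIMPAR \times \DIMPAR}$ is an upper bound in the L\"owner order for every element of $\WA$. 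Hence the collection $\WA$ is bounded above, as required.

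There is essentially no main obstacle here, because the heavy lifting has been done upstream: the nontrivial content is hidden in the construction of $\PSIHC$ (which uses the generalized Riesz representation, Theorem \ref{theo:OP-HRBB}) and in the information-inequality Lemma \ref{lemma:SCHUR} that powers Theorem \ref{theo:ONE}. The only thing to be careful about is to cite Theorem \ref{theo:ONE} rather than Theorem \ref{theo:HRBB-OPT}; the latter is stronger (it asserts $\COVTRUE(\PSIHC)$ is the matrix-supreme of $\WA$), but for the present \emph{iff} we only need that some upper bound exists, and the weaker Theorem \ref{theo:ONE} suffices and avoids any appearance of circularity.
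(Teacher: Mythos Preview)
Your proposal is correct and follows essentially the same approach as the paper: one direction is Lemma~\ref{lemma:BOUND-HRBB}, and for the other you produce $\PSIHC \in \UG$ and use its covariance as the upper bound for $\WA$. The only cosmetic difference is that the paper cites Theorem~\ref{theo:HRBB-OPT} directly (which already packages Lemma~\ref{lemma:HRBB-EST} together with the bound $\COVTRUE(\PSIHC)\ge W$ for all $W\in\WA$), whereas you unbundle this into Lemma~\ref{lemma:HRBB-EST} plus Theorem~\ref{theo:ONE}; there is no actual circularity in either route since Theorem~\ref{theo:HRBB-OPT} is proved before Theorem~\ref{theo:EQUIV} and does not rely on it.
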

\begin{proof}
If HRBB holds, see Theorem \ref{theo:HRBB-OPT}, then there exists $ \PSIHC \in \UG $ such that $ \PSIHC $ is Barankin-efficient, and  then
$ \COVTRUE \big( \PSIHC \bigr) \ge \ W $, $ \forall \; W \in \WA $, so that $ \WA $ is bounded.
The converse follows as a consequence of Lemma \ref{lemma:BOUND-HRBB}.
\end{proof}
\begin{LEMMA}  \label{lemma:DIRECT}
If there exists a finite covariance unbiased estimator $\PSI(\SAMP)$  for $\GT$, $\forall \INDX \in \SINDX$
for Problem \ref{prob:MAIN}, then 
$ \WA $ is bounded above, see Definition \ref{def:COLLBOUNDA}.
\end{LEMMA}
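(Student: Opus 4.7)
The plan is to derive this as a direct corollary of Theorem \ref{theo:ONE}. Suppose $\PSI \in \UG$ is a finite covariance at $\INDXTRUE$ unbiased estimator. Since $\PSI$ is unbiased and has finite covariance matrix at $\INDXTRUE$, the hypotheses of Theorem \ref{theo:ONE} are satisfied. The theorem then yields
\begin{equation}
\nonumber
\COVTRUE ( \PSI ) \ge G (\TAO) \; A^{T} \left( A \; B (\TAO) \; A^{T} \right)^{-1} \; A \; G^{T}(\TAO) = W ( \QUADQ )
\end{equation}
for every $\QUADQ \in \CONDA$, i.e. $\COVTRUE(\PSI) \ge W$ for every $W \in \WA$ in the Löwner partial order.

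Setting $\MATBOUND = \COVTRUE(\PSI)$, which is a fixed symmetric non-negative definite real matrix of dimensions $\DIMPAR \times \DIMPAR$ because $\PSI \in \VSPACETWOTRUE$, we have exhibited an element of $\REALS^{\DIMPAR \times \DIMPAR}$ that is a Löwner upper bound for the whole collection $\WA$, in the sense of Definition \ref{def:MSUP}. Hence $\WA$ is bounded above, as required. There is no real obstacle here: the content of the lemma is precisely the first assertion of Theorem \ref{theo:ONE}, namely the validity of the LMI (\ref{eq:THEONE}) across all $\QUADQ \in \CONDA$, repackaged as the statement that the covariance of any fixed unbiased finite-variance estimator dominates every Barankin lower bound matrix. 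This lemma, together with Lemma \ref{lemma:BOUND-HRBB} and Theorem \ref{theo:EQUIV}, closes the loop showing that the existence of a finite covariance unbiased estimator implies the HRBB condition, and hence, via Theorem \ref{theo:HRBB-OPT}, the existence of a Barankin-efficient estimator $\PSIHC$.
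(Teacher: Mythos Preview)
Your proof is correct and takes essentially the same approach as the paper: the paper's own proof is a single sentence invoking (\ref{eq:THEONE}) from Theorem \ref{theo:ONE} to conclude that $\COVTRUE(\PSI)$ bounds $\WA$. Your version is simply a more explicit rendering of that argument, with the added contextual remark about how the lemma fits into the chain of implications in Theorem \ref{theo:MAIN}.
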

\begin{proof}
Since a finite covariance unbiased estimator $ \PSI $ exists, then (\ref{eq:THEONE}) shows that 
$ \WA $ is bounded.
\end{proof}
\subsection{Other equivalent LMI bounds}
One of the key ideas in Barankin's paper is the use of the free coefficients $ \ASUBI$'s, see \cite{BARANKIN} p. 480, that here take the form of the matrices $A$'s.
As discussed in \cite{BARANKIN}, and here below, the matrices $A$ are not required for the determination of the optimal matrix bound, but, they are most useful when one needs to compare the
Barankin bound with other bounds, such as Cramer-Rao, Bhattacharyya, etc. For the scalar case see \cite{BARANKIN} Corollaries 5--1 p. 487 and
6--1 p. 488.
For the vector Cramer-Rao bound, compare the results here with e.g.
\cite{STOICA} and references there.
\begin{DEFINITION} \label{def:COLLBOUNDB}
Define the pair $  \PAIRD = \left( \DIMM, \TAO \right) $, where $ \DIMM \in \NAT $, and $ \TAO \in {\SINDX}^{\DIMM} $.
Define $ \CONDB $ as the collection of all the pairs
$ \PAIRD $ with $ \DET \bigl( B(\TAO) \bigr) \neq 0  $, with $ B(\TAO) $ as in Definition \ref{def:CONDA}, so that
\begin{displaymath}
 \CONDB = \left\{ \PAIRD : \forall  \; \DIMM \in \NAT,  \forall \; \TAO \in {\SINDX}^{\DIMM} , \; 
{\rm with} \; \DET \bigl( B(\TAO) \bigr)  \neq 0   \right\}
\end{displaymath}
Define $ \WB $ as the collection of matrices 
\begin{displaymath}
V ( \PAIRD ) = G(\TAO) \bigl( B(\TAO) \bigr)^{-1} \; G^{T}(\TAO) \qquad \forall \PAIRD \in \CONDB 
\end{displaymath} 
with $ G(\TAO) $ as in Definition \ref{def:CONDA}.
Equivalently $ \WB = \bigl\{ V ( \PAIRD ) : \PAIRD \in \CONDB \bigr\} $.

Define the function $ \GT $ as $ \THREEBETA $-compatible if whenever $ \sum_{i = 1}^{\DIMM} \ASUBI \; \PITI= 0 $ w.p. 1,
we have $ \sum_{i = 1}^{\DIMM} \ASUBI \; \HTI = 0 $, with $ \DIMM \in \NAT $, $ \ASUBI \in \REALS $, $ \TSUBI \in \SINDX $,
for $ 1 \le i \le \DIMM $.
\end{DEFINITION}
Note that if $ \GT $ is not $ \THREEBETA $-compatible then no unbiased estimator exists for $ \GT $, $ \forall \INDX \in \SINDX $,
for Problems \ref{prob:INTEQ}, \ref{prob:LTWO} or \ref{prob:MAIN}.
If $ \GT $ is $ \THREEBETA $-compatible, then for $ \TAO \in \SINDX^{\DIMM} $ and $ \VECA \in \REALS^{\DIMM} $, if
$ \VECA^{T} \; \VECB ( \TAO )  = 0 $, then $ G(\TAO) \; \VECA = 0 $,
and for $ A \in \REALS^{\DIMA \times \DIMM} $, if $ A \; \VECB ( \TAO ) = 0 $, then $  G(\TAO) A^{T} = 0 $.
Hence, if $ \TAO_{1} \in \SINDX^{\DIMA} $ and we have $ A \; \VECB ( \TAO ) = \VECB ( \TAO_{1} ) $,
then $  G(\TAO) A^{T} = G(\TAO_{1}) $.
\begin{THEOREM}     \label{theo:W-EQUIV}
The collection $ \WA $, see Definition \ref{def:COLLBOUNDA}, is bounded above, if and only if
the collection $ \WB $ is bounded above and $ \GT $ is $ \THREEBETA $-compatible.
\end{THEOREM}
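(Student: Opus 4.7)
The plan is to treat the two directions separately, handling the reverse direction by exhibiting, for each $\QUADQ \in \CONDA$, a dominating element of $\WB$.

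For the forward direction (bounded $\WA$ implies bounded $\WB$ and $\THREEBETA$-compatibility of $\GT$), I would first note that $\WB \subseteq \WA$: given $\PAIRD = (\DIMM, \TAO) \in \CONDB$, the quad-tuple $\QUADQ = (\DIMM, \DIMM, \IDM, \TAO)$ satisfies $\DET(\IDM \, B(\TAO) \, \IDM^T) = \DET(B(\TAO)) \neq 0$, so $\QUADQ \in \CONDA$ and $W(\QUADQ) = V(\PAIRD)$. Thus any upper bound for $\WA$ also bounds $\WB$. For $\THREEBETA$-compatibility, Theorem \ref{theo:EQUIV} converts bounded $\WA$ into the HRBB inequality (\ref{eq:HRBB}); specializing to any linear combination with $\sum_{i=1}^{\DIMM} \ASUBI \PITI = 0$ w.p.\,1 forces $\NORMRPAR{\sum_{i=1}^{\DIMM} \ASUBI \HTI} = 0$, which is precisely the compatibility condition.

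For the reverse direction, the plan is to show that for every $\QUADQ \in \CONDA$ there exists $\PAIRD' \in \CONDB$ with $W(\QUADQ) \le V(\PAIRD')$; boundedness of $\WB$ then bounds $\WA$. Given $\QUADQ = (\DIMM, \DIMA, A, \TAO) \in \CONDA$, I would apply the elimination procedure from the second half of the proof of Theorem \ref{theo:ONE} to the components of $\VECB(\TAO)$, obtaining $\TAO' \in \SINDX^{\DIMM'}$ for which the $\PI(\theta'_k)$ are linearly independent w.p.\,1 (so $B(\TAO')$ is positive definite and $\PAIRD' := (\DIMM', \TAO') \in \CONDB$), together with a matrix $R \in \REALS^{\DIMM \times \DIMM'}$ such that $\VECB(\TAO) = R\VECB(\TAO')$ w.p.\,1. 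The $\THREEBETA$-compatibility hypothesis lifts this identity to $G(\TAO) = G(\TAO') R^T$; combined with $B(\TAO) = R B(\TAO') R^T$, this gives $W(\QUADQ) = W(\QUADQ')$ where $\QUADQ' = (\DIMM', \DIMA, AR, \TAO')$ still belongs to $\CONDA$ because $(AR) B(\TAO') (AR)^T = A B(\TAO) A^T$ is invertible by hypothesis. It then remains to verify the matrix inequality $W(\QUADQ') \le V(\PAIRD')$, which follows because, writing $B = B(\TAO')$ and $A' = AR$, the matrix $D := B^{1/2} A'^{T} (A' B A'^{T})^{-1} A' B^{1/2}$ is a symmetric idempotent, hence an orthogonal projection satisfying $0 \le D \le \IDMP$ in the L\"owner order; conjugating by $G(\TAO') B^{-1/2}$ yields $W(\QUADQ') \le G(\TAO') B^{-1} G^T(\TAO') = V(\PAIRD')$. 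Chaining the inequalities gives $W(\QUADQ) \le V(\PAIRD') \le \MATBOUND$ for any upper bound $\MATBOUND$ of $\WB$.

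The main obstacle I expect is exactly the reduction step: the matrix $R$ intertwining $\VECB(\TAO)$ and $\VECB(\TAO')$ must simultaneously intertwine $G(\TAO)$ and $G(\TAO')$ via $G(\TAO) = G(\TAO') R^T$. This is precisely what $\THREEBETA$-compatibility provides, and explains why the hypothesis appears in the statement; without it, the identity on $\VECB$'s does not descend to the $\HT$'s, and $W(\QUADQ)$ cannot in general be dominated by a member of $\WB$. Everything else is matrix algebra that either is routine (the projection computation) or was already laid out in the proof of Theorem \ref{theo:ONE} (the elimination step).
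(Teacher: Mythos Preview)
Your proposal is correct and follows essentially the same route as the paper's proof: both directions match closely, including the inclusion $\WB \subseteq \WA$, the use of HRBB for $\THREEBETA$-compatibility, the elimination step from Theorem~\ref{theo:ONE} to reduce to an invertible $B(\TAO')$, and the application of compatibility to transfer the linear relation from $\VECB$ to $G$. The one cosmetic difference is that for the key inequality $W(\QUADQ') \le V(\PAIRD')$ you argue directly via the symmetric idempotent $D = B^{1/2} (A')^{T} (A' B (A')^{T})^{-1} A' B^{1/2} \le I$, whereas the paper packages this same computation into Appendix Lemma~\ref{lemma:RAYLEIGH} (proved there via the matrix Cauchy--Schwarz inequality with $X = G B^{-1/2}$, $Y = A' B^{1/2}$); the two arguments are the same up to notation.
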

\begin{proof}
Assume $ \WA $ is bounded. Then, there exists a s.n.n.d. matrix $ B_{1} \in \REALS^{ \DIMPAR \times \DIMPAR } $,
such that $ B_{1} \ge W ( \QUADQ ) $, $ \forall \QUADQ \in \CONDA $. Take an arbitrary $ \PAIRD^{\prime} \in \CONDB $, 
with $ \PAIRD^{\prime} = \bigl( \DIMM^{\prime}, \TAO^{\prime} \bigr) $, so that $ \DET \bigr( B(\TAO^{\prime}) \bigl) \neq 0 $. 
Define $ \QUADQ^{\prime} = \bigl( \DIMM^{\prime}, \DIMM^{\prime}, \IDMP, \TAO^{\prime} \bigr) $, where $ \IDMP $ is the identity matrix
of dimensions $ \DIMM^{\prime} \times \DIMM^{\prime} $. 
Since $ \DET \bigr(  \IDMP B(\TAO^{\prime})  \IDMP^{T} \bigl) = \DET \bigr( B(\TAO^{\prime}) \bigl) \neq 0 $, then $ \QUADQ^{\prime} \in \CONDA $,
and we have $ W ( \QUADQ^{\prime} ) = V ( \PAIRD^{\prime} ) $, so that $ B_{1} \ge V ( \PAIRD^{\prime} ) $, $ \forall \PAIRD^{\prime} \in \CONDB $,
and then $ \WB $ is bounded.
Since $ \WA $ is bounded, then the HRBB condition holds, see Lemma \ref{lemma:BOUND-HRBB}, and then  (\ref{eq:HRBB}) shows that
$ \GT $ is $ \THREEBETA $-compatible.
Conversely, assume $ \WB $ is bounded. Then, there exists a s.n.n.d. matrix $ B_{2} \in \REALS^{ \DIMPAR \times \DIMPAR } $,
such that $ B_{2} \ge V ( \PAIRD ) $, $ \forall \PAIRD \in \CONDB $.
Take an arbitrary $ \QUADQ^{\prime} \in \CONDA $, 
with $ \QUADQ^{\prime} = \bigl( \DIMM^{\prime}, \DIMA^{\prime}, A^{\prime}, \TAO^{\prime} \bigr) $,
with $ A^{\prime} \in \REALS^{ \DIMA^{\prime} \times \DIMM^{\prime} } $,
so that $ \DET \bigr( A^{\prime} \; B(\TAO^{\prime}) \; \left( A^{\prime} \right)^{T} \bigl) \neq 0 $.
As in the proof of the last part of Theorem \ref{theo:ONE}, obtain $ \DIMT^{\star} \in \NAT $, 
$ 1 \le \DIMT^{\star} \le \DIMM^{\prime} $, $ A^{\star} \in \REALS^{ \DIMM^{\prime} \times  \DIMT^{\star} } $, and
$ \TAO^{\star} \in \SINDX^{\DIMT^{\star}} $, by elimination of the components of the vector
$ \VECB ( \TAO^{\prime} ) $ 
which are linear combinations w.p. 1 of previous components,
so that $  \VECB ( \TAO^{\prime} ) =  A^{\star} \: \VECB ( \TAO^{\star} ) $,
with 
$ \DET \bigl( B ( \TAO^{\star} ) \bigr) = \DET \Bigl( \EXPTRUE \bigl[ \: \VECB ( \TAO^{\star} ) \; \VECB^{T} ( \TAO^{\star} ) \bigr] \Bigr)
\neq 0 $.

Then,  
$ B ( \TAO^{\prime} )  = \EXPTRUE \bigl[ \: \VECB ( \TAO^{\prime} ) \; \VECB^{T} ( \TAO^{\prime} ) \bigr]  
=
A^{\star} \; \EXPTRUE \bigl[ \: \VECB ( \TAO^{\star} ) \; \VECB^{T} ( \TAO^{\star} ) \bigr] \left( A^{\star} \right)^{T} 
=\break
A^{\star} \; B ( \TAO^{\star} ) \;  \left( A^{\star} \right)^{T} 
$.
Since $ \GT $ is  $ \THREEBETA $-compatible, then
$ G ( \TAO^{\prime} ) = G ( \TAO^{\star} ) \; \left( A^{\star} \right)^{T}   $.
Hence, 
\begin{align}
\nonumber 
W ( \QUADQ^{\prime} ) =& \;  G ( \TAO^{\prime} ) \left( A^{\prime} \right)^{T} 
\bigl( A^{\prime} \; B ( \TAO^{\prime} ) \; \left( A^{\prime} \right)^{T}  ) \bigr)^{-1}
A^{\prime}   \; G ( \TAO^{\prime} )
\\
\nonumber
=& \;
G ( \TAO^{\star} )  \left( A^{\star} \right)^{T}  \left( A^{\prime} \right)^{T} 
\bigl(  A^{\prime} A^{\star} B ( \TAO^{\star} )  \left( A^{\star} \right)^{T}  \left( A^{\prime} \right)^{T}  \bigr)^{-1}
A^{\prime} \; \; A^{\star} \; G^{T} ( \TAO^{\star} )
\end{align}
Define $ \PAIRD^{\star} = \bigl( \DIMT^{\star}, \TAO^{\star}  \bigr) $
so that $ V ( \PAIRD^{\star} ) = G ( \TAO^{\star} ) B^{-1} ( \TAO^{\star} ) G^{T} ( \TAO^{\star} ) $.
Then, the Appendix Lemma \ref{lemma:RAYLEIGH} shows that $ V ( \PAIRD^{\star} ) \ge W ( \QUADQ^{\prime} ) $, so that
$ B_{2} \ge V ( \PAIRD^{\star} ) \ge  W ( \QUADQ^{\prime} ) $. Hence
$ B_{2} \ge W ( \QUADQ^{\prime} ) $, for all $ \QUADQ^{\prime} \in \CONDA $, so that $ \WA $ is bounded.
\end{proof}
For an arbitrary symmetric
matrix $ W $ call $ \lambda_{M} ( W ) \in \REALS $ its greatest eigenvalue. 
The operator norm $ \NORMOP{ A } $ of a matrix $ A \in \REALS^{N \times M} $ is its greatest singular value, \cite{BHATIA} p. 12, i.e. the non-negative square root of the greatest eigenvalue of the matrix $ A^{T} A $, so that $ \NORMOP{ A } = \bigl( \lambda_{M} ( A^{T} A ) \bigr)^{1/2} $.
For s.n.n.d. matrices singular values and eigenvalues coincide, \cite{ZHAN} p. 19, so that if $ W \in \COLLW $, then
$ \NORMOP{ W } = \lambda_{M} ( W ) $. Define a k-identity matrix as a matrix of the form $ K \; I_{\DIMMAT} $ where $ K \in \REALS$ and $ I_{\DIMMAT} $ is the identity matrix 
of dimensions $ \DIMMAT \times \DIMMAT $.
Then, we have
\begin{LEMMA}    \label{lemma:K-ID}
If $ X $ is a s.n.n.d. matrix, $ X \in \REALS^{\DIMMAT \times \DIMMAT} $,
then,
for $ K \in \REALS $, we have
$ K \; I_{\DIMMAT} \ge X $ if and only if
$ K \ge  \lambda_{M} ( X ) $.
\end{LEMMA}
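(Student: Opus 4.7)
The plan is to reduce to the diagonal case via the spectral theorem. Since $X$ is real symmetric s.n.n.d., the spectral theorem provides an orthogonal matrix $U \in \REALS^{\DIMMAT \times \DIMMAT}$ (so $U^T U = U U^T = I_{\DIMMAT}$) and a diagonal matrix $\Lambda = \DIAG(\lambda_1,\ldots,\lambda_{\DIMMAT})$ whose diagonal entries are the eigenvalues of $X$, with $\lambda_M(X) = \max_i \lambda_i$, such that $X = U \Lambda U^T$. Then $K\,I_{\DIMMAT} - X = U(K\,I_{\DIMMAT} - \Lambda)U^T$, since $K\,I_{\DIMMAT} = K U U^T$.

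The key step is to observe that congruence by the invertible orthogonal matrix $U$ preserves the property of being s.n.n.d.: for arbitrary $y \in \REALS^{\DIMMAT}$, setting $x = U^T y$, we have $y^T U (K\,I_{\DIMMAT} - \Lambda) U^T y = x^T (K\,I_{\DIMMAT} - \Lambda) x$, and as $y$ ranges over $\REALS^{\DIMMAT}$ so does $x$. Hence $K\,I_{\DIMMAT} - X \ge 0$ in the L\"owner order if and only if $K\,I_{\DIMMAT} - \Lambda$ is s.n.n.d. Since $K\,I_{\DIMMAT} - \Lambda$ is diagonal with entries $K - \lambda_i$, this in turn is equivalent to $K - \lambda_i \ge 0$ for all $i$, equivalently $K \ge \max_i \lambda_i = \lambda_M(X)$, completing the proof.

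As an alternative, one can avoid spectral decomposition entirely by invoking the Rayleigh--Ritz characterization $\lambda_M(X) = \sup_{x \ne 0}\, (x^T X x)/(x^T x)$ together with the definition of the L\"owner order: $K\,I_{\DIMMAT} - X \ge 0$ iff $x^T(K\,I_{\DIMMAT} - X) x \ge 0$ for all $x \in \REALS^{\DIMMAT}$, iff $K \, x^T x \ge x^T X x$ for all $x$, iff $K \ge (x^T X x)/(x^T x)$ for all $x \ne 0$, iff $K \ge \lambda_M(X)$. There is no real obstacle here; the only care required is to note that the symmetry (hence diagonalizability by an orthogonal matrix) of $X$ is the hypothesis that makes $\lambda_M(X)$ well defined as a real number and links the operator-norm/eigenvalue description to the quadratic form description of the L\"owner order.
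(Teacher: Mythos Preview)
Your proposal is correct and follows essentially the same approach as the paper: both diagonalize $X$ via an orthogonal matrix and reduce the L\"owner comparison $K\,I_{\DIMMAT}\ge X$ to the diagonal case $K\,I_{\DIMMAT}\ge\Lambda$, using that orthogonal congruence preserves semidefiniteness. The paper splits the two implications explicitly, whereas you handle both directions at once via the equivalence; your Rayleigh--Ritz alternative is a minor variant that the paper does not mention, but it is equally valid.
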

\begin{proof}
Since $ X $
is symmetric, then it is diagonalizable, so that there exist an orthogonal  matrix $ Q \in \REALS^{ \DIMMAT \times \DIMMAT } $, and a diagonal matrix
$ \LMATL \in \REALS^{ \DIMMAT \times \DIMMAT } $, such that 
$ X  = Q \; \LMATL \; Q^{T} $. 
Then, since $ \lambda_{M} ( X ) \; I_{\DIMMAT} \; \ge \LMATL $, we have 
$ \lambda_{M} ( X ) \; I_{\DIMMAT} = Q \; \lambda_{M} ( X ) \; I_{\DIMMAT} \; Q^{T} \ge Q \; \LMATL \; Q^{T} = X 
$. Hence, if $ K \ge  \lambda_{M} ( X ) $, then $ K \; I_{\DIMMAT} \ge \lambda_{M} ( X ) \; I_{\DIMMAT}  \ge X $.
Conversely if $ K \; I_{\DIMMAT} \ge X $, then $ K \; I_{\DIMMAT} \ge  Q \; \LMATL \; Q^{T}$, so that
$ Q^{T} K \; I_{\DIMMAT} Q \ge  \LMATL $, and since $ Q^{T} K \; I_{\DIMMAT} \; Q = K \; I_{\DIMMAT} $, then $ K \; I_{\DIMMAT} \ge  \LMATL $.
Hence $ K \ge \lambda_{M} ( X ) $.
\end{proof}
\begin{LEMMA}    \label{lemma:K-BOUNDED}
A non-empty collection $ \COLLW $ of s.n.n.d. matrices $ W \in \REALS^{\DIMMAT \times \DIMMAT} $ is upper bounded if and only if there exists $ K_{\COLLW} \in \REALS $, such that $ K_{\COLLW} \; I_{\DIMMAT} \ge W $, $\forall W \in \COLLW $.
\end{LEMMA}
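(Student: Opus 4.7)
The plan is to reduce both implications to Lemma \ref{lemma:K-ID}, which converts scalar-multiple-of-identity bounds into bounds on the largest eigenvalue. The only non-trivial input will be the Rayleigh quotient characterization of the largest eigenvalue of a symmetric matrix. In outline: the forward direction is almost immediate from the definition, while the reverse direction extracts the required scalar $K_{\COLLW}$ as the largest eigenvalue of a given L\"owner upper bound.

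For the ``if'' direction, I would assume there exists $K_{\COLLW} \in \REALS$ with $K_{\COLLW}\, I_{\DIMMAT} \ge W$ for all $W \in \COLLW$. Since $\COLLW \neq \emptyset$ consists of s.n.n.d. matrices, pick any $W^{*} \in \COLLW$; then $\lambda_{M}(W^{*}) \ge 0$, and Lemma \ref{lemma:K-ID} forces $K_{\COLLW} \ge \lambda_{M}(W^{*}) \ge 0$. Hence $K_{\COLLW}\, I_{\DIMMAT}$ is itself s.n.n.d., and by assumption it dominates every $W \in \COLLW$ in the L\"owner order, so it is a valid upper bound and $\COLLW$ is upper bounded in the sense of Definition \ref{def:MSUP}.

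For the ``only if'' direction, assume $\COLLW$ admits some s.n.n.d. upper bound $B \in \REALS^{\DIMMAT \times \DIMMAT}$, i.e.\ $B \ge W$ for every $W \in \COLLW$. For arbitrary $W \in \COLLW$ the inequality $B - W \ge 0$ gives $x^{T} B\, x \ge x^{T} W x$ for every $x \in \REALS^{\DIMMAT}$. Taking the supremum over unit vectors $x$ and invoking the Rayleigh quotient characterization of the largest eigenvalue of a symmetric matrix yields $\lambda_{M}(B) \ge \lambda_{M}(W)$. I would then set $K_{\COLLW} = \lambda_{M}(B) \in \REALS$; Lemma \ref{lemma:K-ID}, applied to each $W \in \COLLW$ separately, gives $K_{\COLLW}\, I_{\DIMMAT} \ge W$ for every $W \in \COLLW$, completing the proof.

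No step is a genuine obstacle: both directions collapse to Lemma \ref{lemma:K-ID} combined with the Rayleigh quotient inequality, and monotonicity of $\lambda_{M}$ with respect to the L\"owner order is precisely what makes a uniform scalar $K_{\COLLW}$ available from any abstract upper bound $B$.
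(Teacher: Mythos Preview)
Your proof is correct and follows essentially the same approach as the paper, reducing both directions to Lemma \ref{lemma:K-ID}. The paper's ``only if'' direction is marginally more direct: it applies Lemma \ref{lemma:K-ID} once to the bound $B_{\COLLW}$ to obtain $\lambda_{M}(B_{\COLLW})\,I_{\DIMMAT} \ge B_{\COLLW}$ and then uses transitivity of the L\"owner order, thereby bypassing your Rayleigh-quotient comparison of $\lambda_{M}(B)$ with each $\lambda_{M}(W)$.
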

\begin{proof}
If there exists $ K_{\COLLW} \in \REALS $, such that $ K_{\COLLW} \; I_{\DIMMAT} \ge W $, $\forall W \in \COLLW $,
then by definition $ K_{\COLLW} \; I_{\DIMMAT} $ is a matrix bound for $ \COLLW $, and then $ \COLLW $ is bounded.
Conversely, assume $ \COLLW $ is bounded.
Then there exists 
$ B_{\COLLW} \in \REALS^{ \DIMMAT \times \DIMMAT } $ s.n.n.d., such that $ B_{\COLLW} \ge W $, $\forall W \in \COLLW $. 
Since $ B_{\COLLW} $
is symmetric, then there exists the real maximum eigenvalue $ \lambda_{M} ( B_{\COLLW} ) \in \REALS $.
Take $ K_{\COLLW} \in \REALS$ such that $ K_{\COLLW} \ge \lambda_{M} ( B_{\COLLW} ) $. Then, from Lemma \ref{lemma:K-ID},
$ K_{\COLLW} \; I_{\DIMMAT} \ge B_{\COLLW} \ge W $, $\forall W \in \COLLW $.
\end{proof}
\subsection{Main Theorem}            \label{subsec:MAIN}
Collecting all the previous results, we have 
\begin{THEOREM}[Main Theorem]   \label{theo:MAIN}
The following statements for Problem \ref{prob:MAIN} are equivalent, meaning that if any one of them is true, then they are all true:
\begin{enumerate}
\item    \label{it:MAIN-1}
A finite covariance vector unbiased estimator exists, i.e. $ \UG $ is not empty.
\item    \label{it:MAIN-2}
A Barankin-efficient vector unbiased estimator exists.
\item    \label{it:MAIN-3}
The HRBB condition holds.
\item    \label{it:MAIN-4}
The collection $ \WA $ is bounded.
\item    \label{it:MAIN-5}
There exists $ K \in \REALS^{+} $ such that $ K \; \IDP \ge W $, $ \forall W \in \WA $.
\item    \label{it:MAIN-6}
The collection $ \WB $ is bounded, and $ \GT $ is $ \THREEBETA $-compatible.
\item    \label{it:MAIN-7}
There exists $ K \in \REALS^{+} $ such that $ K \; \IDP \ge W $, $ \forall W \in \WB $,  and $ \GT $ is $ \THREEBETA $-compatible.
\end{enumerate}
\end{THEOREM}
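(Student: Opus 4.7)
The plan is to prove the Main Theorem by assembling a cycle of implications using results already established in the excerpt. The theorem is essentially a unifying restatement of the work done in Sections \ref{sec:MB} through \ref{sec:OLMI}, so almost no new argument is needed; the task is to arrange the prior results in a closed chain.

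I would pivot around the equivalence (\ref{it:MAIN-3}) $\iff$ (\ref{it:MAIN-4}), which is Theorem \ref{theo:EQUIV}, and close the cycle (\ref{it:MAIN-1}) $\implies$ (\ref{it:MAIN-4}) $\implies$ (\ref{it:MAIN-3}) $\implies$ (\ref{it:MAIN-2}) $\implies$ (\ref{it:MAIN-1}). For (\ref{it:MAIN-1}) $\implies$ (\ref{it:MAIN-4}): if $ \PSI \in \UG $, then by Theorem \ref{theo:ONE} the covariance $ \COVTRUE ( \PSI ) $ bounds every $ W ( \QUADQ ) \in \WA $ above in the L\"owner order, so $ \WA $ is bounded; this is exactly Lemma \ref{lemma:DIRECT}. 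For (\ref{it:MAIN-4}) $\implies$ (\ref{it:MAIN-3}), I invoke Lemma \ref{lemma:BOUND-HRBB}. For (\ref{it:MAIN-3}) $\implies$ (\ref{it:MAIN-2}), I invoke Theorem \ref{theo:HRBB-OPT}, which constructs $ \PSIHC $ and shows it is Barankin-efficient, hence an element of $ \UG $ attaining a matrix-minimum covariance. Finally, (\ref{it:MAIN-2}) $\implies$ (\ref{it:MAIN-1}) is immediate from Definition \ref{def:BKEFF}, since a Barankin-efficient estimator is by definition a member of $ \UG $.

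For the auxiliary LMI reformulations, (\ref{it:MAIN-4}) $\iff$ (\ref{it:MAIN-5}) follows from Lemma \ref{lemma:K-BOUNDED} applied to the collection $ \WA $ of s.n.n.d. matrices of dimensions $ \DIMPAR \times \DIMPAR $; one direction is immediate since a k-identity matrix is a particular bound, and the other uses the maximum eigenvalue argument of Lemma \ref{lemma:K-ID}. The equivalence (\ref{it:MAIN-4}) $\iff$ (\ref{it:MAIN-6}) is exactly Theorem \ref{theo:W-EQUIV}, which requires the $ \THREEBETA $-compatibility hypothesis to be explicit because the $A$-matrices are no longer available to absorb linear dependencies among components of $ \VECB ( \TAO ) $. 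Finally, (\ref{it:MAIN-6}) $\iff$ (\ref{it:MAIN-7}) follows from Lemma \ref{lemma:K-BOUNDED} applied to $ \WB $, while preserving the $ \THREEBETA $-compatibility condition untouched. The only mild bookkeeping point is that Lemma \ref{lemma:K-BOUNDED} yields $ K \in \REALS $, whereas (\ref{it:MAIN-5}) and (\ref{it:MAIN-7}) demand $ K \in \REALS^{+} $; this is harmless because the matrices in $ \WA $ and $ \WB $ are s.n.n.d., so their maximum eigenvalues are non-negative, and any valid $K$ may be replaced by $ \max(K, 1) > 0 $ without losing the bound.

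There is essentially no main obstacle: all the heavy lifting (the generalized Riesz representation in Theorem \ref{theo:OP-HRBB}, the construction of the HRBB estimator in Lemma \ref{lemma:HRBB-EST}, the matrix bound in Theorem \ref{theo:ONE}, the equivalence of $ \WA $ and $ \WB $ boundedness in Theorem \ref{theo:W-EQUIV}) has been completed in the earlier sections. The one place where I would be careful is to make sure the cycle is closed rather than merely producing a collection of one-way implications; by routing everything through the pivot (\ref{it:MAIN-3}) $\iff$ (\ref{it:MAIN-4}) and by verifying (\ref{it:MAIN-2}) $\implies$ (\ref{it:MAIN-1}) explicitly from Definition \ref{def:BKEFF}, the closure is secured. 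The resulting proof is therefore a short assembly of citations to prior results, whose purpose is primarily expository: to collect into one statement the seven equivalent characterizations of the existence of a Barankin-efficient vector unbiased estimator.
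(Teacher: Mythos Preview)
Your proposal is correct and follows essentially the same approach as the paper's own proof: the paper likewise closes the cycle via \ref{it:MAIN-3}) $\Rightarrow$ \ref{it:MAIN-2}) $\Rightarrow$ \ref{it:MAIN-1}) from Theorem \ref{theo:HRBB-OPT}, \ref{it:MAIN-3}) $\Leftrightarrow$ \ref{it:MAIN-4}) from Theorem \ref{theo:EQUIV}, \ref{it:MAIN-1}) $\Rightarrow$ \ref{it:MAIN-4}) from Lemma \ref{lemma:DIRECT}, \ref{it:MAIN-4}) $\Leftrightarrow$ \ref{it:MAIN-5}) and \ref{it:MAIN-6}) $\Leftrightarrow$ \ref{it:MAIN-7}) from Lemma \ref{lemma:K-BOUNDED}, and \ref{it:MAIN-4}) $\Leftrightarrow$ \ref{it:MAIN-6}) from Theorem \ref{theo:W-EQUIV}. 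Your additional remark about upgrading $K \in \REALS$ to $K \in \REALS^{+}$ is a harmless refinement the paper leaves implicit.
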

\begin{proof}
\ref{it:MAIN-3}) $ \Rightarrow $ \ref{it:MAIN-2}) $ \Rightarrow $ \ref{it:MAIN-1}) follows from Theorem \ref{theo:HRBB-OPT}, 
\ref{it:MAIN-3}) $ \Leftrightarrow $ \ref{it:MAIN-4}) from Theorem \ref{theo:EQUIV},
\ref{it:MAIN-1}) $ \Rightarrow $ \ref{it:MAIN-4}) from Lemma \ref{lemma:DIRECT}, 
\ref{it:MAIN-4}) $ \Leftrightarrow $ \ref{it:MAIN-5})
and \ref{it:MAIN-6}) $ \Leftrightarrow $ \ref{it:MAIN-7}) from Lemma \ref{lemma:K-BOUNDED}, 
finally \ref{it:MAIN-4}) $ \Leftrightarrow $ 6) follows from Theorem \ref{theo:W-EQUIV}.
\end{proof}
\begin{COROLLARY}
As a corollary, the collection $ \UG $ is empty iff $ \WA $ is not bounded, i.e. for each $ k \in \NAT $ there exists $ W^{\star}_{k} \in \WA $
such that $ \NORMFROB { W^{\star}_{k} } \ge \NORMOP { W^{\star}_{k} } = \lambda_{M} ( W^{\star}_{k} )  > k $, so that $ \lim_{k \to +\infty} \NORMFROB { W^{\star}_{k} } = +\infty $.
Note that $ \UG $ is empty either because there are no unbiased estimators for $ \GT $, $ \forall \INDX \in \SINDX $,
or if there exist, they don't have finite finite covariance matrix at $ \INDXTRUE $, see Definition \ref{def:COLLBOUNDA}.
\end{COROLLARY}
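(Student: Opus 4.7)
The first equivalence, $\UG = \emptyset$ iff $\WA$ is not bounded, is just the contrapositive of the equivalence \ref{it:MAIN-1}) $\Leftrightarrow$ \ref{it:MAIN-4}) in Theorem \ref{theo:MAIN}, so no extra work is needed there. The plan for the remainder is to translate ``$\WA$ unbounded'' into the quantitative statement about eigenvalues and Frobenius norms using the scalar characterization of matrix boundedness already developed in Lemmas \ref{lemma:K-ID} and \ref{lemma:K-BOUNDED}.

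First I would invoke Lemma \ref{lemma:K-BOUNDED}: $\WA$ (a non-empty collection of s.n.n.d. matrices in $\REALS^{\DIMPAR \times \DIMPAR}$) is bounded if and only if there exists $K \in \REALS$ such that $K \, \IDP \ge W$ for all $W \in \WA$. By Lemma \ref{lemma:K-ID}, this is in turn equivalent to $K \ge \lambda_M(W)$ for every $W \in \WA$. Consequently, $\WA$ fails to be bounded precisely when the real set $\{\lambda_M(W) : W \in \WA\}$ is unbounded above, which means that for each $k \in \NAT$ we can select $W^{\star}_k \in \WA$ with $\lambda_M(W^{\star}_k) > k$.

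Next I would justify the chain $\NORMFROB{W^{\star}_k} \ge \NORMOP{W^{\star}_k} = \lambda_M(W^{\star}_k)$. Since $W^{\star}_k$ is s.n.n.d., its singular values coincide with its eigenvalues, so $\NORMOP{W^{\star}_k} = \lambda_M(W^{\star}_k)$. Diagonalizing $W^{\star}_k$ via an orthogonal change of basis, $\NORMFROB{W^{\star}_k}^{2} = \sum_{i=1}^{\DIMPAR} \lambda_i(W^{\star}_k)^{2} \ge \lambda_M(W^{\star}_k)^{2}$, which gives $\NORMFROB{W^{\star}_k} \ge \NORMOP{W^{\star}_k}$. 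Combining with the previous step yields $\NORMFROB{W^{\star}_k} > k$, and hence $\lim_{k \to +\infty} \NORMFROB{W^{\star}_k} = +\infty$.

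Finally, the closing remark about the two ways $\UG$ can be empty is not a logical step but a clarification: by Definition \ref{def:COLLBOUNDA}, $\UG$ consists of those unbiased estimators that also have finite covariance at $\INDXTRUE$, so $\UG = \emptyset$ means either no unbiased estimator for $\GT$ (for all $\INDX \in \SINDX$) exists, or every such unbiased estimator fails to have finite covariance at $\INDXTRUE$. I would simply state this. None of the steps involve any real obstacle; the only subtlety worth flagging is remembering to reduce the matrix inequality to its scalar eigenvalue form via Lemmas \ref{lemma:K-ID} and \ref{lemma:K-BOUNDED} rather than arguing directly from the definition of boundedness.
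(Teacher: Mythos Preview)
Your proposal is correct and matches what the paper intends: the corollary is stated without an explicit proof, and your argument uses exactly the ingredients the paper has lined up for it, namely the contrapositive of \ref{it:MAIN-1}) $\Leftrightarrow$ \ref{it:MAIN-4}) in Theorem \ref{theo:MAIN}, Lemmas \ref{lemma:K-ID} and \ref{lemma:K-BOUNDED} to pass from matrix boundedness to the scalar condition on $\lambda_{M}$, and the already-noted identity $\NORMOP{W} = \lambda_{M}(W)$ for s.n.n.d.\ matrices together with the standard inequality $\NORMFROB{W} \ge \NORMOP{W}$.
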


\setcounter{section}{0} 
\renewcommand\thesection{\Alph{section}}
\section{APPENDIX}        \label{APPENDIX} 

\begin{LEMMA} \label{lemma:MATRICES}
Let $ \HILBERT $ be an arbitrary Hilbert space. Let the elements $ \UI \in \HILBERT  $, for $ 1 \le i \le \DIML < +\infty $, be orthonormal so that 
$ \NORMLTWO { \UI } = 1 $, for $ 1 \le i \le  \DIML $,
and $ \PRODH{ \UI, \UJ } = 0 $, for $ i \neq j $, $ 1 \le i,j \le  \DIML $, and then $ \PRODH{ \UI, \UJ } = \KRON_{i,j} $.
Assume that for each $ \UI $, for $ 1 \le i \le \DIML $, there exist sequences $ \bigl( s_{i}(m) \bigr)_{m \in \NAT} $, 
with $ {s}_{i}(m) \in \HILBERT $, $ \forall m \; \in \NAT $, for $ 1 \le i \le \DIML $,
such that $ \underset{m \to \infty}{\lim} \NORMH{ \UI -  {{s}}_{i}(m)  } = 0 $,
for $ 1 \le i \le  \DIML $.

Define 
$ S (m)  \in \REALS^{ \DIML \times \DIML }$, as a matrix with i-th, j-th element
$ \bigl[ S (m) \bigr]_{i,j} = \PRODH  { s_{i} ( m ) , \; s_{j} ( m ) } $, for $ 1 \le i, j \le \DIML$, $ \forall m \in \NAT $.
Then:
\begin{enumerate}
\item  \label{it:ONE}
$\NORMFROB { S (m) - \IDL } \to 0 $, as $ m \to \infty $, 
where $ \IDL $ is the identity matrix of dimensions $\DIML \times \DIML $.
\item   \label{it:TWO}
$ \DET  ( S (m) ) \to 1 $, and $ \NORMFROB { S^{-1} (m) - \IDL } \to 0 $, as $ m \to \infty $.
\item    \label{it:THREE}
$ \underset{m \to \infty}{\lim} \PRODH { \UI - {s}_{i}(m), \; {s}_{j}(m)   } = 0 $, for all $ 1 \le i,j \le \DIML $.
\end{enumerate}
\end{LEMMA}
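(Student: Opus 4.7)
The plan is to reduce all three items to entry-wise convergence of the matrix $S(m)$ to $\IDL$, which in turn follows from the bilinearity of the semi-inner product, the Cauchy–Schwarz inequality, and the elementary fact that a convergent sequence of norms is bounded. Throughout the proof I will use that $ \NORMH{ s_i(m) } \to \NORMH{ u_i } = 1 $ as $ m \to \infty $, which follows from the reverse triangle inequality $ \bigl| \NORMH{ s_i(m) } - \NORMH{ u_i } \bigr| \le \NORMH{ s_i(m) - u_i } \to 0 $, so in particular each $ \NORMH{ s_i(m) } $ is bounded uniformly in $m$ by some constant $ C_i $, and the finite family $\{ C_i \}_{i=1}^{\DIML}$ gives a uniform bound $C$.

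For item \ref{it:ONE}, I would write
$$ \bigl[ S(m) \bigr]_{i,j} - \KRON_{i,j} = \PRODH{ s_i(m), s_j(m) } - \PRODH{ u_i, u_j } = \PRODH{ s_i(m) - u_i, s_j(m) } + \PRODH{ u_i, s_j(m) - u_j }, $$
then apply Cauchy–Schwarz to each term, obtaining
$$ \bigl| \bigl[ S(m) \bigr]_{i,j} - \KRON_{i,j} \bigr| \le \NORMH{ s_i(m) - u_i } \; \NORMH{ s_j(m) } + \NORMH{ u_i } \; \NORMH{ s_j(m) - u_j }, $$
which tends to $0$ since both $\NORMH{ s_i(m) - u_i } \to 0$ and $\NORMH{ s_j(m) - u_j } \to 0$ while $\NORMH{ s_j(m) }$ stays bounded. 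Since $S(m) - \IDL$ is a finite $ \DIML \times \DIML $ matrix, entrywise convergence to zero gives Frobenius-norm convergence to zero.

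Item \ref{it:TWO} follows by continuity. The determinant is a polynomial in the entries of $S(m)$, so $ \DET(S(m)) \to \DET(\IDL) = 1 $; in particular $ \DET(S(m)) \ne 0 $ for all sufficiently large $m$, so $ S^{-1}(m) $ exists eventually. On the open set of invertible matrices, the map $A \mapsto A^{-1}$ is continuous in the Frobenius norm (entries of $A^{-1}$ are rational functions in the entries of $A$ with denominator $\DET(A)$), so $ \NORMFROB{ S^{-1}(m) - \IDL } \to 0 $.

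Item \ref{it:THREE} is the cleanest of the three: Cauchy–Schwarz yields
$$ \bigl| \PRODH{ u_i - s_i(m), \, s_j(m) } \bigr| \le \NORMH{ u_i - s_i(m) } \; \NORMH{ s_j(m) }, $$
and the right side tends to $0$ since $\NORMH{ u_i - s_i(m) } \to 0$ and $\NORMH{ s_j(m) }$ is bounded. No step here looks delicate; the only mild subtlety is in item \ref{it:TWO}, where one must explicitly note that invertibility holds eventually (for $ m \ge M_0 $) before speaking of $ S^{-1}(m) $, but this is immediate from the continuity of the determinant.
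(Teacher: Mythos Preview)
Your proof is correct and follows essentially the same strategy as the paper: entrywise convergence $S(m)\to\IDL$ via bilinearity and Cauchy--Schwarz, then continuity of the determinant and of matrix inversion for item~\ref{it:TWO}. The only cosmetic difference is in the decomposition used: the paper expands $\PRODH{s_i(m),s_j(m)}$ by adding and subtracting \emph{both} $u_i$ and $u_j$, obtaining four terms each bounded by a product of either $\NORMH{u_k - s_k(m)}$'s or the fixed value $\NORMH{u_k}=1$, whereas you add and subtract only $u_i$ and then invoke the boundedness of $\NORMH{s_j(m)}$ (established beforehand via the reverse triangle inequality). Your two-term telescoping is marginally more economical; the paper's four-term version avoids the separate boundedness step. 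For item~\ref{it:THREE} the same contrast appears: you apply Cauchy--Schwarz directly with the bounded factor $\NORMH{s_j(m)}$, while the paper again splits $s_j(m)=(s_j(m)-u_j)+u_j$. Neither approach offers a real advantage over the other.
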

\begin{proof}
a) From the Cauchy-Schwarz inequality
\ifundefined{DRAFT}
\else
we obtain
\fi
$ \bigl| \PRODH { \UI - {s}_{i}(m), \UJ } \bigr| 
\le
\NORMH{ \UI - {s}_{i}(m) }
$, so that $ \underset{m \to \infty}{\lim} \PRODH { \UI - {s}_{i}(m), \UJ } = 0 $, for all $ 1 \le i,j \le \DIML $.

b) Also $ \bigl| \PRODH { \UI - {s}_{i}(m), \UJ - {s}_{j}(m) } \bigr| 
\le
{ \NORMH{ \UI - {s}_{i}(m) } \; \NORMH{ \UJ - {s}_{j}(m) } } $,
and then
$ \underset{m \to \infty}{\lim} \PRODH { \UI - {s}_{i}(m), \UJ - {s}_{j}(m)  } = 0 $, for all $ 1 \le i,j \le \DIML $.

c) We have $ \PRODH { {s}_{i}(m), {s}_{j}(m)  } = 
\PRODH { {s}_{i}(m)- \UI + \UI , \; {s}_{j}(m) - \UJ + \UJ }
=
$\break
$
\PRODH { {s}_{i}(m)- \UI , \; {s}_{j}(m) - \UJ } +
\PRODH { \UI , \; {s}_{j}(m) - \UJ } +
\PRODH { {s}_{i}(m)- \UI, \; \UJ } +$\break
$
\PRODH { \UI , \;  \UJ } 
$.
Taking the limit, and using a) and b),
$ \underset{m \to \infty}{\lim} \PRODH { {s}_{i}(m), {s}_{j}(m)  } = \KRON_{i,j} $, for all $ 1 \le i,j \le \DIML $.
Then $ S(m) \to \IDL $ component by component, and then in Frobenius norm. 
Since the determinant of a matrix  
is an algebraic sum of a finite number of products of a finite number of elements of the matrix, see \cite{BIRKHOFF} p. 319, then $ \DET \left( S (m) \right) \to \DET ( \IDL ) = 1$, so that $ \exists M_{0} \in \NAT $,
such that $ \forall m \ge M_{0} $, it will be $ \DET \left( S (m) \right)  \ge 1/2 $, and then $ \DET \left( S (m) \right)  \neq 0 $.
Similarly, since the elements of the inverse of a matrix are the quotients of algebraic sums of a finite number of products of
a finite number of elements of the matrix divided
the determinant, see \cite{BIRKHOFF} p. 325, then $ S^{-1}(m) $ has a limit $ A_{0} $ component by component, 
so that $ S (m) \; S^{-1}(m) \to \IDL \; A_{0} $, but since $ S (m) \; S^{-1}(m) = \IDL $, $ \forall m \in  \NAT $, then $ A_{0} = \IDL $,  and then $ S^{-1}(m) \to \IDL $ component by component as $ m \to \infty $, and then in Frobenius norm, or any other matrix norm,
so that we have shown 
items \ref{it:ONE}) and \ref{it:TWO}).

d) We have $ \PRODH { \UI - {s}_{i}(m), \; {s}_{j}(m)  } =
\PRODH { \UI - {s}_{i}(m), \; {s}_{j}(m) - \UJ + \UJ } 
=
\PRODH { \UI - {s}_{i}(m), \; {s}_{j}(m) - \UJ } +
\PRODH { \UI - {s}_{i}(m), \; \UJ  }
$.
Applying a) and b) we obtain
$ \underset{m \to \infty}{\lim} \PRODH { \UI - {s}_{i}(m), \; {s}_{j}(m)   } = 0 $, for all $ 1 \le i,j \le \DIML $,
so that we have shown item \ref{it:THREE}).
\end{proof}
\begin{LEMMA}   \label{lemma:MATLIM}
Let $ \bigl(  A_{n} \bigr)_{n \in \NAT} $ be a sequence of s.n.n.d. matrices 
$ A_{n} \in \REALS^{N \times N} $, $ A_{n} \ge 0 $, $ \forall n \in N $, 
such that there exists $ A \in \REALS^{ N \times N } $ for which 
$ A_{n} \to A $ c.b.c and then in Frobenius norm. Then the matrix $ A $ is s.n.n.d.
\end{LEMMA}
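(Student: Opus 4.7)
The plan is to verify the two defining properties of a s.n.n.d.\ matrix directly from the definition, exploiting the fact that entrywise convergence is equivalent to convergence in Frobenius norm in finite dimensions, and that quadratic forms $\XVEC^{T} B \XVEC$ depend continuously on the entries of $B$.

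First I would establish symmetry of the limit. Each $A_{n}$ is symmetric, so $[A_{n}]_{i,j} = [A_{n}]_{j,i}$ for all $n \in \NAT$ and all $1 \le i,j \le N$. Taking the limit componentwise, which is permitted since $A_{n} \to A$ c.b.c., gives $[A]_{i,j} = [A]_{j,i}$, hence $A^{T} = A$. This step is essentially free.

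Second, I would fix an arbitrary $\XVEC \in \REALS^{N}$ and consider the scalar sequence $q_{n} = \XVEC^{T} A_{n} \XVEC = \sum_{i=1}^{N} \sum_{j=1}^{N} [\XVEC]_{i} \, [A_{n}]_{i,j} \, [\XVEC]_{j}$. Since $A_{n}$ is s.n.n.d.\ by hypothesis, $q_{n} \ge 0$ for every $n \in \NAT$. Because the sum is finite and each entry $[A_{n}]_{i,j}$ converges to $[A]_{i,j}$, we get $q_{n} \to \XVEC^{T} A \XVEC$. The limit of a convergent sequence of non-negative reals is non-negative, so $\XVEC^{T} A \XVEC \ge 0$. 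Since $\XVEC$ was arbitrary, $A$ is s.n.n.d.

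There is no serious obstacle; the lemma is a routine closedness statement. The only subtlety worth flagging is the equivalence of componentwise convergence and Frobenius-norm convergence in $\REALS^{N \times N}$, which is used implicitly to pass between the two modes of convergence given in the hypothesis, and the fact that both symmetry and non-negativity of quadratic forms are closed conditions under pointwise (hence Frobenius) limits because the relevant maps from $\REALS^{N \times N}$ to $\REALS$ are continuous finite algebraic expressions in the matrix entries.
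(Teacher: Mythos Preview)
Your proof is correct and follows essentially the same approach as the paper: fix an arbitrary vector, use that the quadratic form is a finite bilinear expression in the entries to pass to the limit, and conclude non-negativity from the non-negativity of each term in the sequence. The only difference is that you explicitly verify symmetry of the limit matrix, which the paper's proof omits; since the paper's definition of s.n.n.d.\ presupposes symmetry, your version is arguably more complete, but the extra step is immediate.
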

\begin{proof}
Take $ \ALFAVEC \in \REALS^{N} $ arbitrary, since $ N $ is finite, then  $ \lim_{n \to +\infty } \ALFAVEC^{T} A_{n} \ALFAVEC = \ALFAVEC^{T} A \ALFAVEC $.
But $ \ALFAVEC^{T} A_{n} \ALFAVEC \ge 0 $, $ \forall n \in \NAT $, so that $ \lim_{n \to +\infty } \ALFAVEC^{T} A_{n} \ALFAVEC \ge 0 $, 
and then $  \ALFAVEC^{T} A \ALFAVEC \ge 0 $, $ \forall \ALFAVEC \in \REALS^{N}$.
\end{proof}
The following lemma is a LMI weighted form of the Cauchy-Schwarz inequality for matrices,
\cite{CHIPMAN} p. 1093. For convenience, a proof is given here.
\begin{LEMMA}        \label{lemma:CAUCHY-SCHWARZ}
Let $ \DIMLM , \DIMLN, \DIMLP \in \NAT $.
Let
$ \LMATH \in \REALS^{ \DIMLM \times \DIMLM } $ be an arbitrary real s.p.d. matrix, and let $ \LMATX \in \REALS^{ \DIMLN \times \DIMLM } $
and $ \LMATY \in \REALS^{ \DIMLP \times \DIMLM } $,  be otherwise arbitrary real matrices such that
$ \DET ( \LMATY \LMATH \LMATY^{T} ) \neq 0 $.
Then 
\begin{displaymath}
\LMATX \; \LMATH \; \LMATX^{T} \; \ge 
\LMATX \; \LMATH \; \LMATY^{T} \; \bigl( \LMATY \; \LMATH \; \LMATY^{T} \; \bigr)^{-1} \LMATY \; \LMATH \; \LMATX^{T}
\end{displaymath}
with equality if and only if there exists $ \LMATL \in \REALS^{ \DIMLN \times \DIMLP } $, such that $ \LMATX = \LMATL \; \LMATY $,
if and only if
\begin{displaymath}
\LMATX = \LMATX \;  \LMATH \;  \LMATY^{T} \; \bigl( \LMATY \; \LMATH \; \LMATY^{T} \; \bigr)^{-1} \LMATY \; 
\end{displaymath}
\end{LEMMA}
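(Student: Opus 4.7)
The plan is to mimic the completing-the-square strategy used earlier in Lemma \ref{lemma:SCHUR}, but now with the s.p.d. matrix $\LMATH$ playing the role of the measure. For each $\LMATL \in \REALS^{\DIMLN \times \DIMLP}$, define
\[
  M(\LMATL) \;=\; (\LMATX - \LMATL \LMATY) \, \LMATH \, (\LMATX - \LMATL \LMATY)^{T}.
\]
Since $\LMATH$ is s.p.d. we may write $\LMATH = \LMATH^{1/2} \LMATH^{1/2}$ with $\LMATH^{1/2}$ s.p.d. and invertible, and then $M(\LMATL) = Z Z^{T}$ with $Z = (\LMATX - \LMATL \LMATY)\LMATH^{1/2}$, so $M(\LMATL)$ is s.n.n.d. for every $\LMATL$. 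Expanding yields
\[
  M(\LMATL) = \LMATX \LMATH \LMATX^{T} - \LMATX \LMATH \LMATY^{T} \LMATL^{T} - \LMATL \LMATY \LMATH \LMATX^{T} + \LMATL \LMATY \LMATH \LMATY^{T} \LMATL^{T}.
\]

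Second, because $\DET (\LMATY \LMATH \LMATY^{T}) \neq 0$, I can substitute the distinguished choice $\LMATL_{0} = \LMATX \LMATH \LMATY^{T} (\LMATY \LMATH \LMATY^{T})^{-1}$. The two cross terms combine with the last term so that, after cancellation,
\[
  M(\LMATL_{0}) = \LMATX \LMATH \LMATX^{T} - \LMATX \LMATH \LMATY^{T} \bigl( \LMATY \LMATH \LMATY^{T} \bigr)^{-1} \LMATY \LMATH \LMATX^{T} \;\ge\; 0,
\]
which is exactly the inequality claimed.

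Third, for the equality conditions I show the cyclic chain (a) equality in the LMI $\Rightarrow$ (b) $\LMATX = \LMATL \LMATY$ for some $\LMATL$ $\Rightarrow$ (c) $\LMATX = \LMATX \LMATH \LMATY^{T} (\LMATY \LMATH \LMATY^{T})^{-1} \LMATY$ $\Rightarrow$ (a). The implication (b) $\Rightarrow$ (c) is immediate: if $\LMATX = \LMATL \LMATY$, right-multiplying by $\LMATH \LMATY^{T}$ gives $\LMATL = \LMATX \LMATH \LMATY^{T}(\LMATY \LMATH \LMATY^{T})^{-1} = \LMATL_{0}$, and then $\LMATL_{0} \LMATY = \LMATX$. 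The implication (c) $\Rightarrow$ (a) is direct substitution into $M(\LMATL_{0})$, which becomes $0$. The nontrivial direction is (a) $\Rightarrow$ (b): equality in the LMI means $M(\LMATL_{0}) = 0$, i.e. $Z Z^{T} = 0$ with $Z = (\LMATX - \LMATL_{0} \LMATY)\LMATH^{1/2}$, which forces $Z = 0$; since $\LMATH^{1/2}$ is invertible we conclude $\LMATX - \LMATL_{0} \LMATY = 0$, so (b) holds with $\LMATL = \LMATL_{0}$.

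The main obstacle is the (a) $\Rightarrow$ (b) direction, because passing from $M(\LMATL_{0}) = 0$ (an inequality-equality in the L\"owner order) to the actual matrix identity $\LMATX = \LMATL_{0} \LMATY$ requires exploiting that $\LMATH$ is s.p.d. rather than merely s.n.n.d.; this is the same cancellation principle invoked in Lemma \ref{lemma:SCHUR} and noted at the start of Section \ref{sec:MB} (if $A$ is s.p.d. and $S^{T} A S = 0$ then $S = 0$, cf. \cite{DMC-1} Lemma 2). Everything else amounts to bookkeeping expansion of the quadratic form $M(\LMATL)$.
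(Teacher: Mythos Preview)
Your proof is correct and follows essentially the same completing-the-square strategy as the paper: both define the s.n.n.d.\ quadratic form $(\LMATX-\LMATL\LMATY)\LMATH(\LMATX-\LMATL\LMATY)^{T}$, evaluate it at $\LMATL_{0}=\LMATX\LMATH\LMATY^{T}(\LMATY\LMATH\LMATY^{T})^{-1}$ to obtain the inequality, and then use the strict positive definiteness of $\LMATH$ to extract $\LMATX=\LMATL_{0}\LMATY$ in the equality case. The only cosmetic difference is that the paper writes out the full decomposition $T(\LMATL)=R(\LMATL)+D$ with $R(\LMATL)=(\LMATL-\LMATL_{0})\LMATY\LMATH\LMATY^{T}(\LMATL-\LMATL_{0})^{T}$, whereas you substitute $\LMATL_{0}$ directly and handle the equality directions via your cyclic chain; both routes establish the same uniqueness $\LMATL=\LMATL_{0}$.
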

\begin{proof}
Let $ \LMATL \in \REALS^{ \DIMLN \times \DIMLP } $.
Define $ \LMATT ( \LMATL ) = \left( \LMATX - \LMATL \; \LMATY \right) \; \LMATH \; \left( \LMATX - \LMATL \; \LMATY \right)^{T} $, so that
$ \LMATT ( \LMATL ) $ is s.n.n.d., $ \forall  \LMATL \in \REALS^{ \DIMLN \times \DIMLP } $.
Define 
\begin{equation}
\nonumber
\LMATR ( \LMATL ) 
= 
\Bigl( \LMATL - \LMATX \; \LMATH \; \LMATY^{T} \left( \LMATY \;\LMATH \;\LMATY^{T} \right)^{-1} \Bigr)
 \; \LMATY \;\LMATH \;\LMATY^{T}
  \Bigl( \LMATL - \LMATX \; \LMATH \; \LMATY^{T} \left( \LMATY \;\LMATH \;\LMATY^{T}\right)^{-1} \Bigr)^{T} 
\end{equation}
and $ \LMATD = \LMATX \; \LMATH \; \LMATX^{T} -
\LMATX \; \LMATH \; \LMATY^{T} \left( \LMATY \;\LMATH \;\LMATY^{T}\right)^{-1} \LMATY \; \LMATH \; \LMATX^{T}$.
Note that $ \LMATR ( \LMATL ) \ge 0 $, $ \forall  \LMATL \in \REALS^{ \DIMLN \times \DIMLP } $.
Then $ \LMATT ( \LMATL ) = \LMATR ( \LMATL ) + \LMATD \ge 0 $, $ \forall  \LMATL \in \REALS^{ \DIMLN \times \DIMLP } $.
For $ \LMATL_{0} = \LMATX \; \LMATH \; \LMATY^{T} \left( \LMATY \;\LMATH \;\LMATY^{T}\right)^{-1} $, we have $ \LMATR ( \LMATL_{0} ) = 0 $,
so that $ \LMATT ( \LMATL_{0} ) = D \ge 0 $, and then the LMI is obtained.
If there is equality then $ D = 0 $, and then $ \LMATT ( \LMATL ) =  \LMATR ( \LMATL ) $, $ \forall  \LMATL \in \REALS^{ \DIMLN \times \DIMLP } $.
In particular for $ \LMATL_{0} $ we have $ \LMATR ( \LMATL_{0} ) = 0 $, and then $ \LMATT ( \LMATL_{0} ) = 0 $. But, since $ \LMATH $ is s.p.d.
then $ \LMATX = \LMATL_{0} \; \LMATY = \LMATX \; \LMATH \; \LMATY^{T} \left( \LMATY \;\LMATH \;\LMATY^{T}\right)^{-1} \LMATY $. 
As for the converse, if there exists $ \LMATL_{1} $ such that $ \LMATX = \LMATL_{1} \; \LMATY$, then
$  \LMATT ( \LMATL_{1} ) = 0 $, since  $  \LMATR ( \LMATL_{1} ) \ge 0 $ by definition, and $ D \ge 0 $ as previously shown, then,  $  \LMATR ( \LMATL_{1} ) = 0 $ and $ D = 0 $, because $  \LMATT ( \LMATL_{1} ) =  \LMATR ( \LMATL_{1} )  + D $. 
From  $ D = 0 $ we obtain the equality in the LMI inequality,
and from $  \LMATR ( \LMATL_{1} ) = 0 $, we obtain that $ \LMATL_{1} = \LMATL_{0} $, because $ \DET \left( \LMATY \;\LMATH \;\LMATY^{T} \right) \neq 0 $
and then $ \LMATY \;\LMATH \;\LMATY^{T} $ is s.p.d.
If $ \LMATX = \LMATX \;  \LMATH \;  \LMATY^{T} \; \bigl( \LMATY \; \LMATH \; \LMATY^{T} \; \bigr)^{-1} \LMATY $, multiply both sides on the right by $ \LMATH \; \LMATX^{T}$, and then the equality for the LMI is obtained.
\end{proof}
The following lemma, cf. \cite{CHIPMAN-2011} Lemma 2.4.1, may be interpreted as a LMI generalization of the Rayleigh quotient, \cite{DUDA} p. 117.
\begin{LEMMA}  \label{lemma:RAYLEIGH}
Let $ \DIMLM , \DIMLN, \DIMLP \in \NAT $.
Let
$ \LMATB \in \REALS^{ \DIMLM \times \DIMLM } $ be an arbitrary real s.p.d. matrix, and let $ \LMATG \in \REALS^{ \DIMLN \times \DIMLM } $
and $ \LMATA \in \REALS^{ \DIMLP \times \DIMLM } $,  be otherwise arbitrary real matrices such that
$ \DET ( \LMATA \LMATB \LMATA^{T} ) \neq 0 $.
Then 
\begin{displaymath}
\LMATG \; \LMATB^{-1} \; \LMATG^{T} \; \ge 
\LMATG \; \LMATA^{T} \; \bigl( \LMATA \; \LMATB \; \LMATA^{T} \; \bigr)^{-1} \LMATA \; \LMATG^{T}
\end{displaymath}
with equality if and only if there exists $ \LMATL_{0} \in \REALS^{ \DIMLN \times \DIMLP } $, such that $ \LMATG = \LMATL_{0} \; \LMATA \; \LMATB $,
if and only if
\begin{displaymath}
\LMATG = \LMATG \; \LMATA^{T} \; \bigl( \LMATA \; \LMATB \; \LMATA^{T} \; \bigr)^{-1} \LMATA \; \LMATB \; 
\end{displaymath}
\end{LEMMA}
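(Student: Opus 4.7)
The plan is to deduce Lemma \ref{lemma:RAYLEIGH} as an immediate corollary of Lemma \ref{lemma:CAUCHY-SCHWARZ} via a change of variables. Since $\LMATB$ is s.p.d., so is $\LMATB^{-1}$, so I would apply Lemma \ref{lemma:CAUCHY-SCHWARZ} with the substitution $\LMATH := \LMATB^{-1}$, $\LMATX := \LMATG$, and $\LMATY := \LMATA \, \LMATB$. The dimensions align: $\LMATH \in \REALS^{\DIMLM \times \DIMLM}$, $\LMATX \in \REALS^{\DIMLN \times \DIMLM}$, and $\LMATY \in \REALS^{\DIMLP \times \DIMLM}$, as required by Lemma \ref{lemma:CAUCHY-SCHWARZ}.

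First I would check the hypothesis of Lemma \ref{lemma:CAUCHY-SCHWARZ}: the product collapses to $\LMATY \LMATH \LMATY^T = \LMATA \, \LMATB \, \LMATB^{-1} \, \LMATB \, \LMATA^T = \LMATA \, \LMATB \, \LMATA^T$, which is invertible by the standing assumption. Then I would read off both sides of the inequality. The left-hand side becomes $\LMATX \LMATH \LMATX^T = \LMATG \LMATB^{-1} \LMATG^T$, matching the claimed LHS. For the right-hand side, using $\LMATB \LMATB^{-1} = \LMATB^{-1} \LMATB = I_{\DIMLM}$, one obtains
\begin{displaymath}
\LMATX \LMATH \LMATY^T \bigl( \LMATY \LMATH \LMATY^T \bigr)^{-1} \LMATY \LMATH \LMATX^T
= \LMATG \LMATA^T \bigl( \LMATA \LMATB \LMATA^T \bigr)^{-1} \LMATA \LMATG^T,
\end{displaymath}
which is precisely the claimed RHS. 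The desired LMI then follows directly from Lemma \ref{lemma:CAUCHY-SCHWARZ}.

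For the two equality clauses, Lemma \ref{lemma:CAUCHY-SCHWARZ} gives equality iff there exists $\LMATL_0 \in \REALS^{\DIMLN \times \DIMLP}$ with $\LMATX = \LMATL_0 \LMATY$, which under the substitution reads $\LMATG = \LMATL_0 \, \LMATA \, \LMATB$, and equivalently iff $\LMATX = \LMATX \LMATH \LMATY^T (\LMATY \LMATH \LMATY^T)^{-1} \LMATY$, which becomes $\LMATG = \LMATG \LMATA^T (\LMATA \LMATB \LMATA^T)^{-1} \LMATA \, \LMATB$. Both are exactly the equivalent conditions stated in Lemma \ref{lemma:RAYLEIGH}. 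There is no real obstacle here; the entire argument is matrix bookkeeping and the cancellations $\LMATB \LMATB^{-1} = I_{\DIMLM}$, so no separate analytic or combinatorial work is required beyond invoking the previous lemma.
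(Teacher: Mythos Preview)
Your proof is correct and follows essentially the same approach as the paper: both reduce the statement to Lemma~\ref{lemma:CAUCHY-SCHWARZ} via a change of variables. The only cosmetic difference is the substitution chosen---the paper takes $\LMATH = I_{\DIMLM}$, $\LMATX = \LMATG\,\LMATB^{-1/2}$, $\LMATY = \LMATA\,\LMATB^{1/2}$ (invoking the s.p.d.\ square root of $\LMATB$), whereas your choice $\LMATH = \LMATB^{-1}$, $\LMATX = \LMATG$, $\LMATY = \LMATA\,\LMATB$ achieves the same cancellations without needing the square root.
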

\begin{proof}
Since $ \LMATB $ is s.p.d. then it has a unique s.p.d. square root $ \LMATB^{1/2} $, \cite{HORN} p. 405, and
we have $ \DET ( \LMATB ) \neq 0 $ and $ \DET ( \LMATB^{1/2 } ) \neq 0 $.
The result follows from the previous Lemma \ref{lemma:CAUCHY-SCHWARZ} taking, 
$ \LMATX = \LMATG \; \LMATB^{-1/2}$, $ \LMATY = \LMATA \; \LMATB^{1/2}$,
and $ H $ as the identity matrix of dimensions $ \DIMLM \times \DIMLM $. 
\end{proof}

\bibliographystyle{amsplain}

\end{document}